\documentclass[a4paper,11pt]{article}

\usepackage[top=3.5cm, bottom=3.5cm, left=2.5cm, right=2.5cm]{geometry}

\usepackage[utf8]{inputenc}
\usepackage[T1]{fontenc}
\usepackage[french,english]{babel} 
\usepackage{enumerate}
\usepackage{enumitem}
\usepackage{subcaption}
\usepackage{xcolor}
\usepackage[]{pdfpages}

\usepackage[normalem]{ulem}

\usepackage{titling}
\usepackage[colorlinks=false,linkcolor=blue]{hyperref}
\usepackage{csquotes}
\usepackage[nottoc, notlof, notlot, numbib]{tocbibind}
\usepackage[style=alphabetic,giveninits,doi=false,isbn=false,url=false,eprint=false,clearlang=true,sorting=nyt,maxbibnames=99]{biblatex}
\DeclareSourcemap{
      \maps[datatype=bibtex]{
      \map[overwrite=false]{
       \step[fieldsource=note]
       \step[fieldset=addendum, origfieldval, final]
       \step[fieldset=note, null]
       }
   }
   }

\usepackage{amsthm,amsfonts,amsmath,amssymb}
\usepackage{mathtools}
\usepackage{thmtools}
\usepackage{bbm}
\usepackage{mathrsfs}
\usepackage{float}
\usepackage{stmaryrd}
\usepackage{setspace}
\usepackage{array}
\usepackage{tabularx}
\usepackage{ltablex} 
\usepackage{comment} 

\usepackage{todonotes}
\mathtoolsset{showonlyrefs}

\newcommand{\R}{\mathbb{R}}

\newcommand{\E}{\mathbb{E}}
\newcommand{\ps}{\mathcal{P}} 

\renewcommand{\d}{\mathrm{d}}

\renewcommand{\P}{\mathbb{P}}

\renewcommand{\L}{\mathcal{L}}

\newcommand{\ndbar}{\overline{D}}

\newcommand{\dimin}{d_{\mathrm{in}}}
\newcommand{\dimout}{d_{\mathrm{out}}}

\newcommand{\diag}{\mathrm{diag}}
\newcommand{\loss}{\mathrm{loss}}

\newcommand{\hWin}[1]{\hat{W}^{\mathrm{in}#1}}
\newcommand{\hWout}[1]{\hat{W}^{\mathrm{out}#1}}

\newtheorem{theorem}{Theorem}[section]

\newtheorem{lemma}[theorem]{Lemma}

\newtheorem{assumption}{Assumption}

\theoremstyle{definition}

\newtheorem{example}[theorem]{Example}

\theoremstyle{remark}
\newtheorem{rem}{Remark}

\newcounter{paragraphe}

\title{Dropout and Random Gradient Masking Are Asymptotically Equivalent in Large ResNets}

\author{
Javier Maass\thanks{École Polytechnique Fédérale de Lausanne (EPFL),  Institute of Mathematics, 1015 Lausanne, Switzerland.},\quad
Lénaïc Chizat\thanks{Department of Mathematics,
National University of Singapore, Singapore.}
}

\date{\today}

\begin{document}

\maketitle

\begin{abstract}
Dropout and Random Gradient Masking (RaM) are two training techniques used to improve performance in deep learning. 
Both techniques inject  randomness into the training dynamics, but in significantly different ways: dropout applies random masks to the activations in the forward pass, whereas RaM leaves the forward pass unchanged and instead masks the gradients. In particular, the noise induced by RaM in the parameter updates is unbiased, so standard explanations for the effectiveness of dropout, such as the penalization effect or the prevention of co-adaptation between neurons, do not apply to RaM. In this work, we show that the difference between the two methods disappears for ResNets in the large depth and width asymptotics: in the complete feature learning regime, they both converge to the same large-scale limiting dynamics. This asymptotic equivalence holds for several variants of dropout and RaM, including layerwise dropout as used in stochastic-depth ResNets, albeit at slower quantitative rates. In fact, we also show that several of these variants collapse to the same limit asymptotically.
\end{abstract}

\section{Introduction}

Dropout is a deep learning technique, introduced
in~\cite{hinton2012improvingneuralnetworkspreventing}, which randomly deactivates 
subsets of units during training. It is easy to implement, largely architecture-agnostic,
and has been observed to improve test performance in a broad range of models, from
AlexNet~\cite{alexnet2012} to ResNets~\cite{he2015deepresiduallearningimage} and
Transformers~\cite{vaswani2017attention}. Several explanations have been proposed for
its effectiveness, including the prevention of co-adaptation between units
~\cite{hinton2012improvingneuralnetworkspreventing}, approximate model averaging
~\cite{baldi2013understanding}, penalization through the bias induced in the
gradient~\cite{mianjy2018implicit}, and implicit regularization induced by the variance
of the dropout noise~\cite{wei2020implicitexplicitregularizationeffects}.

\paragraph{Dropout in large-scale models.}
In order to disentangle these explanations,
~\cite{chizat2025phasediagramdropouttwolayer} proposed to decompose the effect of
dropout into three components: the \emph{propagation noise} induced in the forward and
backward passes; the \emph{penalization} coming from the bias induced by using the same
masks in the forward and backward passes; and the  \emph{random masking} of the
gradient update. For two-layer networks, they show that, depending on the joint scaling
of the width, dropout rate, and learning rate (LR), some or all of these effects
asymptotically disappear, leading to a rich phase diagram. In particular, in the standard
large-width limit with fixed dropout rate and fixed LR, they show that only the gradient
masking effect survives, making dropout equivalent to a \emph{Random Gradient Masking}
(RaM) method. Following this line of work, but moving toward practical architectures,
one may ask the following question:
\begin{center}
    \emph{What is the asymptotic behavior of dropout in large-scale ResNets?}
\end{center}

\paragraph{Contributions.}
In this paper, we show that, for ResNets, as for shallow neural networks, only the random
gradient masking effect persists in the large-scale regime. Specifically, we consider ResNets with
fixed dropout rate in the joint width-and-depth limit, and in the ``complete'' feature
learning regime~\cite{dey2026dontlazycompletepenables}, and show that the dropout and RaM training dynamics are asymptotically equivalent. Our framework covers a number
of dropout variants, characterized by their masking patterns, and we further show that
several of these variants collapse to the same dynamics. From a technical
perspective, we cannot rely on the mean-field viewpoint used
in~\cite{chizat2025phasediagramdropouttwolayer}, since it does not apply to practical
architectures beyond the two-layer case. Instead, our rigorous analysis is made possible
by the stochastic-approximation viewpoint on ResNet training dynamics recently
introduced in~\cite{chizat2025hidden}.

This equivalence challenges the conventional understanding of dropout's mechanisms in
deep architectures, showing in particular that both the propagation noise and the penalization effects vanish in the
limit.
It also suggests that RaM may serve as a useful proxy for dropout in sufficiently large residual architectures. In
fact, small-scale ablation experiments in~\cite{chizat2025phasediagramdropouttwolayer}
suggest that RaM alone performs comparably to dropout, while the recent
work~\cite{joo2026surprisingeffectivenessmaskingupdates} shows the ``surprising
effectiveness'' of RaM in large-scale settings. Going further, these results suggest revisiting
the theoretical explanation behind dropout to take into account its large-scale behavior:
namely, why does RaM help training? 

\paragraph{Organization.}
In Section~\ref{sec:setup}, we introduce the ResNet model and the training dynamics
with dropout or Random Gradient Masking (RaM). Section~\ref{sec:main-result}
presents the main asymptotic equivalence result (Theorem~\ref{thm:main}) and derives
the equations of the limiting model. Section~\ref{sec:numerics} provides empirical
validation of our theory, confirming the asymptotic equivalence of dropout and RaM in
deep ResNets. Finally, the proofs are deferred to the appendix.

\subsection{Other related work}
\paragraph{Dropout.} 
Ever since its introduction~\cite{hinton2012improvingneuralnetworkspreventing}, dropout has been part of standard training pipelines~\cite{vaswani2017attention,radford2018improving,devlin2019bertpretrainingdeepbidirectional,dosovitskiy2021imageworth16x16words,hu2021loralowrankadaptationlarge,ouyang2022traininglanguagemodelsfollow,ramesh2022hierarchicaltextconditionalimagegeneration,taylor2022galacticalargelanguagemodel}. Its empirical success sparked the introduction of many analogous techniques and variants, such as DropPath/StochasticDepth~\cite{huang2016deepnetworksstochasticdepth}, DropConnect~\cite{wan2013regularizationdropconnect}, Shake-Shake~\cite{gastaldi2017shakeshakeregularization}, 
ZoneOut for RNNs~\cite{krueger2017zoneoutregularizingrnnsrandomly}, SwapOut for ResNets~\cite{singh2016swapoutlearningensembledeep}, DropBlock for CNNs~\cite{ghiasi2018dropblockregularizationmethodconvolutional},
and StochasticPooling~\cite{zeiler2013stochasticpoolingregularizationdeep}, among many others. Our theoretical framework, detailed in Section~\ref{sec:setup}, allows us to cover a number of these variants. In modern large-scale settings, the use of dropout was advocated for in~\cite{xue2023repeatrepeatinsightsscaling,liu2023dropoutreducesunderfitting}, but also criticized in~\cite{liu2025drop}, yielding a contrasted situation.
On the theoretical side, many works have studied the dropout penalty, including its closed forms~\cite{arora2021explicitformscapacitycontrol} and its connections with classical regularization terms such as ridge regularization or path norms~\cite{wager2013dropouttrainingadaptiveregularization,arora2021explicitformscapacitycontrol,helmbold2015inductivebiasdropout,mou2018dropouttrainingdatadependentregularization}. The variance induced by dropout has also been studied for its effects on convergence properties and generalization~\cite{mianjy2020convergencegeneralizationdropouttraining,zhang2023implicitregularizationdropout,zhang2023stochasticmodifiedequationsdynamics,wei2020implicitexplicitregularizationeffects} but generally without distinguishing between the random masking and the propagation noise.

\paragraph{Random Gradient Masking (RaM).} 
The technique that we refer to as \emph{Random Masking} (RaM) is closely related to randomized block-coordinate descent~\cite{nesterov2012coordinatedescent,cotter2011betterminibatch,Wright2015CoordinateDA}. In the optimization literature, random coordinate descent was introduced to reduce the computational cost of each step. This motivation does not apply in deep networks, where the full forward and backward passes have to be computed anyway. Nonetheless, RaM has been observed to improve the performance of neural networks in both small~\cite{chizat2025phasediagramdropouttwolayer} and large-scale settings~\cite{joo2026surprisingeffectivenessmaskingupdates}. Since RaM uses \emph{unbiased} updates that depend on all units, it is unlikely that the explanations commonly invoked for the effectiveness of dropout (such as preventing co-adaptation or penalization effects) apply to it.

\paragraph{Large-scale limits of ResNets.}
Our analysis relies on recent progress in the theory of large-scale training dynamics. 
The type of limit model that we obtain is a Neural Mean ODE~\cite{lu2020meanfieldanalysisdeepresnet,ding2022mfresnets}, which lies at the crossroads of the Neural ODE framework~\cite{weinan2017proposal,chen2019neuralordinarydifferentialequations,lu2020meanfieldanalysisdeepresnet} and the mean-field analysis of 2LPs~\cite{nitanda2017stochastic,chizat2018globalconvergencegradientdescent,MeiMontanari2018mftwolayernetworks,sirignano2019meanfieldanalysisneural,Rotskoff2022mflimit}. The analysis of such models is an active line of research~\cite{jabir2019mean,isobe2023convergence,barboni2023globalconvergenceresnetsfinite,bonnet2023measure,barboni2025understandingtraininginfinitelydeep,daudin2025genericity,gassiat2025gradient}. The link between such models and practical architectures has long been unclear, until~\cite{chizat2025hidden} showed that this limit is relevant for standard ``shape'' regimes where, for instance, the widths of all layers are proportional and the depth diverges. The hyperparameter scalings that we consider, inherited from the Neural Mean ODE literature, are also consistent with the so-called  Complete Parameterization~\cite{dey2026dontlazycompletepenables}, which maximizes ``local'' feature updates and has been proposed as the appropriate generalization of the maximal update parameterization~\cite{yang2021tensorprogramsIV} to the large-depth setting. Since it is not needed to obtain our main result, we do not derive the large embedding-dimension limit of our dynamics, but we note that it would be possible by adapting~\cite{chaintron2026resnetsshapessizesconvergence}. See also~\cite{bordelon2024infinitelimitsmultiheadtransformer} for a prior derivation of the limit where all dimensions diverge that uses nonrigorous techniques from physics.

\section{Setup}\label{sec:setup}
\subsection{Training dynamics of ResNets with dropout}

\paragraph{Generic ResNets}
We consider a generic ResNet architecture, 
of depth $L$, hidden width $M$, and embedding dimension $D$, with $L,M,D\geq 1$. Let $\alpha >0$ be an additional scaling parameter, which can be taken equal to $1$ on first reading.
For a map $\phi: \R^p\times \R^D\to \R^D$, and a dropout mask $\zeta = {(\zeta^{\ell, j, d})}_{\ell,j,d} \in \R^{L\times M\times D}$ 
we define a \emph{masked} ResNet,
for $x\in \R^D$, as
\begin{equation}\label{eq:masked_resnet}
  \begin{cases}
    \hat{h}_\theta^{\zeta}(0, x) = x\\
    \hat{h}_\theta^{\zeta}(\ell, x) = \hat{h}_\theta^{\zeta}(\ell-1, x) + \frac{\alpha}{LM}\sum_{j=1}^M (1+ \zeta^{\ell, j})\odot \phi(z^{j,\ell}, \hat{h}_\theta^{\zeta}(\ell-1, x))
\end{cases}
\end{equation}
where we write $\zeta^{\ell, j} = {(\zeta^{\ell, j,d})}_{d=1}^D\in \R^D$ for the mask applied to each unit.
Here, $\theta = {(z^{j,\ell})}_{j=1,\ell =1}^{M, L} \in {(\R^{p})}^{M\times L}$ is the vector of parameters for the ResNet and $\odot$ denotes entrywise product of vectors. By setting $\zeta = 0$, we recover the vanilla ResNet architecture without dropout. 
In this paper, the mask $\zeta$ will be a centered random variable independently sampled at each training step, with potentially some dependency across its coordinates.

This description for ResNets is flexible and general enough to cover relevant cases in practice.

\begin{example}\label{ex:classic_examples}
 A ResNet with two-layer perceptron (2LP) blocks without intercepts can be obtained by letting $z = (u,v) \in \R^D \times \R^D$ (i.e.~$p = 2D$) and setting for $x \in \R^D$, \[\phi_{\mathrm{mlp}}((u,v),x) = v \rho(u^\top x),\] where $\rho : \R \to \R$ is the activation function.
 
    By instead setting \(\phi_{\mathrm{cnn}}((u,v),x) = v \star \rho(u \star x)\), where $\star$ denotes the discrete convolution operator, we obtain a convolutional ResNet~\cite{he2015deepresiduallearningimage}.
    ResNets with a single weight matrix per block are also covered by our analysis, by letting $M = 1$ and for instance $\phi(W,x) = W \rho(x)$ or $\phi(W,x) = \rho(W x)$ with $W \in \R^{D \times D}$.
We obtain the attention block of the transformer~\cite{vaswani2017attention} architecture by letting $z = (W_K, W_Q, W_V, W_O) \in (\R^{d_k \times D})^4$ and for an input family of $T$ tokens $x = (x_1, \dots, x_T) \in (\R^D)^T$,
\begin{equation}
    \phi_{\mathrm{att}}(z,x) = \left( W_O^\top \sum_{i=1}^T \frac{e^{(W_Q x_t)^\top (W_K x_i)/\sqrt{d_k}}}{\sum_{j=1}^T e^{(W_Q x_t)^\top (W_K x_j)/\sqrt{d_k}}} W_V x_i \right)_{1 \le t \le T} \in (\R^D)^T.
\end{equation}
In this setting, the hidden-width $M$ is known as the number of \emph{attention heads} per layer while $d_k$ is the key/query dimension, which is considered a constant in our analysis.
\end{example}

Consider a supervised learning problem with
 $\mathcal{X}= \mathcal{Y} = \R^D$ the input and output spaces, $\loss: \mathcal{Y}\times\mathcal{Y} \to \R$ a loss function and $\pi\in \ps(\mathcal{X}\times\mathcal{Y})$ a data distribution (which will be assumed of finite support later on). 
Although our proof technique applies to general gradient-based training, in order to fix ideas we consider gradient descent (GD) on the objective function $\L^{\zeta}(\theta) = \E_{\pi}[\mathrm{loss}(\hat{h}_\theta^{\zeta}(L, x), y)]$. The gradient reads
\begin{equation}\label{eq:gradient_masked_resnet}
  \nabla_{z^{j,\ell}} \L^{\zeta}(\theta) = \frac{\alpha}{LM} \E_{\pi}\big[ {D_1\phi(z^{j,\ell},\hat{h}_\theta^{\zeta}(\ell-1, x))}^{\top} \diag(1+ \zeta^{\ell, j})\hat{b}_\theta^{\zeta}(\ell, x, \nabla_1 \mathrm{loss}(\hat{h}_\theta^{\zeta}(L, x), y))\big], 
\end{equation}
where $\hat{b}_\theta^{\zeta}$ solves the backward pass equation for the dropout ResNet, given by, for inputs $x, w\in \R^D$:
\begin{equation}\label{eq:backward_masked_resnet}
  \begin{cases}
    \hat{b}_\theta^{\zeta}(L, x, w) = w,\\
    \hat{b}_\theta^{\zeta}(\ell-1, x, w) = \hat{b}_\theta^{\zeta}(\ell, x, w) + \frac{\alpha}{LM}\sum_{j=1}^M {D_2\phi(z^{j,\ell}, \hat{h}_\theta^{\zeta}(\ell-1, x))}^{\top} \diag(1+ \zeta^{\ell, j})\hat{b}_\theta^{\zeta}(\ell, x, w).
\end{cases}
\end{equation}

\paragraph{Dropout and variants}
We consider dropout masks that are resampled independently at each gradient-descent step. The mask vectors \(\zeta^{j,\ell} \in \mathbb{R}^D\) are identically distributed according to a common law \(\nu_\zeta \in \mathcal{P}(\mathbb{R}^D)\). Throughout, we assume that 
\begin{equation}\label{ass:dropout-distribution}
\text{
\(\nu_\zeta\) is centered and supported on
\(
\left\{x \in \mathbb{R}^D : \lVert x \rVert_\infty \leq B_\zeta\right\}
\)
for some \(B_\zeta > 0\).
}
\end{equation}

The standard choice used in practice is the centered and rescaled Bernoulli distribution with parameter \(q \in (0,1]\), where \(q\) denotes the \emph{keep rate}. Each coordinate then has the marginal distribution
\begin{equation}\label{eq:bernoulli_distrib}
  \zeta^{j,\ell,d}
  =
  \begin{cases}
    \dfrac{1-q}{q}, & \text{with probability } q,\\[4pt]
    -1, & \text{with probability } 1-q.
  \end{cases}
\end{equation}
The coordinates may either be sampled independently across \(d\), corresponding to coordinate dropout, or be shared across \(d\), corresponding to a common mask for the entire vector, which we refer to as \emph{unit dropout}.

As for the joint distribution of masks across units, we consider the following three variants:
\begin{itemize}
\item (independent masks) The $\zeta^{j,\ell}$ are independent across $j$ and $\ell$. 
\item (width-shared masks) The masks $\zeta^{j,\ell}$ are shared across $j$, and independent across $\ell$. That is $\zeta^{j,\ell}$ and $ \zeta^{j',\ell'}$ are equal if $\ell=\ell'$ and independent otherwise. This variant {(using \emph{unit dropout} masks)} is known as Stochastic Depth~\cite{huang2016deepnetworksstochasticdepth} as it turns off entire layers randomly.
\item (depth-shared masks)  The masks $\zeta^{j,\ell}$ are shared across $\ell$, and independent across $j$.
\end{itemize}
We always require that masks have at least independence across  either $L$ or $M$.

The original dropout method~\cite{srivastava2014dropout} applies the mask after the activation function. This is equivalent, in our terminology, to unit dropout with independent masks. To see this, e.g.~in the 2LP case, observe that for a mask $\zeta \in \R^{M}$ with i.i.d.~entries, it holds, for parameter matrices $U, V \in \R^{D\times M}$,
\[\frac{1}{M}V \big( (1+\zeta)\odot \rho (U^{\top} x)\big) = \frac{1}{M} \sum_{j=1}^M (1+\zeta^j) V^j\rho ({(U^j)}^\top x) = \frac{1}{M} \sum_{j=1}^M (1+\zeta^j) \phi_{\mathrm{mlp}}((U^j,V^j), x).\]

\paragraph{GD-Dropout dynamics}
The training dynamics using dropout is as follows. 
We write $\theta_k = {(\hat{Z}^{j,l}_k)}_{j,l}$ for the parameters at training iteration $k\geq 1$, switching to capital letters to indicate that these evolving parameters are now random variables. Fix a dropout variant, an initial probability distribution $\mu_0$ on $\mathbb{R}^p$ and a mask distribution $\nu_\zeta \in \P(\R^D)$ satisfying~\eqref{ass:dropout-distribution}. Initialize $\hat{Z}_{0}^{j,l} \sim \mu_0$ i.i.d.~and at each $k\geq0$ draw an independent sample of the dropout mask $\zeta_k \in \R^{M\times L\times D}$ and update the parameters as follows:
\[
\hat{Z}_{k+1}^{j,\ell} = \hat{Z}_{k}^{j,\ell} - \frac{LM}{\alpha^2}\tau \nabla_{z^{j,\ell}} \L^{\zeta_k}(\theta_k),
\]
where $\tau >0$ is the learning rate and the $\frac{LM}{\alpha^2}$ scaling ensures non-vanishing/exploding updates when $\alpha \gtrsim 1$.
Writing  
\(w^{\zeta}_{x,y} = \nabla_1 \mathrm{loss}(\hat{h}_{\theta_k}^{\zeta}(L, x), y)\) for simplicity, and writing the gradient explicitly, we obtain:
\begin{equation}\label{eq:parameter_update_resnet_dropout}
\hat{Z}_{k+1}^{j,\ell} = \hat{Z}_{k}^{j,\ell} - \frac{\tau}{\alpha}\E_{\pi}\big[\tilde{\mathfrak{u}}(\hat{Z}_{k}^{j,\ell}, \hat{h}_{\theta_k}^{\zeta}(\ell-1, x), \hat{b}_{\theta_k}^{\zeta}(\ell, x, w^{\zeta}_{x,y}))\big](1+ \zeta^{\ell, j}_k).
\end{equation}
where we define the update map $\tilde{\mathfrak{u}}: \R^{p}\times \R^D\times \R^D\to \R^{p\times D}$ given by $\tilde{\mathfrak{u}}(z, h, b) = {D_1\phi(z, h)}^{\top}\diag(b)$ and used the fact that $\diag(a)b = a \odot b = \diag(b)a$.
Note that if the mask is shared across $D$, we can replace $\tilde{\mathfrak{u}}$ by $\mathfrak{u}: \R^{p}\times \R^D\times \R^D\to \R^{p}$ given by $\mathfrak{u}(z, h, b) = {D_1\phi(z, h)}^{\top}b$, which is the same update map as in the vanilla case (see e.g.~\cite{chizat2025hidden} for details).

\paragraph{GD-RaM dynamics} By removing dropout in the forward and backward pass equations---but not in the update equation---we obtain another training dynamics, which we refer to as \emph{Random Gradient Masking} (RaM, as in~\cite{chizat2025phasediagramdropouttwolayer}). That is, 
\begin{equation}\label{eq:parameter_update_resnet_ram}
  \tilde{Z}_{k+1}^{j,\ell} = \tilde{Z}_{k}^{j,\ell} - \frac{\tau}{\alpha}\E_{\pi}\big[\tilde{\mathfrak{u}}(\tilde{Z}_{k}^{j,\ell}, \hat{h}_{\tilde{\theta}_k}^{0}(\ell-1, x), \hat{b}_{\tilde{\theta}_k}^{0}(\ell, x, \tilde{w}^{0}_{x,y}))\big](1+ \zeta^{\ell, j}_k),
\end{equation}
where, according to our notation, the $0$ superscript indicates that no mask is used for the forward/backward passes. All the variants of dropout described above can be turned into a RaM version by using the corresponding mask in the update rule.
As can be seen from its expression, for \emph{unit dropout}, the RaM update rule can be implemented by masking the gradient as~
\begin{equation}\label{eq:parameter_update_resnet_ram_implementable}
  \tilde{Z}_{k+1}^{j,\ell} = \tilde{Z}_{k}^{j,\ell} - \tau \frac{L M}{\alpha^2}(1+ \zeta^{\ell, j}_k)\nabla_{z^{j,\ell}} \L^{0}(\tilde{\theta}_k).
\end{equation}
This recovers a version of the standard random block coordinate descent algorithm, which has recently shown promising performance in large-scale settings~\cite{joo2026surprisingeffectivenessmaskingupdates}.

\section{Main result}\label{sec:main-result}

The main result of this work is that, in the infinite-depth limit $L\to \infty$ or in the limit $M,L\to \infty$, the difference between the GD-dropout and the GD-RaM dynamics vanishes. As discussed later in Section~\ref{sec:case_of_large_D}, our result also holds with diverging embedding dimension $D$ under appropriate conditions on the shape. We consider the following technical assumptions.
\begin{assumption}[Regularity assumptions]\label{ass:regularity_assumptions}
    There exists $B > 0$ such that:
\begin{enumerate}
    \item $\phi$ is $B$-Lipschitz, differentiable, its differential $D\phi$ is $B$-Lipschitz and $\|\phi(0,0)\|_2 \le B$;
    \item For some fixed dataset ${(x_i, y_i)}_{i=1}^n$ with $n\in \mathbb{N}$, $\pi = \frac{1}{n}\sum_{i=1}^n \delta_{(x_i, y_i)}$ , and the inputs satisfy $\max_i \|x_i\|_2 \le B$. For notational convenience, we will write \(\hat{h}_{k,i}^\ell := \hat{h}_{\theta_k}^{\zeta_k}(\ell, x_i)\), \(\tilde{h}_{k,i}^\ell := \hat{h}_{\tilde{\theta}_k}^{0}(\ell, x_i)\), and similarly for \(\hat{b}_{k,i}^\ell\), \(\tilde{b}_{k,i}^\ell\).
    \item The loss is differentiable in its first argument, such that $y\mapsto \nabla_1 \mathrm{loss}(y, y_i)$ is $B$-Lipschitz and $\|\nabla_1 \mathrm{loss}(0, y_i)\|_2 \le B$ $\forall i \in [n]$;
\end{enumerate}
\end{assumption}
The regularity assumed on $\phi$ is quite restrictive, since it does not cover the case of 2LP blocks (where $\phi$ and $D\phi$ are only locally Lipschitz). However, this allows us to simplify the proofs, and to focus on the main convergence mechanism that sheds light on dropout. Following~\cite[Section 4]{chizat2025hidden}, our analysis could be extended to standard 2LP blocks at the cost of more technical proofs involving additional tail controls.

\subsection{Asymptotic equivalence in the complete feature learning regime}
Let $\alpha=1$. Consider the GD-dropout dynamics $((\hat{Z}_k^{j,\ell})_k)_{j,\ell}$ defined in~\eqref{eq:parameter_update_resnet_dropout}, and the GD-RaM dynamics $((\tilde{Z}_k^{j,\ell})_k)_{j,\ell}$ defined in~\eqref{eq:parameter_update_resnet_ram}, with coupled randomness using the same random initialization and the same masks $(\zeta^{j,\ell})_{j,\ell}$ at each step.
Consider the distance between the two dynamics in parameter space, forward pass and backward pass respectively defined, with $s_\ell = \ell/L$ for $\ell \in [0:L]$, as
\[
    \Delta_k^Z := \max_{\substack{j \in [1:M] \\ \ell \in [1:L]}} \|\hat{Z}_k^{j,\ell} - \tilde{Z}_k^{j,\ell}\|, \quad
    \Delta_k^h := \max_{\substack{i \in [1:n] \\ \ell \in [0:L]}} \|\hat{h}_{k,i}^\ell - \tilde{h}_{k,i}^\ell\|, \quad
    \Delta_k^b := \max_{\substack{i \in [1:n] \\ \ell \in [0:L]}} \|\hat{b}_{k,i}^\ell - \tilde{b}_{k,i}^\ell\|.
\]
\begin{theorem}[Asymptotic Equivalence of Dropout and RaM]\label{thm:main}
Let Assumption~\ref{ass:regularity_assumptions} hold with $B>0$. Let $\mu_0\in\P(\R^p)$ be a subgaussian distribution with variance proxy $\sigma_0^2 \leq B$ and $\nu_\zeta \in \P(\R^D)$ be the mask distribution satisfying~\eqref{ass:dropout-distribution} with $B_\zeta >0$. Let
\begin{equation}\label{eq:beta}
\beta = \beta(M,L) \coloneqq 
\begin{cases}
1 & \text{if the masks are independent},\\
\sqrt{M} & \text{if the masks are width-shared},\\
\sqrt{L\log L} & \text{if the masks are depth-shared}.
\end{cases}
\end{equation}

Then, for all $k\geq 1$, there exist $c_1, c_2 > 0$ that only depend on $B$, $B_\zeta$, $D$ and $k\tau$, such that for all $\delta\in(0,1)$ with probability at least $1 - \delta$, it holds:
\[
  \max_{k'\leq k} \max \{\Delta_{k'}^Z, \Delta_{k'}^h, \Delta_{k'}^b\}
  \leq c_1 \left(\frac{1}{L}
    +  \beta\frac{1+\sqrt{\log(kn/\delta)}}{\sqrt{LM}}
  \right)
\]
provided that the right-hand side is smaller than $c_2$.
\end{theorem}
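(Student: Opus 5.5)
We argue by a discrete Grönwall induction over the training steps $k'\le k$, coupling the two dynamics through the shared initialization and shared masks. The key observation is that at every step the dropout forward and backward passes differ from the RaM ones in two ways. The first is the mismatch of parameters, which we control by $\Delta^Z_{k'}$ through Lipschitz stability of the passes. The second is the propagation noise $\frac{1}{LM}\sum_{\ell\le \ell_0}\sum_j \zeta^{\ell,j}_{k'}\odot\phi(\hat Z^{j,\ell}_{k'},h^{\ell-1})$, a sum of centered terms that should be small by concentration. The update maps in \eqref{eq:parameter_update_resnet_dropout} and \eqref{eq:parameter_update_resnet_ram} carry the same factor $(1+\zeta^{\ell,j}_{k'})$, which is bounded by $1+B_\zeta$. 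Hence
\[
\Delta^Z_{k'+1}\le \Delta^Z_{k'}+C\tau\,\bigl(\Delta^Z_{k'}+\Delta^h_{k'}+\Delta^b_{k'}\bigr),
\]
provided we have a priori bounds on $\|\hat Z\|,\|\hat h\|,\|\hat b\|$. These bounds follow from Assumption~\ref{ass:regularity_assumptions}, the subgaussianity of $\mu_0$ (a union bound gives $\max_{j,\ell}\|Z_0^{j,\ell}\|\lesssim\sqrt{\log(LMk/\delta)}$; we may need to track this growth or absorb it into the concentration terms), and discrete Grönwall on a horizon $k\tau$ fixed.

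The main step is to bound $\Delta^h_{k'}$ given $\Delta^Z_{k'}$. We write the dropout forward recursion minus the RaM recursion. The Lipschitz part contributes $\frac{C}{L}\sum_\ell(\Delta^h+\Delta^Z)$-type terms, which Grönwall in depth turns into $C\Delta^Z_{k'}$. It remains to control the martingale-like term
\[
N_{\ell_0}=\frac{1}{LM}\sum_{\ell\le\ell_0}\sum_{j}\zeta^{\ell,j}_{k'}\odot\phi\bigl(\hat Z^{j,\ell}_{k'},\tilde h^{\ell-1}_{k',i}\bigr).
\]
Here we evaluate at the noiseless RaM state, so that the summands depend on $\zeta_{k'}$ only through the mask itself. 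This works because $\tilde h_{k'}$ and $\hat Z_{k'}$ are measurable with respect to masks of previous steps, and replacing $\hat h$ by $\tilde h$ costs only a Lipschitz error that is again absorbed by Grönwall. The bounded increments are of size $\le C/(LM)$.

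\begin{itemize}
\item \emph{Independent masks.} Azuma--Hoeffding (or Hoeffding over $LM$ independent terms, with a maximal inequality in $\ell_0$) gives $\max_{\ell_0}\|N_{\ell_0}\|\lesssim\sqrt{\log(Ln/\delta)}/\sqrt{LM}$.
\item \emph{Width-shared masks.} Only $L$ independent blocks remain, each of size $\le C/L$, giving $\sqrt{M}/\sqrt{LM}$.
\item \emph{Depth-shared masks.} There are $M$ independent masks, but each summand is a sum over $\ell$ whose partial sums must be controlled uniformly in $\ell_0$. We would apply Hoeffding at each fixed $\ell_0$ and union-bound over the $L$ values, which costs the extra $\sqrt{L\log L}$ factor in $\beta$.
\end{itemize}

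A union bound over $i\in[n]$ and $k'\le k$ yields the $\log(kn/\delta)$ factor. The $1/L$ term comes from second-order contributions, namely products of noise with the discrepancy accumulated along depth, and from the discretization of the recursions; we expect these to be bounded by $C/L$ rather than needing concentration.

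The backward pass $\Delta^b$ is handled identically, running the recursion \eqref{eq:backward_masked_resnet} downward from $\ell=L$. The terminal condition differs by $\le B\Delta^h$, and the backward noise term has the same structure with $D_2\phi^\top\diag(\zeta)$.

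We then close the loop. Combining the three bounds gives $\Delta^Z_{k'+1}\le(1+C\tau)\Delta^Z_{k'}+C\tau\varepsilon$, with $\varepsilon=\frac1L+\beta\frac{1+\sqrt{\log(kn/\delta)}}{\sqrt{LM}}$, so $\Delta^Z_{k}\le C e^{Ck\tau}\varepsilon$. The proviso that the right-hand side is smaller than $c_2$ keeps the parameters within a ball where the a priori constants remain valid.

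The main obstacle is the adaptedness issue in the concentration step. Within a single step, $\hat h^{\ell-1}$ depends on the masks of earlier layers, and in the depth-shared case on the same masks. We would therefore always concentrate at the RaM state, which is independent of $\zeta_{k'}$, and absorb the difference by Grönwall.
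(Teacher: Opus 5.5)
Your argument is sound, and it takes a genuinely different route from the paper.

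\textbf{How the paper argues.} The paper never compares the two finite dynamics directly. It proves Theorem~\ref{thm:cvg_to_limit} separately for GD-Dropout and for GD-RaM, each against $LM$ copies $Z^{j,\ell}_k$ of the limit process. These copies inherit the independence structure of the masks. The tool is the stochastic-approximation Lemma~\ref{lem:stoch_approx}, in which the Monte-Carlo term is concentrated at the deterministic Mean ODE state, jointly over $(Z^{j,\ell},\zeta^{j,\ell})$. The statement then follows by a triangle inequality, and the $1/L$ term is the Euler error of that comparison.

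\textbf{How your route differs.} You concentrate only the fresh mask $\zeta_{k'}$, conditionally on the past, with the RaM state $(\tilde h_{k'},\tilde b_{k'})$ playing the role of the deterministic state. This is legitimate:
\begin{itemize}
\item $\hat Z_{k'}$, $\tilde Z_{k'}$, $\tilde h_{k'}$ and $\tilde b_{k'}$ (terminal $\tilde w$ included) depend only on the initialization and on $\zeta_0,\dots,\zeta_{k'-1}$.
\item Swapping $\hat h,\hat b$ for $\tilde h,\tilde b$ inside the noise terms costs $\frac{C}{L}(\|\hat h^{\ell-1}-\tilde h^{\ell-1}\|+\|\hat b^{\ell}-\tilde b^{\ell}\|)$ per layer, which Gr\"onwall in depth absorbs.
\end{itemize}

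\textbf{What each route buys.} Yours needs no limit object, no propagation-of-regularity or subgaussianity lemmas, and no stopping index. Since two recursions on the same grid incur no discretization error, it actually gives the bound without the $1/L$ term. Your attribution of $1/L$ to noise--discrepancy products is therefore off: those products are exactly what Gr\"onwall absorbs. The leftover slack is harmless, since a bound without $1/L$ implies the statement. In exchange, the paper's route identifies the common limit and yields Theorem~\ref{thm:collapse_dropout_variants} at no extra cost.

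\textbf{One step needs repair to get the stated rate.} The increments $\frac{1}{LM}\zeta^{\ell,j}_{k'}\odot\phi(\hat Z^{j,\ell}_{k'},\tilde h^{\ell-1}_{k',i})$ are not bounded by $C/(LM)$. Indeed $\|\phi(z,h)\|_2\le B(1+\|z\|_2+\|h\|_2)$, and $\mu_0$ is only subgaussian. Plugging in $\max_{j,\ell}\|\hat Z_0^{j,\ell}\|\lesssim\sqrt{\log(LM/\delta)}$, either there or in the a priori bounds, leaves a $\sqrt{\log(LM)}$ factor that is not in the statement.

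Here is how to avoid it:
\begin{itemize}
\item Use the conditional vector Hoeffding bound with random variance proxy $(LM)^{-2}\sum_{j,\ell}\|a^{j,\ell}\|_2^2$, where $a^{j,\ell}$ is the coefficient multiplying the mask.
\item For width-shared or depth-shared masks, the block sums give a proxy that is, by Jensen, at most $1/L$ resp.\ $1/M$ times the empirical average $(LM)^{-1}\sum_{j,\ell}\|a^{j,\ell}\|_2^2$.
\item Show this empirical average is $O(1)$ with high probability. The parameter updates are uniformly bounded, since $\|D_1\phi\|\le B$ and $\|\hat b\|\le e^{B(1+B_\zeta)}B(1+\|\hat h^L\|)$. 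The forward pass is controlled by Gr\"onwall through $(LM)^{-1}\sum_{j,\ell}\|\hat Z^{j,\ell}\|$ rather than through the maximum. The empirical moments of the i.i.d.\ subgaussian initialization concentrate by Bernstein.
\item In the independent and width-shared cases, use the maximal inequality rather than a union bound over $\ell_0$. Conditionally on the past, the partial sums are sums of independent terms, so it applies. Otherwise a spurious $\sqrt{\log L}$ appears, as in your $\log(Ln/\delta)$.
\end{itemize}
The backward increments $D_2\phi^\top(\zeta\odot\tilde b)$ are genuinely bounded and need no such care.
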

\begin{proof}
Consider the two random events of probability at least $1-\delta/2$ where the conclusion of Theorem~\ref{thm:cvg_to_limit} (presented in Section~
    \ref{subsec:limit} below) holds for Dropout, and for RaM, respectively. Then, by a triangle inequality, the conclusion of Theorem~\ref{thm:main} holds in the intersection of these two events, which has probability at least $1-\delta$.
\end{proof}
We can make the following remarks:
\begin{itemize}
    \item (Link with vanilla GD limit) When the masks are independent, the convergence rate is the same as the rate of convergence of vanilla ResNets towards their large-depth limit~\cite{chizat2025hidden}. However the constants a priori differ as there is a hidden dependency on $B_\zeta$ as well.
    \item (Keep rate scaling) In the case of Bernoulli masks~\eqref{eq:bernoulli_distrib}, the assumption of a fixed law $\nu_\zeta$ for the masks implies that the \textit{keep rate} $q\in(0,1]$ is constant and equal across all units. In particular, it is not scaled in terms of the shape parameters $L,M,D$. This is the common regime implemented in practice. As in~\cite{chizat2025phasediagramdropouttwolayer} for the case of 2LP, it would be interesting to study the case of scaled dropout rates, such as the sparse regime $q = \Theta(\frac{1}{LM})$, where we expect other effects in the limit.
    \item (Scaling of embedding dimension $D$) The above result does not track the dependency on $D$ in the bound, which is discussed in Section~\ref{sec:case_of_large_D}.
\end{itemize}

Our analysis also shows that all the dropout variants discussed above collapse to the same dynamics asymptotically (which a priori still depends on the choice of $\nu_\zeta$).

\begin{theorem}[Collapse of dropout variants]\label{thm:collapse_dropout_variants}
Under the assumptions of Theorem~\ref{thm:main}, consider GD-Dropout and GD-RaM with masks that are either (i) independent, (ii) width-shared or (iii) depth-shared. 
     Then, these dynamics all \emph{coincide asymptotically}, in the sense that the sequence of ResNet's outputs up to a fixed training horizon $k\geq 0$ all converge in probability to the same limit as $L\to \infty$ and $M\to \infty$ (and $\log L=o(\sqrt{M})$ for depth-shared).
\end{theorem}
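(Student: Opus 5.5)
The plan is to route everything through the limit result Theorem~\ref{thm:cvg_to_limit} (Section~\ref{subsec:limit}), which the proof of Theorem~\ref{thm:main} already relies on. That theorem gives a quantitative bound, with probability at least $1-\delta$, between each finite dynamics (dropout or RaM, for a given masking variant) and a limiting deterministic Neural Mean ODE dynamics. It is driven by a limiting update field obtained by averaging over the mask law. The key observation I will establish is that this limit does not depend on the masking variant. Because the masks are centered, the factor $(1+\zeta^{\ell,j}_k)$ enters the limit only through its mean, which equals $1$, and through no higher moment. So the vector field and initial law of the limit ODE are the same for independent, width-shared and depth-shared masks. In particular the limit is just the RaM mean dynamics with its masking noise averaged out. (Whether $\nu_\zeta$ still matters for the coordinate-dropout update $\tilde{\mathfrak u}$ I will read off from the limit equation; in any case it is common to all three variants.)

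The steps in order are as follows. First, I write down the limit equations from Section~\ref{subsec:limit} for each variant and check that the drift is $\E_{\zeta\sim\nu_\zeta}[\,\cdot\,(1+\zeta)] $ evaluated at the same mean-field state. This makes the three limits identical, being solutions of the same well-posed ODE with the same initial condition $\mu_0$; uniqueness follows from the Lipschitz Assumption~\ref{ass:regularity_assumptions}. Second, I invoke Theorem~\ref{thm:cvg_to_limit} for each variant. It gives
\[
\max_{k'\le k}\Delta_{k'}\le c_1\Big(\tfrac1L+\beta\tfrac{1+\sqrt{\log(kn/\delta)}}{\sqrt{LM}}\Big)
\]
with $\beta\in\{1,\sqrt M,\sqrt{L\log L}\}$, whenever the right-hand side is below $c_2$. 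Third, I check that this right-hand side tends to $0$ for each fixed $\delta$. For independent masks it is $O(1/L+1/\sqrt{LM})$. For width-shared masks it is $O(1/L+1/\sqrt L)$. For depth-shared masks it is $O(1/L+\sqrt{\log L/M})$, which vanishes exactly under the stated condition $\log L=o(\sqrt M)$; that condition is in fact stronger than needed. Fourth, I pass from the forward-pass bound $\Delta^h$ to the outputs $\hat h(L,x_i)$ and take a union over the finite horizon $k'\le k$ and the $n$ data points, both already included in the theorem. This gives, for every $\varepsilon,\delta$, that eventually $\P(\max_{k'\le k,i}\|\text{output}-\text{limit}\|>\varepsilon)\le\delta$, which is convergence in probability. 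The triangle inequality between any two variants then yields the collapse.

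I expect the main obstacle to be the first step: verifying that the limit in Theorem~\ref{thm:cvg_to_limit} really is variant-independent. Width- and depth-shared masks induce correlated noise across a layer or across depth. One must confirm that this correlated noise enters only as a martingale fluctuation whose size is reflected in $\beta$, and not as a drift correction. In particular, for dropout with shared masks the forward pass has a bias of second order in $\zeta$ that could survive. To handle this I would first check that the bias is of order $1/(LM)$ per block, with the sharing only affecting the fluctuation of order $\beta/\sqrt{LM}$. I would then confirm that this is already absorbed in the bound of Theorem~\ref{thm:cvg_to_limit} rather than altering the limit equation.
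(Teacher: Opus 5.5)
Your skeleton matches the paper's proof, which derives this theorem in one line from Theorem~\ref{thm:cvg_to_limit}, applied to each masking variant of GD-Dropout and of GD-RaM. Your steps two to four are exactly the details that proof leaves implicit: the right-hand side vanishes for each value of $\beta$, the outputs are read off $\hat\Delta^h_{k'}$ at $\ell=L$, the high-probability bound becomes convergence in probability towards the deterministic outputs $h_{k',i}(1)$, and a triangle inequality finishes. Your remark that $\log L=o(M)$ already suffices in the depth-shared case is also correct.

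However, step one rests on a false description of the limit. The limit dynamics~\eqref{eq:limit_update_rule_ram} does not average the mask away. Each particle keeps the random factor $(1+\zeta_k)$, with $\zeta_k\sim\nu_\zeta$ i.i.d.\ in $k$, in its own update. Only the forward and backward passes are deterministic, and they depend on the parameters through $\mu_k(\cdot|s)=\mathrm{Law}(Z_k(s))$. Already $Z_1(s)=Z_0(s)-\tau G_s(Z_0(s))(1+\zeta_0)$ for a deterministic map $G_s$. So $\int\phi(z,h)\,\mathrm{d}\mu_1(z|s)$ in general depends on the law of $\zeta_0$ beyond its mean as soon as $\phi$ is nonlinear in $z$.

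If the mask entered only through $\E[1+\zeta]=1$, as you claim, the common limit would be the vanilla-GD Mean ODE limit. That contradicts the paper's remark that the limit a priori depends on $\nu_\zeta$, and its experiments where Dropout and RaM separate from SGD. So the check ``the drift is $\E_{\zeta\sim\nu_\zeta}[\,\cdot\,(1+\zeta)]$'' would fail.

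The correct reason for variant-independence is simpler: there is only one limit object. Theorem~\ref{thm:cvg_to_limit} compares every variant to the same process~\eqref{eq:limit_update_rule_ram}. Each coupled copy $Z^{j,\ell}$ has the law of $Z$ whatever the correlation of the masks across $(j,\ell)$. Moreover, $h_{\mu_k}$ and $b_{\mu_k}$ depend only on the marginals $\mathrm{Law}(Z_k(s))$. The cross-unit correlation affects only the joint law of the copies, which enters the error bound through $\beta$, not the limit.

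For the same reason, the ``main obstacle'' you anticipate is not one. Any second-order bias that shared masks create in the dropout forward pass is already controlled inside Theorem~\ref{thm:cvg_to_limit}. In Lemma~\ref{lem:stoch_approx}, the Monte-Carlo error is evaluated along the deterministic trajectory $a(s_\ell)$, so it is exactly centered. The dependence of $\hat a_\ell$ on the masks is absorbed by the Lipschitz and Gr\"onwall arguments. Once you invoke that theorem, there is nothing further to verify.
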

\begin{proof}
    This follows directly from Theorem~\ref{thm:cvg_to_limit} presented in Section~
    \ref{subsec:limit} below.
\end{proof}

\subsection{Asymptotic equivalence in the lazy-ODE regime}\label{sec:equivalence_in_lazy_ode}
In the previous section, we assumed a scaling factor $\alpha=1$. Taking $\alpha=\alpha(M,L)$ diverging to $+\infty$ with $L$ and/or $M$ allows to cover other hyper-parameter regimes, where feature learning still exists but is not ``complete''~\cite{dey2026dontlazycompletepenables} meaning that the update of the parameters $(Z^{j,\ell})$ only contributes to a vanishing fraction of the update of the associated feature $\phi(h^{\ell-1}(x),Z^{j,\ell})$.  In this regime, the magnitude of the updates of the parameters scales as $O(\frac{1}{\alpha})$, see ~\cite{chizat2025hidden} for details in this context. 

Our asymptotic equivalence result still holds in such regimes. 
\begin{theorem}[Asymptotic Equivalence of Dropout and RaM in the \emph{lazy ODE regime}]\label{thm:lazy_thm}
Let Assumption~\ref{ass:regularity_assumptions} hold with $B>0$, let $\alpha \geq 1$ and let $\mu_0\in\mathcal{P}(\R^p)$ be a subgaussian distribution with variance proxy $\sigma_0^2 \leq B$.
Further assume that $\phi$ is twice differentiable with a $B$-Lipschitz cross differential $D_{2,1}\phi$, and that for all $h\in \R^D$ $\E_{\mu_0}[\phi(h, Z_0)] = \E_{\mu_0}[D_1\phi(h, Z_0)] = 0$. Consider a distribution $\nu_\zeta$ of dropout masks satisfying~\eqref{ass:dropout-distribution} with $B_\zeta>0$ and let $\beta$ be given by~\eqref{eq:beta} as a function of the dependency structure of the masks.

Then, for all $k\geq 1$, there exist $c_1, c_2 > 0$ that only depend on $B$, $D$, $k\tau$ and
$B_\zeta$, such that for all $\delta\in(0,1)$ with probability at least $1 - \delta$, it holds:
\[
  \max_{k'\leq k} \max \{\alpha\Delta_{k'}^Z, \Delta_{k'}^h, \Delta_{k'}^b\}
  \leq c_1 \left(\frac{1}{\alpha} + \frac{1}{L}
    +  \beta\frac{\alpha(1+ \sqrt{\log(kn/\delta)})}{\sqrt{LM}}
  \right)
\]
provided that the right-hand side is smaller than $c_2$.
\end{theorem}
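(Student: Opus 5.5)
We prove Theorem~\ref{thm:lazy_thm} the same way as Theorem~\ref{thm:main}: compare each of the two dynamics to a common deterministic limit and close with a triangle inequality. The difference is that the limit is now the \emph{lazy ODE} limit, a linearization around initialization. Concretely, we write $\hat{Z}_k^{j,\ell} = \hat{Z}_0^{j,\ell} + \frac{1}{\alpha}\hat{A}_k^{j,\ell}$ for dropout, and similarly $\tilde{Z}_k^{j,\ell} = \tilde{Z}_0^{j,\ell} + \frac{1}{\alpha}\tilde{A}_k^{j,\ell}$ for RaM. The update~\eqref{eq:parameter_update_resnet_dropout} carries a $\frac{\tau}{\alpha}$ factor, so the rescaled displacements $A$ are $O(1)$ over a horizon $k\tau$, and the quantity to control is $\alpha\Delta^Z = \max\|\hat{A}-\tilde{A}\|$.

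\textbf{Step 1: Taylor expansion of the residual branch.} We expand
\[
\phi(Z_0 + A/\alpha, h) = \phi(Z_0,h) + \tfrac{1}{\alpha}D_1\phi(Z_0,h)A + O(\|A\|^2/\alpha^2).
\]
Multiplied by $\alpha/(LM)$ (with $\alpha$ in the ResNet), the first term has zero mean under $\mu_0$, by the assumption $\E_{\mu_0}[\phi(h,Z_0)]=0$. Its sum over $j$ and $\ell$ is therefore a martingale-type fluctuation of size $\alpha\sqrt{\log(\cdot)}/\sqrt{LM}$, uniformly over the $O(L)$ layers and $n$ inputs, with a union bound providing the $\log(kn/\delta)$ factor. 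The second term gives the limiting drift $\frac{1}{LM}\sum_{j}(1+\zeta^{\ell,j})D_1\phi(Z_0^{j,\ell},h)A^{j,\ell}$. The remainder contributes $O(1/\alpha)$; this is where Lipschitzness of $D\phi$ is used, and the same argument in the backward pass uses Lipschitzness of $D_{2,1}\phi$.

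\textbf{Step 2: removal of the masks and the depth discretization.} In the forward and backward passes, the masks enter as $\frac{\alpha}{LM}\sum_j \zeta^{\ell,j}\phi(Z^{j,\ell},h)$. This sum is centered, bounded by $B_\zeta$, and is handled by a concentration argument over the independent blocks of masks. Independence across $j$ and $\ell$ gives $\beta=1$. Width-shared masks leave only $L$ independent blocks, which gives $\beta=\sqrt{M}$. Depth-shared masks give $\beta=\sqrt{L\log L}$; here the coupling across layers forces a chaining or maximal-inequality argument over $\ell$, which produces the logarithm. As in Theorem~\ref{thm:main}, the $\alpha$ prefactor multiplies these fluctuations. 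For the parameter updates, $\frac{1}{\alpha}\zeta\,\E[\tilde{\mathfrak u}]$ produces noise that is averaged into the hidden states through the $\frac{1}{LM}$ sum, so its effect is of the same order. The limit equation replaces the discrete sum over $\ell$ with an integral in $s$, at an $O(1/L)$ cost.

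\textbf{Step 3: discrete Grönwall argument.} We define the error at step $k'$ as
\[
E_{k'} = \max\{\max\|A-\bar A\|,\ \Delta^h,\ \Delta^b\},
\]
measured against the lazy limit. We then show
\[
E_{k'+1}\le (1+C\tau)E_{k'} + C\tau\Bigl(\tfrac{1}{\alpha}+\tfrac{1}{L}+\beta\tfrac{\alpha(1+\sqrt{\log(kn/\delta)})}{\sqrt{LM}}\Bigr).
\]
Iterating this up to horizon $k\tau$ and intersecting the two high-probability events (dropout and RaM), each of probability $1-\delta/2$, yields the claim.

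The proviso that the right-hand side be smaller than $c_2$ is needed to stay in a bounded region. There the a priori bounds $\|A\|\le C$, and the bounds on $h$ and $b$, make all the Lipschitz constants uniform.

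\textbf{Main obstacle.} The main difficulty is Step 1. The zero-mean fluctuation $\frac{\alpha}{LM}\sum\phi(Z_0^{j,\ell},h)$ must be controlled uniformly in $h$ along the trajectory, and this $h$ depends on the $Z_0$ themselves. We plan to handle this by a Lipschitz net or covering argument in $h\in\R^D$, which introduces the $D$-dependence of the constants, combined with a stability estimate showing that the perturbed $h$ stays close to the limit. This is the delicate point where the factor $\alpha$ amplifies the fluctuations.
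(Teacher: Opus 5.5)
Your proposal is correct and follows essentially the paper's route. Both dynamics are compared to $L\times M$ copies of the masked lazy (Neural Tangent ODE) limit, coupled through the same initializations and the same update masks; this coupling is what makes $\alpha\Delta^Z$ vanish. The layerwise error then splits into an Euler term, an $O(1/(\alpha L))$ linearization term and an $\alpha$-amplified Monte Carlo term. In that last term $\beta$ reflects the mask dependency structure, and the dependence of $\hat h$ on the parameters is handled by a net over the ball of radius $2R$. One concludes with a stopping index, a discrete Gr\"onwall argument and a triangle inequality.

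One small correction concerns your recursion $E_{k'+1}\le(1+C\tau)E_{k'}+C\tau(\cdots)$ on the maximum. The dropout forward/backward masks are resampled at every step, so $\Delta^h_{k'+1}$ contains a fresh Monte Carlo error of full size rather than $O(\tau)$, and that inequality does not hold as stated. The paper instead runs the recursion on the parameter error alone, $\Delta^\xi_{k+1}\le\Delta^\xi_k+c\tau(\Delta^h_k+\Delta^b_k)$. It bounds $\Delta^h_k$ and $\Delta^b_k$ by $c\,(\Delta^\xi_k+\tfrac{1}{\alpha}+\tfrac{1}{L}+\beta\tfrac{\alpha(1+\sqrt{\log(kn/\delta)})}{\sqrt{LM}})$ separately at each fixed $k$.
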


Observe that this bound is relevant for the scalings for which the right-hand side vanishes. For instance, for independent masks $\beta=1$ and the bound vanishes if and only if $\alpha,L \to +\infty$ and $\alpha = o(\sqrt{LM})$. The proof is given in Appendix~\ref{app:proof_lazy_thm}.

Let us mention however that the equivalence may fail for other HP scalings. For instance, in the Neural Tangent Kernel regime~\cite{jacot2020neuraltangentkernelconvergence}, dropout already introduces nonvanishing randomness in the first output at initialization, while RaM does not; so the two dynamics cannot be equivalent in this regime. Similarly, in the ``SDE scaling'' $\alpha =\sqrt{ML}$, dropout induces nonvanishing randomness in the first forward pass, preventing an equivalence with RaM of the same form.

\subsection{The limit dynamics in the complete feature learning regime}\label{subsec:limit}

\paragraph{Limit model for vanilla-GD.}
Let $\alpha=1$ for the rest of this work.
In the case of vanilla ResNets, it is known (see e.g.~\cite{chizat2025hidden}) that when $L\to \infty$, with $M$ possibly jointly diverging as well, the training dynamics converge to that of the Mean ODE model, which is described as follows.
Let $\mu = {(\mu(\cdot|s))}_{s\in[0,1]} \subseteq \mathcal{P}(\R^p)$ be a family of measures on the parameter space with a suitable regularity in $s$. The Mean ODE model is the solution, for $x\in \R^D$, to
\begin{equation}\label{eq:mean_ode}
    h_\mu(0, x) = x, \qquad 
    \partial_s h_\mu(s, x) = \int \phi(z, h_\mu(s, x))\d \mu(z|s).
\end{equation} 
For computing gradients, we also consider the adjoint/backward equation, for $x,w\in \R^D$,
\begin{equation}\label{eq:backward_mean_ode}
    b_\mu(1, x,w) = w, \qquad 
    \partial_s b_\mu(s, x,w) = -\int {D_2\phi(z, h_\mu(s,x))}^{\top} b_\mu(s, x,w) \d \mu(z|s).
\end{equation}
The limit dynamics for vanilla-GD can be described as the evolution of a $\mathbb{R}^p$-valued stochastic process $(Z(s))_{s\in [0,1]}$ indexed by $s\in [0,1]$ as follows. Take $\xi_0 \sim \mu_0$ and initialize $Z_0(s) = \xi_0$ for all $s\in[0,1]$. Then, for $k\geq 0$, set
\[Z_{k+1}(s) = Z_k(s) - \tau \E_{\pi}[\mathfrak{u}(Z_k(s), h_{\mu_k}(s, x), b_{\mu_k}(s, x, \bar{w}_{x,y}))], \qquad \forall s\in [0,1]\]
where \(\bar{w}_{x,y} = \nabla_1 \mathrm{loss}(h_{\mu_k}(1, x), y)\) and where $\mu_k(\cdot|s) = \mathrm{Law}(Z_k(s))$ for all $s\in[0,1]$.

\paragraph{Limit model for GD-Dropout.} In the case of GD-dropout, we show that the limit dynamics is given by the same (deterministic) forward/backward system~\eqref{eq:mean_ode}-\eqref{eq:backward_mean_ode}, but with the infinite-dimensional version of the RaM update rule. The initialization $Z_0$ is the same as above.
Let $\zeta_0,\zeta_1,\dots \overset{iid}{\sim} \nu_\zeta$ 
and define recursively, for $s\in[0,1]$,
\begin{equation}\label{eq:limit_update_rule_ram}
  Z_{k+1}(s) = Z_k(s) - \tau \E_{\pi}[\tilde{\mathfrak{u}}(Z_k(s), h_{\mu_k}(s, x), b_{\mu_k}(s, x, \bar{w}_{x,y}))](1+\zeta_k).
\end{equation}
We note that the choice of dependencies of the mask across $s$ is arbitrary because $h_{\mu_k}$ only depends on the marginals $ \mathrm{Law}(Z_k(s))$. Here, we use the same mask $\zeta_k$ for all depths $s\in[0,1]$ as this simplifies some parts of the proof (namely, we can directly exploit the regularity of $s\mapsto Z_k(s)$, instead of that of $s\mapsto \mu_k(\cdot|s)$ which is less convenient to handle, see Appendix~\ref{app:properties_mean_ode}).

Consider the following coupling between GD-Dropout and the limit system (a similar coupling can be defined for GD-RaM).
Let $(\hat{Z}_k^{j,\ell})$ be the iterates of the GD-Dropout dynamics~\eqref{eq:parameter_update_resnet_dropout} with masks $(\zeta^{j,\ell}_k)_{k,j,\ell}$ and let $((Z_k^{j,\ell})_k)_{j,\ell}$ denote $L\times M$ copies of the limit dynamics~\eqref{eq:limit_update_rule_ram}, such that $Z_0^{j,\ell}(s) = \hat{Z}_0^{j,\ell}, \forall s \in [0,1]$; and such that the $(j,\ell)$-th copy uses the masks $(\zeta^{j,\ell}_k)_k$. Note that the copies of the limit $((Z_k^{j,\ell})_k)_{j,\ell}$ are not necessarily independent, but they have the same \emph{independence structure} as the masks $((\zeta^{j,\ell}_k)_k)_{j,\ell}$.\footnote{In the sense that, if $((\zeta^{j,\ell}_k)_k)_{j,\ell}$ are independent across $j$ (resp.\ across $\ell$), then so are $((Z_k^{j,\ell})_k)_{j,\ell}$.}
We define the distance between the two dynamics in parameter space, forward pass and backward pass respectively defined, as
\[
    \hat{\Delta}_k^Z := \max_{\substack{j \in [1:M] \\ \ell \in [1:L]}} \|\hat{Z}_k^{j,\ell} - Z_k^{j,\ell}(s_{\ell-1})\|, \quad
    \hat{\Delta}_k^h := \max_{\substack{i \in [1:n] \\ \ell \in [0:L]}} \|\hat{h}_{k,i}^\ell - h_{k,i}(s_\ell)\|, \quad
    \hat{\Delta}_k^b := \max_{\substack{i \in [1:n] \\ \ell \in [0:L]}} \|\hat{b}_{k,i}^\ell - b_{k,i}(s_\ell)\|.
\]
\begin{theorem}\label{thm:cvg_to_limit}
  Under the assumptions of Theorem~\ref{thm:main}, we have that
  for all $k\geq 1$, there exist $c_1, c_2 > 0$ that only depend on $B$, $D$, $k\tau$ and
$B_\zeta$, such that for all $\delta\in(0,1)$ with probability at least $1 - \delta$, it holds:
\(
  \max_{k'\leq k} \max \{\hat \Delta_{k'}^Z, \hat \Delta_{k'}^h, \hat \Delta_{k'}^b\}
  \leq c_1 \left(\frac{1}{L}
    +  \beta\frac{1+\sqrt{\log(kn/\delta)}}{\sqrt{LM}}
  \right)
\)
provided that the right-hand side is smaller than $c_2$, where the factor $\beta$ is given by~\eqref{eq:beta}.

Moreover, under the same coupling construction, the same result holds \emph{mutatis mutandis} when replacing GD-Dropout by GD-RaM~\eqref{eq:parameter_update_resnet_ram}.
\end{theorem}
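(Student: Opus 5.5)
The plan is an induction on the training step $k'\le k$, following the stochastic-approximation strategy of~\cite{chizat2025hidden}. Errors are propagated through one forward pass, one backward pass and one update. Set $E_{k'} := \max\{\hat\Delta^Z_{k'},\hat\Delta^h_{k'},\hat\Delta^b_{k'}\}$. At $k'=0$ we have $\hat\Delta^Z_0=0$, because $Z^{j,\ell}_0(s)=\hat Z^{j,\ell}_0$ for all $s$.

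First, I would record a priori properties of the limit dynamics~\eqref{eq:limit_update_rule_ram}, using that $\|\zeta_k\|_\infty\le B_\zeta$ and that $\phi, D\phi$ are Lipschitz:
\begin{itemize}
\item uniform moment and subgaussian bounds on $Z_{k'}(s)$;
\item bounds on $h_{\mu_{k'}}$ and $b_{\mu_{k'}}$;
\item Lipschitz regularity of $s\mapsto Z_{k'}(s)$, with $\|Z_{k'}(s)-Z_{k'}(s')\|\le C|s-s'|$.
\end{itemize}
The last property follows by induction on $k'$: $Z_0$ is constant in $s$, and the update map $\tilde{\mathfrak u}$ is Lipschitz in $(z,h,b)$, while $h_\mu, b_\mu$ are Lipschitz in $s$. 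Using the same mask for all $s$ is what makes this pathwise regularity available. Similar a priori bounds hold for the finite ResNet iterates.

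Next, I would compare the forward pass $\hat h^\ell_{k',i}$ with $h_{k',i}(s_\ell)$. I write the difference as a discrete Grönwall recursion in $\ell$, with three sources of error.
\begin{itemize}
\item Replacing $\hat Z^{j,\ell}_{k'}$ by $Z^{j,\ell}_{k'}(s_{\ell-1})$ costs $O(E_{k'})$ after summing over $\ell$.
\item An Euler discretisation error of order $1/L$ comes from the regularity in $s$.
\item There is a fluctuation term
\[
\frac1{LM}\sum_{\ell'\le \ell}\sum_j\Big[(1+\zeta^{\ell',j}_{k'})\odot\phi\big(Z^{j,\ell'}_{k'}(s_{\ell'-1}),h(s_{\ell'-1})\big)-\int\phi\big(z,h(s_{\ell'-1})\big)\,\d\mu_{k'}(z|s_{\ell'-1})\Big],
\]
where $h(s_{\ell'-1})$ is deterministic.
\end{itemize}
The summands of the fluctuation term are centered, since $\zeta$ is centered and independent of the limit copies at step $k'$ (the masks $\zeta_{k'}$ are fresh). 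For independent masks the summands are independent and bounded, up to subgaussian tails, so a maximal Azuma/Hoeffding inequality over $\ell$ gives order $\sqrt{\log(n k/\delta)}/\sqrt{LM}$. The backward pass is handled in the same way, backward in $\ell$, with the same error terms. The update step then gives $\hat\Delta^Z_{k'+1}\le \hat\Delta^Z_{k'}+C\tau(1+B_\zeta)(\hat\Delta^Z_{k'}+\hat\Delta^h_{k'}+\hat\Delta^b_{k'})$ by Lipschitzness of $\tilde{\mathfrak u}$, with the $\max$ over $(j,\ell)$ inherited directly. Closing the induction gives $E_{k'}\le C^{k'}(\cdot)$, hence constants depending on $k\tau$. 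The smallness condition (right-hand side $\le c_2$) keeps iterates in the bounded region where the local Lipschitz constants apply. For RaM the forward and backward passes have no $\zeta$, so the same argument applies with simpler fluctuations.

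The main obstacle is the concentration step under the dependent mask structures, where $\beta$ appears. For width-shared masks, the $M$ summands at layer $\ell$ share one mask $\zeta^{\ell}$. I would therefore group by layer: the layer averages $\frac1M\sum_j$ are independent across $\ell$, but their mask part does not average over $j$. This gives fluctuations of order $1/\sqrt L=\sqrt M/\sqrt{LM}$, so $\beta=\sqrt M$.

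For depth-shared masks, the $L$ summands for fixed $j$ share $\zeta^j$, and the copies $Z^{j,\ell}$ are also correlated across $\ell$. Independence is then only across $j$. I would apply a maximal inequality in $j$ to the partial-sum processes $\ell\mapsto\frac1L\sum_{\ell'\le\ell}(\cdots)$, each of which is bounded. Taking the union over the $L$ prefixes, or a chaining argument over $\ell$, should give order $\sqrt{\log L}/\sqrt M=\sqrt{L\log L}/\sqrt{LM}$. I would first try Hoeffding for each fixed $\ell$ and then a union bound over $\ell\le L$, $i\le n$, $k'\le k$.
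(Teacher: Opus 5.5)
Your proposal is correct and follows the paper's proof essentially step by step. The shared ingredients are: a priori regularity and subgaussian bounds for the limit, with pathwise Lipschitz continuity in $s$ coming from the shared-in-$s$ mask (Lemmas~\ref{lem:propagation_of_regularity}--\ref{lem:propagation_of_subgaussianity}); an Euler/approximation/Monte-Carlo decomposition of the forward and backward errors around the deterministic limit $h_{k',i}, b_{k',i}$, centered thanks to the fresh masks (Lemma~\ref{lem:stoch_approx}); the three concentration regimes producing $\beta\in\{1,\sqrt M,\sqrt{L\log L}\}$, via a martingale maximal inequality, grouping by layer, and a union bound over prefixes respectively; and a discrete Gr\"onwall argument in $k$ with a bootstrap on the $2R$ ball.

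One small correction: plugging the pass bounds into your update inequality gives $\hat\Delta^Z_{k'+1}\le(1+C\tau)\hat\Delta^Z_{k'}+C\tau\,\varepsilon$. This yields a factor $e^{Ck\tau}$ rather than $C^{k'}$, and that is what actually makes the constants depend on $k\tau$ only.
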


The intuition behind the proof of Theorem~\ref{thm:cvg_to_limit} is the following. As in~\cite{chizat2025hidden}, we can interpret the forward pass (or the backward pass) through the ResNets as an ``Euler-Monte-Carlo'' discretization of the Mean ODE model that uses $M\times L$ ``approximate'' samples from the limit model (see Lemma~\ref{lem:stoch_approx}). In particular, in the large scale regime, the empirical averages across depth and width turn into expectations and, using the independence of the dropout masks, the variance they induce in the forward and backward passes asymptotically vanishes. Next, the subtlety in the proof consists in propagating this argument through the GD iterations, because the parameters are not independent anymore for $k\geq 1$, and the random masks used in the forward and backward pass are also not independent. These correlations are controlled via propagation of chaos arguments and high-probability concentration bounds.
The full proof is given in Appendix~\ref{app:stochastic_approximation}.

\subsection{The case of large embedding dimension (formally)}\label{sec:case_of_large_D}

In the previous sections, we tracked the depth $L$ and hidden width $M$ in the error bounds, but we did not track the dependencies in $D$. In typical architectures, $M$ and $D$ grow together, so it is important to understand whether our conclusions also hold as $D$ grows. 

In this section, we focus on the case of 2LP blocks (i.e. $\phi = \phi_{\mathrm{mlp}}$) and track the dependency of the bounds in $D$. For the sake of brevity, we do not provide complete proofs in this section and remain at a formal level. The results could be made rigorous following the analysis in \cite[Theorem 3]{chizat2025hidden}.

\paragraph{Embedded ResNets and their limit}
To allow for varying embedding dimension $D$, we add (potentially trainable) embedding/unembedding matrices $\hWin{}\in \R^{D\times \dimin}$, $\hWout{}\in \R^{D\times \dimout}$ to the ResNet architecture, initialized with a fixed variance independent of $L,M,D$. For the equivalence result to hold, we suppose that \emph{no dropout is applied in the first and last layers}. The forward pass reads, for an input $x\in \R^{\dimin}$:
\begin{equation}\label{eq:masked_resnet_with_embedding}
  \begin{cases}
    \hat{h}_\theta^{\zeta}(0, x) = \hWin{} x\\
    \hat{h}_\theta^{\zeta}(\ell, x) = \hat{h}_\theta^{\zeta}(\ell-1, x) + \frac{1}{LM}\sum_{j=1}^M (1+ \zeta^{\ell, j})\odot (V^{j,\ell}\rho((U^{j,\ell})\top \hat{h}_\theta^{\zeta}(\ell-1, x)/D))\\
    \hat{y}_\theta^{\zeta}(x) = \frac{1}{D} (\hWout{})^\top\hat{h}_\theta^{\zeta}(L, x)
\end{cases}
\end{equation}
where $\zeta = {(\zeta^{\ell, j, d})}_{\ell,j,d} \in \R^{L\times M\times D}$ is the \emph{(internal) dropout mask}.
In the spirit of our analysis, the limit \emph{dropout Mean ODE} model approached by GD-Dropout and GD-RaM is given by
\begin{equation}\label{eq:dropout_mean_ode_embedding}
h_\Theta(0, x) = \hWin{} x,\quad
    \partial_s h_\Theta(s, x) = \E\big[\rho(\langle U(s), h_\Theta(s, x) \rangle_{\ndbar})V(s)|\hWin{},\hWout{}],
    \end{equation}
    with \(y_\Theta(x) = \frac{1}{D} (\hWout{})^{\top}  h_\Theta(1, x)\), and parametrized by $\Theta = ((U(s), V(s))_{s\in[0,1]}, \hWin{}, \hWout{})$.

\begin{rem}
     In practice, dropout is applied also on the (un)embedding layers. In that case, a dropout mask $\eta = (\eta^{\mathrm{in}}, \eta^{\mathrm{out}})\in\R^{2D}$ has to be applied to the (un)embedding in the forward pass of the network, modifying~\eqref{eq:masked_resnet_with_embedding} as \(\hat{h}_\theta^{\zeta, \eta}(0, x) = (1+\eta^{\mathrm{in}})\odot\hWin{} x\) and \(\hat{y}_\theta^{\zeta, \eta}(x) = \frac{1}{D} (\hWout{})^\top((1+\eta^{\mathrm{out}})\odot\hat{h}_\theta^{\zeta, \eta}(L, x))\).
    These embedding masks $\eta^{\mathrm{in}/\mathrm{out}}$ are unaffected by the $M,L\to\infty$ asymptotics, and remain present in the limit model. Therefore, GD-Dropout and GD-RaM (as defined above) are not asymptotically equivalent. Instead, GD-Dropout is asymptotically equivalent to a modification of GD-RaM which uses the masks $\eta$ in the forward pass.
\end{rem}

\paragraph{Error bound with dependency in $D$.} Following the arguments presented in~\cite[Theorem 3]{chizat2025hidden},  we make the following formal claim which refines Theorem~\ref{thm:main} by including the $D$ dependency.

Assume that the initial parameters  $U$ and $V$ are iid, centered, with standard deviations $\sigma_U, \sigma_V = O(\sqrt{D})$ (and subgaussian variance proxies $O(D)$) and that their LRs $\tau$ (as in~\eqref{eq:parameter_update_resnet_dropout}) are multiplied by $D$. Assume that the activation function $\rho$ is smooth with $\rho'$ and $\rho''$ bounded, that the masks are bounded in $\ell_\infty$ by $B_\zeta = O(1)$, and that the embeddings are suitably normalized\footnote{Namely, for a constant $B=O(1)$, $\max_i\|\hWin{}x_i\|_{\mathrm{rms}} \leq B$ and $h\mapsto \hWout{}\nabla\mathrm{loss}(D^{-1}\hWout{\top}h, y_i)$ is $B$-Lipschitz in RMS norm for all $i\in[1:n]$. The analysis in~\cite{chizat2025hidden} also requires to introduce clipping functions in the dynamics. We do not describe them here as we remain at a formal level.}. Assume $\max\{\log(L), D\}\leq BM$. 

Then there exists $c_1, c_2>0$ independent of $D$, such that for $\delta>e^{-M}$, with probability at least $1-\delta$ (conditional on the potential randomness of $\hWin{}, \hWout{}$) it holds
\begin{equation}\label{eq:D-dependence}
  \max_{k'\leq k} \max_{\substack{i \in [1:n] \\ \ell \in [0:L]}} \frac{1}{\sqrt{D}}\| \hat{h}_{k',i}^{\ell} - h_{k',i}(s_\ell) \|_{2}
  \leq c_1 \left(\frac{1}{L}
    + \beta \frac{\sqrt{D} + \log(n/\delta)}{\sqrt{LM}}
  \right)
\end{equation}
provided that the right-hand side is smaller than $c_2$. 
The multiplier $\beta$ is defined by~\eqref{eq:beta}.
\begin{rem}
    Notably, when the masks are either depth-shared or width-shared, the bound~\eqref{eq:D-dependence} may not vanish for typical network shapes. For instance, when the mask is width-shared then $\beta=\sqrt{M}$ and the second term is of order $\Theta(\sqrt{D}/\sqrt{L})$, which rarely vanishes in practice.
\end{rem}

\section{Numerical experiments}\label{sec:numerics}
The code used to reproduce the experiments, implemented in JAX~\cite{jax2018github}, is available at
\url{https://github.com/xavimaass/dropout_and_ram_2026}.
The experiments were run on a single NVIDIA V100 GPU.

To illustrate Theorem~\ref{thm:main}, we train a ResNet with $\mathrm{tanh}$ activation, with fixed embedding/unembedding matrices and with the dropout masks being applied only on the internal layers, as described in~\eqref{eq:masked_resnet_with_embedding}.

We train ResNets of various shapes $L$ and $M$ (for simplicity, we fix $D=10$) on the binary classification task of recognizing digits $4$ and $7$ on the MNIST dataset. A subset of $10,000$ images is selected, and they are split into 80\% training and 20\% test sets. Images are flattened to vectors of dimension $\dimin = 784$ and the labels are one-hot-encoded with $\dimout = 2$.
We use SGD for the cross-entropy loss, with learning rate $\tau = 0.4$, batch size $64$, for $k = 200$ training iterations and using keep rate $q = 0.5$. We perform $5$ repetitions of each experiment. For ease of implementation via~\eqref{eq:parameter_update_resnet_ram_implementable}, we perform this experiment only for unit dropout as defined after~\eqref{eq:bernoulli_distrib}, and considering the \emph{independent}, \emph{width-shared} and \emph{depth-shared} mask variants. For all variants, the random initializations and masks for RaM and Dropout are coupled as in Theorem~\ref{thm:main}.

In Figure~\ref{fig:forward_error_mnist} we show the RMS distance between the forward passes, $\Delta^h_k$, for RaM and Dropout for the different masking strategies, as functions of $M$, $L$ and also the ``effective width'' $ML$ for independent masks. For this plot, we trained multiple networks with shapes $(D,L,M) = (10, 2^{j}, 2^{j'})$ for $j,j'\in[3:10]$ and we show the results after $k=50$ iterations and fixing specific values of $M$ and $L$. A more detailed visualization is provided in Appendix~\ref{app:additional_numerics}. We observe that these dynamics indeed become asymptotically close as the network size increases. In particular, for the fully independent masks we can clearly observe the $O(1/\sqrt{ML})$ expected rate of convergence predicted by Theorem~\ref{thm:main}.
In terms of $M$ and $L$ separately, as expected, we observe convergence at a rate of $O(1/\sqrt{M})$ for the \emph{depth-shared} variant and of $O(1/\sqrt{L})$ for the \emph{width-shared} variant. Accordingly, the distance in the \emph{depth-shared} variant stagnates as $L$ increases, and similarly for the \emph{width-shared} masks and $M$.

\begin{figure}
        \centering
        \begin{subfigure}{0.635\linewidth}
        \centering
        \includegraphics[width=\linewidth]{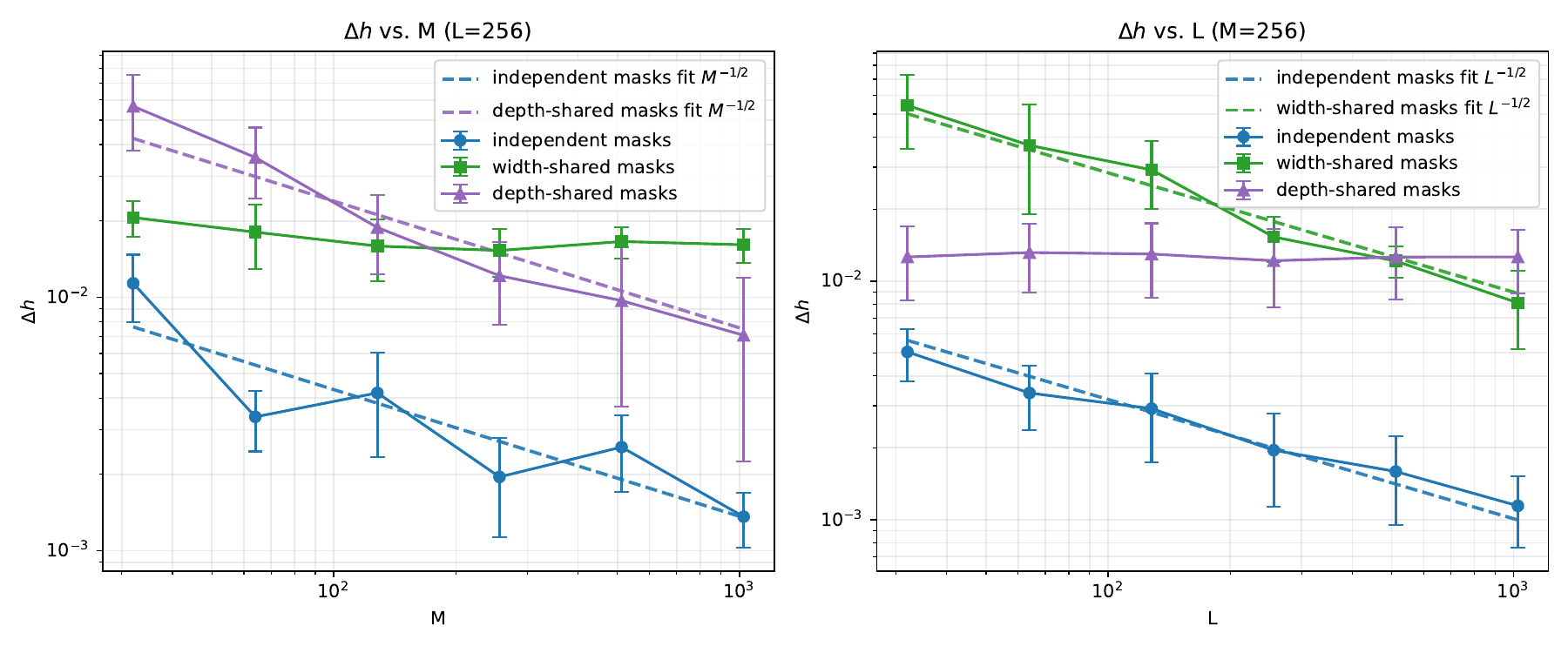}
        \caption{$\Delta^h_k$ against both $M$ and $L$ separately}
        \end{subfigure}%
        \begin{subfigure}{0.365\linewidth}
        \centering
        \includegraphics[width=\linewidth]{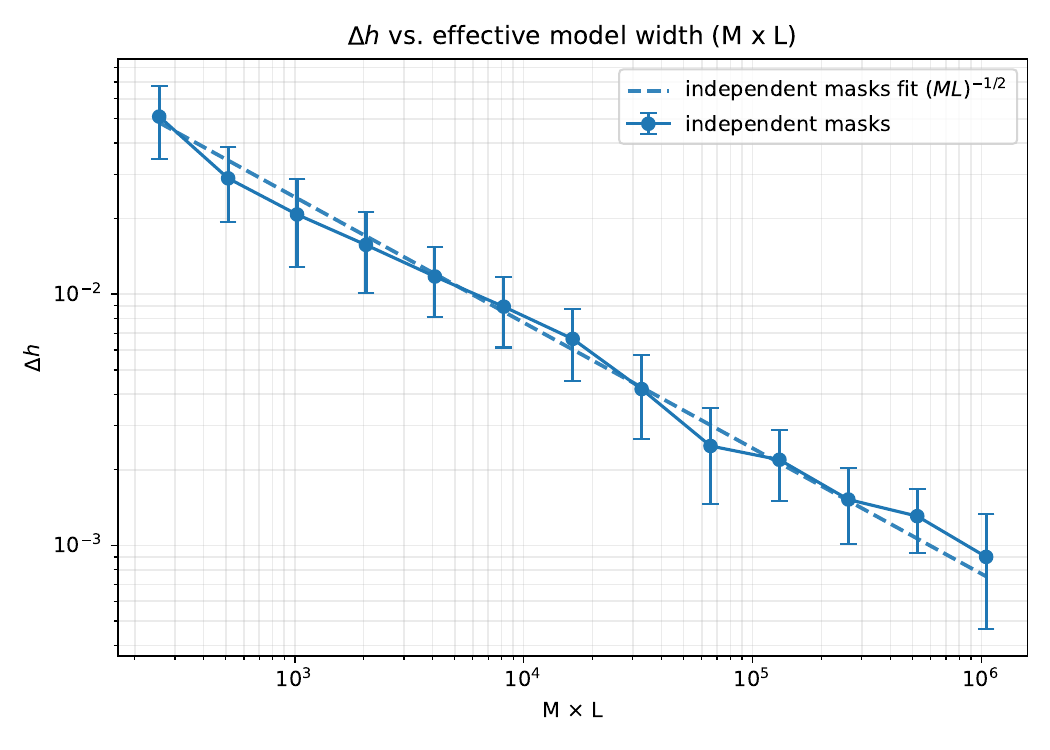}
        \caption{$\Delta^h_k$ vs $ML$}
        \end{subfigure}

        \caption{RMS error in the forward pass $\Delta^h_k$ between RaM and Dropout for the different masking strategies, as functions of $M$, $L$, and the ``effective width'' $ML$. Computed after $k=50$ GD iterations. We also fit the expected rates from Theorem~\ref{thm:main}. }
  \label{fig:forward_error_mnist}
\end{figure}

In Figure~\ref{fig:test_loss_mnist} we show that the behaviour of the test loss approximately coincides for SGD-Dropout and SGD-RaM for all the considered masking variants, as predicted by Theorems~\ref{thm:main} and~\ref{thm:collapse_dropout_variants}. They all also behave distinctly from vanilla-SGD. This behaviour becomes clearer as the network size increases in either axis: for large $M$ the \emph{depth-shared} variants collapse, and for large $L$ the \emph{width-shared} variants collapse.

\begin{figure}
        \centering
        \begin{subfigure}{0.5\linewidth}
        \centering
        \includegraphics[width=\linewidth]{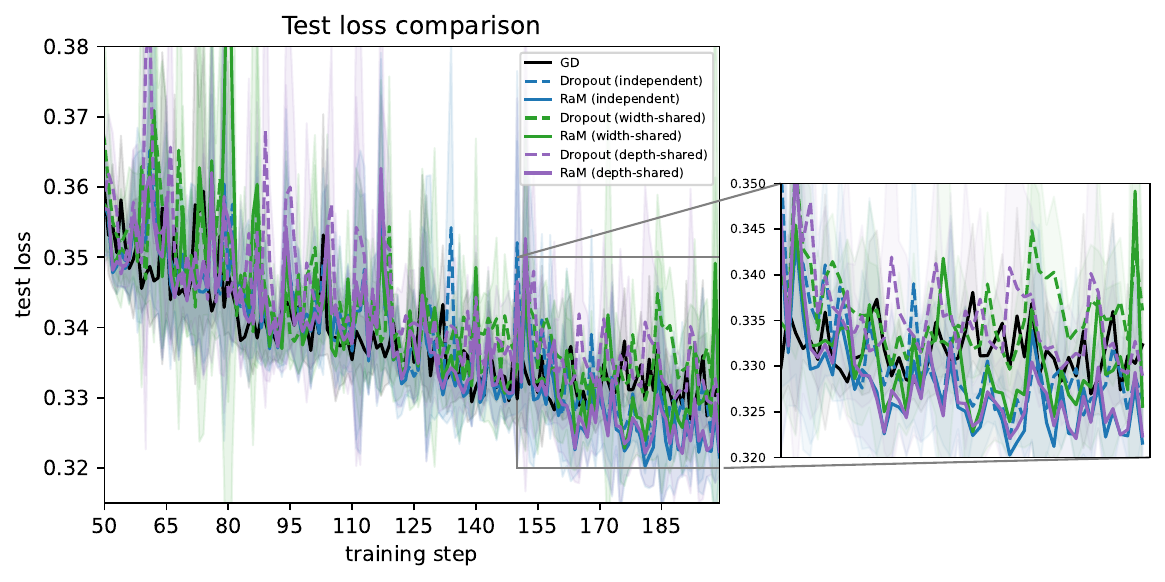}
        \caption{$L=M=64$}
        \end{subfigure}%
        \begin{subfigure}{0.5\linewidth}
        \centering
        \includegraphics[width=\textwidth]{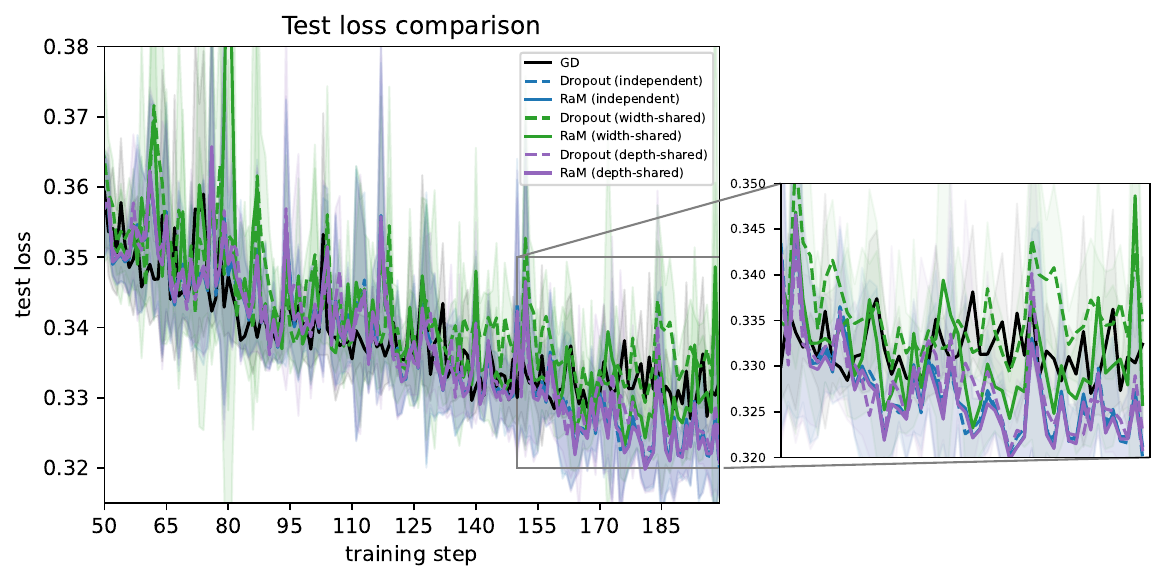}
        \caption{$L=64, M=1024$}
        \end{subfigure}\\
        \begin{subfigure}{0.5\linewidth}
        \centering
        \includegraphics[width=\linewidth]{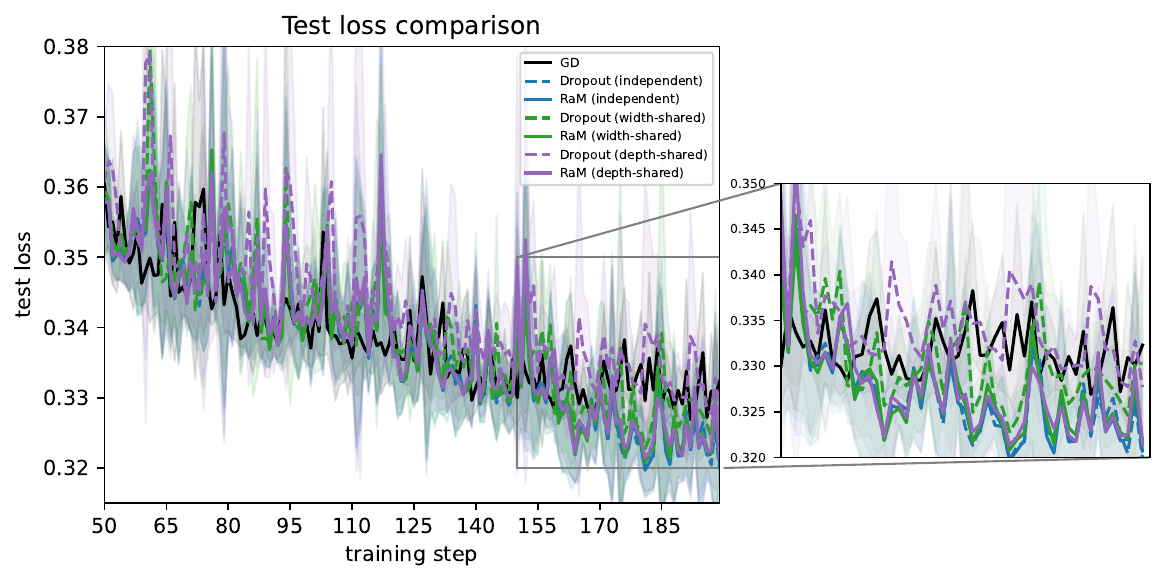}
        \caption{$L=1024, M=64$}
        \end{subfigure}%
        \begin{subfigure}{0.5\linewidth}
        \centering
        \includegraphics[width=\textwidth]{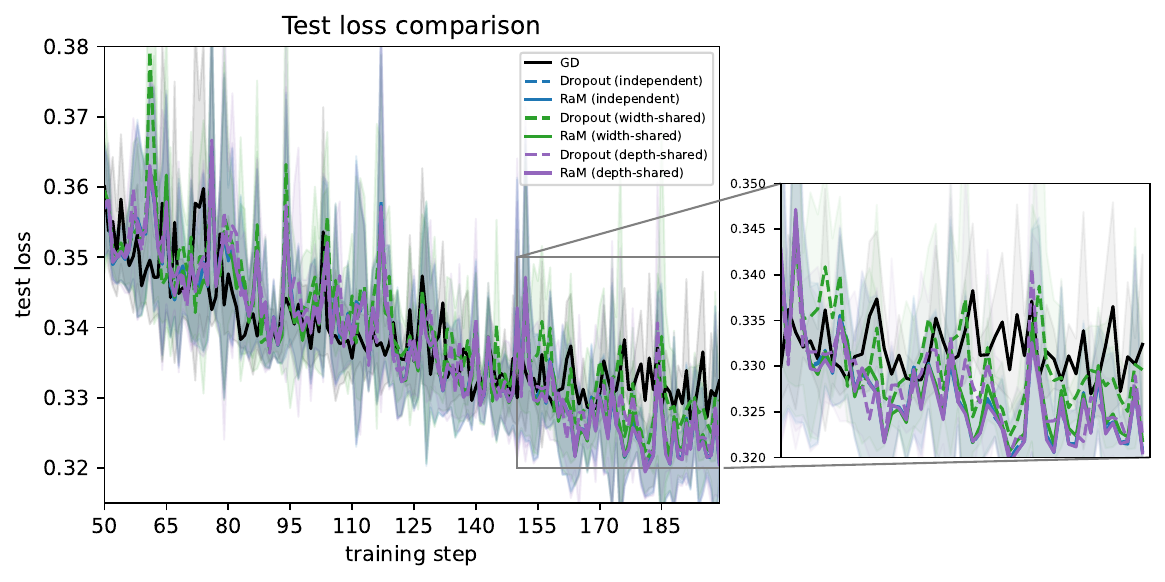}
        \caption{$L=M=1024$}
        \end{subfigure}\\
        
        \caption{Comparison of Dropout vs RaM vs vanilla SGD in test loss behaviour for the independent, width-shared and depth-shared masking variants, at different network sizes. The RaM and Dropout dynamics collapse, while SGD follows a distinct path.}
  \label{fig:test_loss_mnist}
\end{figure}

We stress that the goal of these experiments is mostly to illustrate the applicability and sharpness of our theory. A more extensive evaluation of the results in large-scale settings is a natural direction for future work.

\section{Conclusion}\label{sec:conclusion}

In this work, we have established the asymptotic equivalence of dropout and Random Gradient Masking (RaM) in the training dynamics of large-scale ResNets (Theorem~\ref{thm:main}). We have also shown that all dropout variants asymptotically behave identically (Theorem~\ref{thm:collapse_dropout_variants}). Our results rely on recent progress on the mathematical analysis of large ResNets.

This equivalence challenges the conventional understanding of dropout's mechanisms in
deep architectures, and our work leaves open a number of questions:
\begin{itemize}
\item In practical large-scale settings, can we replace dropout by RaM altogether? In fact, recent works~\cite{joo2026surprisingeffectivenessmaskingupdates, liu2025drop} imply that RaM compares favorably to dropout, suggesting that the finite-size effects of propagation noise and penalization of dropout could be undesirable.
\item Why does RaM improve the test performance? A partial explanation was proposed in~\cite{chizat2025phasediagramdropouttwolayer}, which showed that for wide two-layer neural networks, RaM makes it possible to increase the effective learning rate of active units while remaining in the stable region of GD. This property, which relies on the fact that the output of the model is an average of the contributions of the units, should also hold in the ResNet setting. However, this is far from a full answer to the question. In particular, it would be interesting to show a statistical benefit of RaM.
\end{itemize}

\newpage
\printbibliography

\newpage

\appendix

\section{Proofs of Theorem~\ref{thm:cvg_to_limit} and Theorem~\ref{thm:lazy_thm}}
We start this section by noting that the forward/backward equations for the \emph{dropout} ResNet~\eqref{eq:masked_resnet}-\eqref{eq:backward_masked_resnet} can be rewritten as:
\begin{equation}\label{eq:simplified_resnet_dropout}
    \begin{cases}
    \hat{h}_\theta^{\zeta}(\ell, x) = \hat{h}_\theta^{\zeta}(\ell-1, x) + \frac{1}{LM}\sum_{j=1}^M f^{h}(z^{j,l}, \hat{h}_\theta^{\zeta}(\ell-1, x){; \zeta^{\ell, j})}\\
    \hat{b}_\theta^{\zeta}(\ell-1, x, w) = \hat{b}_\theta^{\zeta}(\ell, x, w) + \frac{1}{LM}\sum_{j=1}^M f^{b}(z^{j,l}, \hat{h}_\theta^{\zeta}(\ell-1, x), \hat{b}_\theta^{\zeta}(\ell, x, w){; \zeta^{\ell, j})}
\end{cases}
\end{equation}
with $f^h: \R^p\times\R^D{\times\R^{D}} \to \R^{D}$ and $f^b: \R^p\times\R^D\times \R^D{\times\R^{D}} \to \R^{D}$ given by $f^h(z, h{; \xi}) = \phi(z, h) {\odot (\vec{1}+\xi)}$ and $f^b(z, h, b{; \xi}) = D_2\phi(z, h)^\top(b{\odot (\vec{1}+\xi)})$ respectively. {These are slight modifications of the functions appearing in~\cite{chizat2025hidden}.}

Analogously, the forward/backward Mean ODE~\eqref{eq:mean_ode}-\eqref{eq:backward_mean_ode} reads:
\begin{equation}\label{eq:simplified_mean_ode}
    \begin{cases}
    \partial_s h_\mu(s, x) = \int f^{h}(z, h_\mu(s, x){; \vec{0}}) \d\mu(z|s)\\
    \partial_s b_\mu(s, x, w) = -\int f^{b}(z, h_\mu(s, x), b_\mu(s, x, w){; \vec{0}}) \d\mu(z|s)
\end{cases}
\end{equation}
with the same functions $f^h$ and $f^b$.
Note that under Assumption~\ref{ass:regularity_assumptions}, these functions satisfy, {for any $\xi \in \R^{D}$,}
\[
\|f^h(z,h{;\xi}) - f^h(z',h'{;\xi})\|_2 = \|(\phi(z,h)-\phi(z',h')){\odot(1+\xi)}\|_2 \leq B{\|1+\xi\|_{\infty}}(\|z-z'\|_2 + \|h-h'\|_2),
\]
{where we used that $\|a\odot b\|_2 \leq \|a\|_2\|b\|_{\infty}$}. In particular, $f^h$ is $B{\|1+\xi\|_{\infty}}$-Lipschitz; and also
\begin{multline*}
    \|f^b(z,h, b{;\xi}) - f^b(z',h', b'{;\xi})\|_2 \\= \| D_2\phi(z, h)^{\top}((b- b'){\odot (1+\xi)}) + (D_2\phi(z, h) - D_2\phi(z', h'))^{\top} (b'{\odot(1+\xi)})\|_{2}\\ \leq B{\|1+\xi\|_{\infty}}\|b-b'\|_2 + B{\|1+\xi\|_{\infty}}\|b'\|_2(\|z-z'\|_2 + \|h-h'\|_2)
\end{multline*}
i.e. $f^b$ is $B{\|1+\xi\|_{\infty}}R$-Lipschitz whenever $b$ is restricted to a ball $\|b\|\leq R$.

Finally, it also holds that, for any centered random variable $\zeta$, $\E[f^h(z,h{;\zeta})] = f^h(z,h{;\vec{0}})$ and $\E[f^b(z,h, b{;\zeta})] = f^b(z,h, b{;\vec{0}})$.

\begin{rem}\label{rem:rewritten_update_map}
    {Note that the update equation for GD-Dropout in~\eqref{eq:parameter_update_resnet_dropout} can also be written as
\begin{equation*}
\hat{Z}_{k+1}^{j,\ell} = \hat{Z}_{k}^{j,\ell} - \frac{\tau}{\alpha}\E_{\pi}\big[\tilde{\mathfrak{u}}(\hat{Z}_{k}^{j,\ell}, \hat{h}_{\theta_k}^{\zeta}(\ell-1, x), \hat{b}_{\theta_k}^{\zeta}(\ell, x, w^{\zeta}_{x,y}); \zeta^{\ell, j}_k)\big],
\end{equation*}
where we abuse notation to write $\tilde{\mathfrak{u}}: \R^p\times\R^D\times\R^{D}\times\R^D \to \R^{p}$ for the function:
$\tilde{\mathfrak{u}}(z, h, b{; \xi}) = D_1\phi(z, h)^\top(b{\odot (\vec{1}+\xi)})$ (instead of $\tilde{\mathfrak{u}}(z, h, b) = D_1\phi(z, h)^\top\diag(b)$, as in the body of the paper). This will simplify the upcoming analysis and also applies to update rules~\eqref{eq:parameter_update_resnet_ram} and~\eqref{eq:limit_update_rule_ram}.
}

{
In particular, for fixed $\xi \in \R^D$, $\tilde{\mathfrak{u}}(\cdot; \xi)$ is $B{\|1+\xi\|_{\infty}}R$-Lipschitz whenever $b$ is restricted to a ball $\|b\|\leq R$. Indeed,
\begin{multline*}
    \|\tilde{\mathfrak{u}}(z_1, h_1, b_1;\xi)-\tilde{\mathfrak{u}}(z_2, h_2, b_2;\xi)\|_2 \leq \|{D_1\phi(z_1, h_1)}^{\top}(b_1\odot(1+\xi)) -{D_1\phi(z_2, h_2)}^{\top}(b_2\odot(1+\xi))\|_F\\
    \leq \|{D_1\phi(z_1, h_1)} -{D_1\phi(z_2, h_2)}\|_F\|b_1\odot(1+\xi)\|_2 + \|{D_1\phi(z_2, h_2)}\|_F\|(b_1-b_2)\odot(1+\xi)\|_2\\
    \leq B\|1+\xi\|_\infty\|b_1\|_2(\|z_1-z_2\| + \|h_1-h_2\|) + B \|1+\xi\|_{\infty}\|b_1-b_2\|,
\end{multline*}
from where the claim follows.
This computation also shows that $\tilde{\mathfrak{u}}(z, h, b) = D_1\phi(z, h)\diag(b)$ is such that $\|\tilde{\mathfrak{u}}(z, h, b)\|_{\infty\to2} \leq BR$; or, equivalently, $\xi \mapsto \tilde{\mathfrak{u}}(z, h, b;\xi)$ is $BR$-Lipschitz from $\Vert\cdot\Vert_\infty$ to $\Vert\cdot\Vert_2$.
}
\end{rem}
 
We will show that the noise from the forward and backward equations vanishes asymptotically.

\subsection{Properties of the Mean ODE RaM training dynamics}\label{app:properties_mean_ode}
We here restate some of the key lemmas from~\cite{chizat2025hidden} that will be relevant for our proof.

Consider a generic Mean ODE of the form
\begin{equation}\label{eq:generic_mean_ode}
    a(0) \in \R^D, \quad \dot{a}(s) = F(s, a(s)), \quad F(s,x) := \mathbb{E}[f(s,x,Z(s){;\vec{0}})]
\end{equation}
where $f : [0,1] \times \R^D \times \R^p{\times\R^{D}} \to \R^{D}$ and $(Z(s))_{s \in [0,1]}$ is an $\R^p$-valued stochastic process. The following standard regularity lemma is proved in~\cite[Lemma 5.1]{chizat2025hidden}.

\begin{lemma}[From~\cite{chizat2025hidden}]\label{lem:basic_regularity_mean_ode}
Let $\|\cdot\|$ be a norm on $\R^D$. Assume that $F$ is continuous in $s$ and $L_x$-Lipschitz in $x$ (in the $\|\cdot\|$ norm) for some $L_x \geq 1$. Let $B = \max_{s \in [0,1]} \|F(s,0)\|$ and $R := e^{L_x}(B + L_x\|a(0)\|)$. Then the mean ODE~\eqref{eq:generic_mean_ode} has a unique solution $a : [0,1] \to \R^d$ and it holds $\sup_{s \in [0,1]} \|a(s)\| \le R$ and $s \mapsto a(s)$ is $R$-Lipschitz continuous.
\end{lemma}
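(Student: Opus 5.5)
The statement is a standard Picard–Lindelöf/Grönwall result, and I would prove it in three steps: existence and uniqueness, the a priori bound, and the Lipschitz estimate.

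\emph{Existence and uniqueness.} Since $F$ is continuous in $s$ and $L_x$-Lipschitz in $x$ uniformly in $s\in[0,1]$, the map $\Gamma(a)(s) := a(0) + \int_0^s F(r,a(r))\,\d r$ sends $C([0,1];\R^D)$ to itself. I would show that $\Gamma$ is a contraction for the weighted sup-norm $\|a\|_\lambda := \sup_s e^{-\lambda s}\|a(s)\|$ with $\lambda = 2L_x$. Indeed, $\|\Gamma(a)(s)-\Gamma(a')(s)\| \le L_x\int_0^s e^{\lambda r}\,\d r\,\|a-a'\|_\lambda \le \tfrac{L_x}{\lambda}e^{\lambda s}\|a-a'\|_\lambda$, so $\Gamma$ contracts by a factor $1/2$. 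Banach's fixed point theorem then gives a unique continuous solution of the integral equation. This is equivalent to a unique $C^1$ solution of~\eqref{eq:generic_mean_ode}, because $s\mapsto F(s,a(s))$ is continuous.

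\emph{A priori bound.} I would use $\|F(s,x)\| \le \|F(s,0)\| + L_x\|x\| \le B + L_x\|x\|$. Writing $u(s)=\|a(s)\|$, this gives $u(s) \le \|a(0)\| + Bs + L_x\int_0^s u$. Grönwall's lemma then yields $u(s) \le (\|a(0)\| + B)e^{L_x s} \le e^{L_x}(B+\|a(0)\|)$. Since $L_x\ge 1$, we have $\|a(0)\| \le L_x\|a(0)\|$, so this is at most $e^{L_x}(B+L_x\|a(0)\|)=R$.

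\emph{Lipschitz continuity.} I need $\|\dot a(s)\| \le B + L_x R \le R$. This is where I expect the only real difficulty, since $B+L_xR\le R$ cannot hold in general, so the Lipschitz constant has to be checked more carefully. I would not go through $R$ at all. Instead I would bound $\|\dot a(s)\| \le B + L_x(\|a(0)\|+B)e^{L_x s}$ directly and compare the result to $R$. A cleaner route is to bound the derivative of $v(s)=(B+L_x\|a(s)\|)$-type quantities: $B+L_x u(s) \le (B+L_x\|a(0)\|)e^{L_x s}$, which follows from Grönwall applied to $w = B + L_x u$. Indeed, $w(s) \le B + L_x\|a(0)\| + L_x\int_0^s w$, hence $w(s)\le (B+L_x\|a(0)\|)e^{L_x s}\le R$.

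This gives both conclusions at once. First, $\|\dot a(s)\| \le w(s) \le R$, so $a$ is $R$-Lipschitz. Second, $\|a(s)\| \le \|a(0)\| + \int_0^s w \le \|a(0)\| + (B+L_x\|a(0)\|)(e^{L_x}-1)/L_x$, which is at most $R$ because $\|a(0)\|\le L_x\|a(0)\|$ and $L_x\ge1$. So I would adopt this $w$-based Grönwall argument as the main step for both the sup bound and the Lipschitz bound.
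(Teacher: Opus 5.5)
Your proof is correct: Picard--Lindel\"of via a Bielecki-norm contraction, then Gr\"onwall applied to $w = B + L_x\|a\|$. This gives $\|\dot a(s)\|\le w(s)\le (B+L_x\|a(0)\|)e^{L_x s}\le R$, and also $\|a(s)\|\le w(s)/L_x\le R$ because $B\ge 0$ and $L_x\ge 1$. Joint continuity of $F$, which you need for $s\mapsto F(s,a(s))$ to be continuous, follows from continuity in $s$ together with the uniform Lipschitz bound in $x$.

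The paper does not reprove this lemma; it imports it from \cite[Lemma 5.1]{chizat2025hidden}. Your $w$-based Gr\"onwall step is exactly what produces the stated constant $R$. As you noticed yourself, the first detour, which bounds $\|\dot a\|$ through the cruder estimate $(\|a(0)\|+B)e^{L_x s}$, cannot reach $R$ and can simply be dropped.
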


To make an analog of~\cite[Lemma 5.2]{chizat2025hidden}, we consider the following statement:
\begin{lemma}[Propagation of regularity]\label{lem:propagation_of_regularity}
Let Assumption~\ref{ass:regularity_assumptions} hold with $B > 0$ and consider the limit dynamics~\eqref{eq:limit_update_rule_ram} with $\mu_0 \in \mathcal{P}(\R^p)$ with first moment bounded by $B$ {and $\nu_\zeta \in \mathcal{P}(\R^D)$ bounded by $B_\zeta$}. Then, there exists $c$ that only depends on $B$ {and $B_\zeta$} such that for all $k \geq 0$, $i \in [1:n]$,

\begin{enumerate}
    \item $Z_k$, $h_{k,i}$ and $b_{k,i}$ are uniquely well-defined;
    \item the functions $s \mapsto h_{k,i}(s)$ and $s \mapsto b_{k,i}(s)$ are $c$-Lipschitz and bounded in norm by $c$;
    \item the function $s \mapsto Z_k(s)$ is $(e^{ck\tau} - 1)$-Lipschitz (surely).
\end{enumerate}
\end{lemma}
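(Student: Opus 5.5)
We argue by induction on $k$, treating the three claims together. Fix $k$ and assume that $Z_k$ is well defined, that $s\mapsto Z_k(s)$ is surely $(e^{ck\tau}-1)$-Lipschitz, and that $\E\|Z_k(s)\|\le C_k$ uniformly in $s$.

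\textbf{Step 1: the forward pass is well posed and regular.} We apply Lemma~\ref{lem:basic_regularity_mean_ode} to the forward equation~\eqref{eq:simplified_mean_ode} with $F(s,h)=\E[\phi(Z_k(s),h)]$. By Assumption~\ref{ass:regularity_assumptions}, $F$ is $B$-Lipschitz in $h$. We also have $\|F(s,0)\|\le B+B\,\E\|Z_k(s)\|$, which is finite. Continuity of $F$ in $s$ follows from the sure Lipschitz continuity of $s\mapsto Z_k(s)$, which gives $|F(s,h)-F(s',h)|\le B(e^{ck\tau}-1)|s-s'|$.

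\textbf{Step 2: the backward pass.} The backward ODE is linear in $b$, with coefficient matrix $\E[D_2\phi(Z_k(s),h_{k,i}(s))]^\top$. This matrix has norm at most $B$, since $\phi$ is $B$-Lipschitz. The terminal condition satisfies $\|\bar w\|\le B+B\|h_{k,i}(1)\|$. Reversing time and applying Lemma~\ref{lem:basic_regularity_mean_ode} again shows that $b_{k,i}$ is unique, bounded and Lipschitz.

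\textbf{Step 3: uniform constants.} The constants obtained in Steps~1--2 must not depend on $k$. Growth of the first moment of $Z_k$ would enter $\|F(s,0)\|$, so we control it. We expect this to be the main obstacle: the claim is that the constant $c$ is independent of $k$ (only $k\tau$ may appear, and only in the Lipschitz constant of $Z_k$).

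To handle it, we use that the update is bounded. By Remark~\ref{rem:rewritten_update_map}, $\|\tilde{\mathfrak{u}}(z,h,b;\xi)\|\le B(1+B_\zeta)\|b\|$, because $\|D_1\phi\|\le B$. We then argue as in~\cite[Lemma 5.2]{chizat2025hidden}. Either one tracks $\E\|Z_k(s)-Z_0(s)\|\le \tau\sum_{k'<k}B(1+B_\zeta)R_{k'}$, which makes $c$ depend on $k\tau$, consistent with how $c_1$ is used later. Alternatively, one notes that the forward drift bound $\|F(s,0)\|\le \|\phi(0,0)\|+B\,\E\|Z_k(s)\|$ is inherently moment-dependent. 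Since the downstream theorems only need constants depending on $B$, $B_\zeta$ and $k\tau$, we accept dependence of $c$ on $k\tau$ if a truly uniform bound fails.

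\textbf{Step 4: the new iterate.} The next iterate $Z_{k+1}(s)$ is an explicit function of $Z_k(s)$, of the deterministic $h_{k,i}(s),b_{k,i}(s)$, and of the single mask $\zeta_k$, so it is well defined. Its Lipschitz constant in $s$ follows from the Lipschitz property of $\tilde{\mathfrak{u}}(\cdot;\xi)$ in Remark~\ref{rem:rewritten_update_map}, with $\|1+\zeta_k\|_\infty\le 1+B_\zeta$ and $\|b\|\le R$:
\[
\mathrm{Lip}(Z_{k+1})\le \mathrm{Lip}(Z_k)+\tau B(1+B_\zeta)R\big(\mathrm{Lip}(Z_k)+c+c\big)+\tau B(1+B_\zeta)c .
\]
Here we use that the same mask $\zeta_k$ is used for every $s$. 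This is exactly why the lemma can say ``surely'': no randomness varies with $s$.

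Write $\ell_k=\mathrm{Lip}(Z_k)$. Choosing $c\ge 3B(1+B_\zeta)R$ and using $\ell_0=0$, we get $\ell_{k+1}+1\le(1+c\tau)(\ell_k+1)$. Hence $\ell_k\le (1+c\tau)^k-1\le e^{ck\tau}-1$, which completes the induction.
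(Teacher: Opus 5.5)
Your argument is essentially the paper's: induction on $k$, Lemma~\ref{lem:basic_regularity_mean_ode} applied first to the forward and then to the (linear, time-reversed) backward equation, and the Lipschitz bound of Remark~\ref{rem:rewritten_update_map} with the $s$-independent mask $\zeta_k$, which gives $\Gamma_{k+1}\le(1+c'\tau)\Gamma_k+c'\tau$ and hence $\Gamma_k\le e^{c'k\tau}-1$ by discrete Gr\"onwall (keep $c'$ distinct from the Lipschitz constant $c$ of $h_{k,i},b_{k,i}$, since $c'$ must be at least $B(1+B_\zeta)(2R+1)\max\{c,1\}$). Your Step~3 worry is justified and you should commit to it rather than hedge: the paper's proof never tracks $\E\|Z_k(s)\|$, and in fact no $k$-independent $c$ exists (with $\phi(z,h)=z$ and $\mathrm{loss}(y,y')=-\langle y,y'\rangle$ one gets $b_{k,i}\equiv-y_i$ and $h_{k,i}(s)=x_i+s(\E Z_0+k\tau\bar y)$, where $\bar y=\frac1n\sum_i y_i$), so the correct statement has $c=c(B,B_\zeta,k\tau)$, obtained from the sure bound $\|Z_{k+1}(s)-Z_k(s)\|\le\tau B(1+B_\zeta)R_k$ with $R_k:=\max_i\sup_{s}\|b_{k,i}(s)\|$ plus Gr\"onwall, which is all the later theorems need.
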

\begin{proof}
As in~\cite[Lemma 5.2]{chizat2025hidden}, we prove the three claims by recursion over $k$. 

\paragraph{Base case.} For $k = 0$, $Z_0$ is constant by definition. Then, for $i \in [1:n]$, $s \mapsto h_{0,i}(s)$ is well-defined and Lipschitz by Lemma~\ref{lem:basic_regularity_mean_ode} applied to the function $f_0^h(s,a,z{;\xi}) = \phi(z,a){\odot(1+\xi)}$ which is Lipschitz in $a$. In turn $s \mapsto b_{0,i}(s)$ is well-defined and Lipschitz by Lemma~\ref{lem:basic_regularity_mean_ode} applied to the function $f_0^b(s,a,z{;\xi}) = (D_2\phi(z,h_0(s, x_i)))^\top(a{\odot(1+\xi)})$ which is Lipschitz in $a$.

\paragraph{Inductive step.} By recursion, now assume that the claims hold at $k \in \mathbb{N}$ and that $s \mapsto Z_k(s)$ is $\Gamma_k$-Lipschitz. 
Following Remark~\ref{rem:rewritten_update_map}, we write
\begin{multline*}
    \|Z_{k+1}(s) - Z_{k+1}(s')\| \leq \|Z_k(s) - Z_k(s')\| \\+ \tau \max_{i \in [1:n]} \|\tilde{\mathfrak{u}}(Z_k(s), h_k(s, x_i), b_k(s, x_i){; \zeta_k}) - \tilde{\mathfrak{u}}(Z_k(s'), h_k(s', x_i), b_k(s', x_i){; \zeta_k})\|
\end{multline*}
{From Remark~\ref{rem:rewritten_update_map}, we know that $\tilde{\mathfrak{u}}$ is Lipschitz when restricted to $b$ in a ball, so that, using the inductive hypothesis, there exists $c > 0$ that depends only on $B$ (and that may change from line to line) such that}
\begin{multline*}
    {\|Z_{k+1}(s) - Z_{k+1}(s')\| \leq \Gamma_k|s-s'|} \\{+ c\tau\|1+\zeta_k\|_\infty (\Gamma_k|s-s'| + \max_{i \in [1:n]} \|h_k(s, x_i) - h_k(s', x_i)\| + \|b_k(s, x_i) - b_k(s', x_i)\|),}
\end{multline*}
{from where, using the boundedness of $\zeta_k$ by $B_\zeta$, we get
\(\|Z_{k+1}(s) - Z_{k+1}(s')\| \leq \Big(\Gamma_k + \tau c(\Gamma_k + 1)\Big) |s - s'|\).}
In particular, $s \mapsto Z_{k+1}(s)$ is $\Gamma_{k+1}$-Lipschitz with 
\[\Gamma_{k+1} \leq \Gamma_k(1 + c\tau) + \tau c.\] We can then apply Lemma~\ref{lem:basic_regularity_mean_ode} to obtain the well-posedness and Lipschitz regularity 
of $s \mapsto h_{k+1,i}(s)$ and of $s \mapsto b_{k+1,i}(s)$, in this order, $\forall i \in [1:n]$. With this, we conclude the recursion. 

\paragraph{Expression for $\Gamma_k$.}
The expression of $\Gamma_k$ (which does not play a role in what follows) comes from the following recursive argument: since $\Gamma_0 = 0$ and $\Gamma_{k+1} \leq \Gamma_k(1 + c\tau) + c\tau$ by the discrete Grönwall lemma\footnote{i.e. if $u_{k+1} \leq (1+\alpha)u_k + \beta$, then $u_k \leq e^{\alpha k} u_0 + \frac{\beta}{\alpha}(e^{\alpha k}-1)$. }, we get, since $\Gamma_0 = 0$,
$\Gamma_k \leq e^{ck\tau} - 1$.
\end{proof}

We then prove propagation of subgaussian tails as in Lemma~\cite[Lemma 5.3]{chizat2025hidden}. {For this, consider the following standard property of the subgaussian norm (see e.g.~\cite[Appendix A]{chizat2025hidden} for a proof).\footnote{{Recall that, for a random variable $Z\in \R$, its subgaussian norm is $\|Z\|_{\psi_2} = \inf\{K\geq0: \E[\exp(Z^2/K^2)] \leq 2\}$ and, for a random vector $X\in \R^p$, $\|X\|_{\psi_2} = \sup_{\|v\|_2=1} \|\langle v,X\rangle\|_{\psi_2}$. The \textit{variance proxy} seminorm is such that for absolute constants $c, C>0$, $c\|X-\E[X]\|_{\psi_2}\leq\|X\|_{vp}\leq C\|X-\E[X]\|_{\psi_2}$. For further properties of these objects and a general reference on the topic, see~\cite{vershynin2018highdimensionalproba}.}} 
\begin{lemma}\label{lem:subgaussian_via_lipschitz}
    If $X$ is a subgaussian vector in $\R^p$ and $f:\R^p\to \R^D$ is a $L$-Lipschitz function, then
    \[\|f(X)-\E[f(X)]\|_{\psi_2} \leq cL\sqrt{p}\|X - \E[X]\|_{\psi_2}, \]
    for an absolute constant $c>0$. Equivalently, \(\|f(X)\|_{vp} \leq cL\sqrt{p}\|X\|_{vp}\), where $\|\cdot\|_{vp}$ is the variance-proxy seminorm.
\end{lemma}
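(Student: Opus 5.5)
We reduce the vector-valued statement to the scalar definition of the subgaussian norm. Recall that $\|Y\|_{\psi_2}=\sup_{\|v\|_2=1}\|\langle v,Y\rangle\|_{\psi_2}$. Set $Y=f(X)-\E[f(X)]$ and fix a unit vector $v\in\R^D$. The map $g_v:=\langle v,f(\cdot)\rangle:\R^p\to\R$ is $L$-Lipschitz, since $|\langle v,f(x)-f(x')\rangle|\le \|f(x)-f(x')\|_2\le L\|x-x'\|_2$. So it suffices to prove the scalar bound
\[
\|g(X)-\E g(X)\|_{\psi_2}\le cL\sqrt{p}\,\|X-\E X\|_{\psi_2}
\]
for every $L$-Lipschitz $g:\R^p\to\R$, with $c$ absolute, and then take the supremum over $v$. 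The variance-proxy formulation follows immediately from the two-sided equivalence $c\|X-\E X\|_{\psi_2}\le\|X\|_{vp}\le C\|X-\E X\|_{\psi_2}$, up to changing the absolute constant.

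For the scalar bound, write $K=\|X-\E X\|_{\psi_2}$ and $\bar X=\E X$. The key steps are as follows.

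\begin{enumerate}
\item Control the norm of the centered vector. Each coordinate satisfies $\|X_i-\bar X_i\|_{\psi_2}\le K$, so $\|(X_i-\bar X_i)^2\|_{\psi_1}\le K^2$. Summing, $\|\,\|X-\bar X\|_2^2\,\|_{\psi_1}\le pK^2$, and therefore $\|\,\|X-\bar X\|_2\,\|_{\psi_2}\le \sqrt{p}\,K$. This step uses only the triangle inequality for $\psi_1$ and no independence between coordinates, which is why the $\sqrt p$ factor appears.
\item Transfer this to $g$. By the Lipschitz property, $|g(X)-g(\bar X)|\le L\|X-\bar X\|_2$, so $\|g(X)-g(\bar X)\|_{\psi_2}\le L\sqrt{p}\,K$.
\item Recenter. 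Centering costs at most an absolute factor: $\|W-\E W\|_{\psi_2}\le C\|W\|_{\psi_2}$, applied with $W=g(X)-g(\bar X)$, whose mean is $\E g(X)-g(\bar X)$. This gives $\|g(X)-\E g(X)\|_{\psi_2}\le CL\sqrt p\,K$.
\end{enumerate}

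No step is a real obstacle. The only point needing care is step 1, where one must avoid assuming independent coordinates. Passing through $\psi_1$ of the squared norm, or simply bounding $\E\exp(\|X-\bar X\|_2^2/(pK^2))$ via Jensen/Hölder over the $p$ coordinates, handles this cleanly.

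A dimension-free bound without $\sqrt p$ would require concentration of measure (for instance, a Gaussian-like log-Sobolev property), which general subgaussian vectors lack. The $\sqrt p$ loss is therefore the price of this elementary argument, and it is harmless here since $p$ is fixed.
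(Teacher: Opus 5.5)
Your proof is correct: the reduction to scalar marginals $\langle v,f(\cdot)\rangle$, the bound $\bigl\|\,\|X-\E X\|_2\,\bigr\|_{\psi_2}\le\sqrt{p}\,\|X-\E X\|_{\psi_2}$ (via $\|Y_i^2\|_{\psi_1}=\|Y_i\|_{\psi_2}^2$ and the $\psi_1$ triangle inequality, or via Jensen, with no independence of coordinates needed), the pointwise Lipschitz comparison with $g(\E X)$, and the recentering step all go through with absolute constants. The paper gives no proof of its own and defers to \cite[Appendix A]{chizat2025hidden}, so your argument serves as a standard self-contained proof; your remark that the $\sqrt{p}$ factor cannot be removed in general is also right (take $X=R\theta$ with $\theta$ uniform on the unit sphere and $R\in\{0,\sqrt{p}\}$ an independent fair coin: $X$ has $O(1)$ subgaussian norm, but $\|X\|_2=R$ fluctuates at scale $\sqrt{p}$).
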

}
{We also consider the following fact: 
\begin{lemma}\label{lem:subgaussian_and_bounded_independent}
    If $X$ is a subgaussian vector in $\R^p$ and $Y$ is a random vector in $\R^{\tilde{p}}$ bounded by $B>0$ and independent of $X$. Let $f:\R^p\times\R^{\tilde{p}}\to \R^D$ be a (measurable) function. Then,
    \(\|f(X, Y)\|_{\psi_2} \leq \sup_{\|y\|\leq B}\|f(X,y)\|_{\psi_2}\). In terms of the variance-proxy\footnote{{Note that in general it is false that $\|f(X,Y)\|_{vp} \leq \sup_y \|f(X,y)\|_{vp}$ (unless $f(X,Y)$ is centered), as can be seen from choosing $f(x,y) = y$.}}, we have \[\|f(X, Y)\|_{vp}^2 \leq \sup_{\|y\|\leq B}\|f(X,y)\|_{vp}^2 + \frac{1}{4}\sup_{\|y\|, \|y'\|\leq B}\|\E[f(X,y)]-\E[f(X,y')]\|_{2}^2.\]
\end{lemma}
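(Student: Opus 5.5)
The plan is to condition on $Y$ and use independence (Fubini), so that for each fixed value $Y=y$ the inner expectation involves only the law of $X$. For the $\psi_2$ claim I will work directly with the Orlicz definition. For the variance-proxy claim I will use the characterization of $\|W\|_{vp}^2$ as the smallest $\sigma^2$ such that $\E[\exp(\lambda\langle v, W-\E W\rangle)]\le \exp(\lambda^2\sigma^2/2)$ for all $\lambda\in\R$ and all unit $v$. This is the convention under which the constant $\tfrac14$ in the statement is the natural one.

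\emph{First claim.} Set $K:=\sup_{\|y\|\le B}\|f(X,y)\|_{\psi_2}$; if $K=\infty$ there is nothing to prove. Fix a unit vector $v$ and define $g(y):=\E[\exp(\langle v,f(X,y)\rangle^2/K^2)]$, which is measurable in $y$ by Fubini. Because $X$ and $Y$ are independent, $\E[\exp(\langle v,f(X,Y)\rangle^2/K^2)\mid Y]=g(Y)$ almost surely. Since $\|Y\|\le B$ surely and $g(y)\le 2$ for every such $y$ by definition of $K$, we get $\E[g(Y)]\le 2$. Taking the supremum over $v$ gives $\|f(X,Y)\|_{\psi_2}\le K$. (If $\psi_2$ is defined with an infimum, apply this argument with $K+\varepsilon$ and let $\varepsilon\to0$.)

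\emph{Second claim.} Let $m(y):=\E[f(X,y)]$ and $\bar m:=\E[m(Y)]=\E[f(X,Y)]$. Decompose
\[
f(X,Y)-\bar m=\big(f(X,Y)-m(Y)\big)+\big(m(Y)-\bar m\big).
\]
Let $s^2:=\sup_{\|y\|\le B}\|f(X,y)\|_{vp}^2$. Conditioning on $Y$ and using independence again, for every unit $v$ and every $\lambda$,
\[
\E\big[e^{\lambda\langle v,f(X,Y)-m(Y)\rangle}\mid Y\big]\le e^{\lambda^2 s^2/2}
\]
almost surely. Multiplying by the $Y$-measurable factor $e^{\lambda\langle v,m(Y)-\bar m\rangle}$ and taking expectations yields
\[
\E\big[e^{\lambda\langle v,f(X,Y)-\bar m\rangle}\big]\le e^{\lambda^2s^2/2}\,\E\big[e^{\lambda\langle v,m(Y)-\bar m\rangle}\big].
\]
The scalar variable $\langle v,m(Y)-\bar m\rangle$ is centered and takes values in an interval of length at most $D_m:=\sup_{\|y\|,\|y'\|\le B}\|m(y)-m(y')\|_2$. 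Hoeffding's lemma therefore bounds its moment generating function by $e^{\lambda^2D_m^2/8}$. Combining the two factors gives variance proxy at most $s^2+D_m^2/4$, which is the claimed inequality.

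The only delicate step is the conditioning argument. One has to justify that the conditional expectation equals $g(Y)$, respectively the conditional moment generating function evaluated at $Y$. This follows from independence and Fubini–Tonelli for nonnegative integrands, so measurability of $f$ is all that is needed. The remaining manipulations are routine once the convention for $\|\cdot\|_{vp}$ is fixed as above.
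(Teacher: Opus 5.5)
Your proof is correct and follows the paper's route. You condition on $Y$ and use independence, so that $\E[g(X,Y)\mid Y]=\psi_g(Y)$ with $\psi_g(y)=\E[g(X,y)]$; you bound this pointwise over $\|y\|\le B$; and the extra variance-proxy term comes from the centered bounded vector $\E[f(X,Y)\mid Y]-\E[f(X,Y)]$, exactly as in the paper. Fixing the MGF convention for $\|\cdot\|_{vp}$ and invoking Hoeffding's lemma makes precise the step the paper only calls ``an analogous argument'', and it is what produces the exact constant $\tfrac14$.
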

\begin{proof}
    {First, suppose $f$ is scalar valued.}Since $X$ and $Y$ are independent, for any measurable function $g$, $\E[g(X,Y)|Y] = \psi_g(Y)$ a.s. where $\psi_g(y) = \E[g(X,y)]$. In particular, let $c = \sup_{\|y\|\leq B}\|f(X,y)\|_{\psi_2}$, then
    \(\E\Big[\exp\Big(\frac{f(X, Y)^2}{c^2}\Big)\Big] = \E\Big[\E\Big[\exp\Big(\frac{f(X, Y)^2}{c^2}\Big)\Big|Y\Big]\Big] = \E[\psi(Y)]\), for the corresponding $\psi$. Now, by definition of $c$, for any $y$ with $\|y\|\leq B$, $\psi(y) = \E\Big[\exp\Big(\frac{f(X, y)^2}{c^2}\Big)\Big] \leq \E\Big[\exp\Big(\frac{f(X, y)^2}{\|f(X, y)\|_{\psi_2}^2}\Big)\Big] \leq 2$, where the last inequality comes from the definition of the subgaussian norm.
    Since $\|Y\| \leq B$ a.s. by definition, then a.s. $\psi(Y) \leq 2$, from where the claim follows. {For vector valued $f$, applying the result to $\langle v, f(X,Y)\rangle$ and then taking $\sup_{\|v\|_2 =1}$ concludes.}
    The claim for the variance proxy comes from an analogous argument, noting that the extra term comes from the fact that $\E[f(X, Y)|Y] - \E[f(X, Y)]$ is a centered and bounded (and thus subgaussian) random vector.
\end{proof}
}

\begin{lemma}[Propagation of subgaussian tails]\label{lem:propagation_of_subgaussianity}
Let Assumption~\ref{ass:regularity_assumptions} hold with $B > 0$ and consider the limit dynamics~\eqref{eq:limit_update_rule_ram} with $\mu_0 \in \mathcal{P}(\R^p)$ and $\nu_\zeta \in \P(\R^D)$. Assume moreover that $\mu_0$ is subgaussian {and $\nu_\zeta$ bounded by $B_\zeta$}. Then there exists $c > 0$ that only depends on $B$, $B_\zeta$ and $D$ such that $\forall k \geq 0$ and $s \in [0,1]$,
\[
    \|Z_k(s)\|_{vp} \le e^{ck\tau}({1+} \|Z_0\|_{vp}).
\]
\end{lemma}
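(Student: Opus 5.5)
We argue by induction on $k$, for fixed $s\in[0,1]$, that $\|Z_k(s)\|_{vp}\le e^{ck\tau}(1+\|Z_0\|_{vp})$. The base case is immediate since $Z_0(s)=\xi_0\sim\mu_0$. For the inductive step, the key structural observation is that, conditionally on nothing, the update~\eqref{eq:limit_update_rule_ram} reads
\[
Z_{k+1}(s) = Z_k(s) - \tau\,\E_\pi\big[\tilde{\mathfrak{u}}(Z_k(s), h_{\mu_k}(s,x), b_{\mu_k}(s,x,\bar w_{x,y});\zeta_k)\big] =: G_{k,s}(Z_k(s),\zeta_k),
\]
where $h_{\mu_k}$, $b_{\mu_k}$ and $\bar w$ are \emph{deterministic} (they depend only on the laws $\mu_k(\cdot|s)$), and $\zeta_k$ is independent of $Z_k(s)$, since $Z_k(s)$ is a measurable function of $(\xi_0,\zeta_0,\dots,\zeta_{k-1})$. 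So $Z_{k+1}(s)=G(X,Y)$ with $X=Z_k(s)$ subgaussian by induction and $Y=\zeta_k$ bounded by $B_\zeta$ and independent, exactly the setting of Lemma~\ref{lem:subgaussian_and_bounded_independent}.

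Next we control, for each fixed $y$ with $\|y\|_\infty\le B_\zeta$, the map $z\mapsto G_{k,s}(z,y)$. By Lemma~\ref{lem:propagation_of_regularity}, $\|h_{k,i}(s)\|,\|b_{k,i}(s)\|\le c$, so by Remark~\ref{rem:rewritten_update_map} the map $z\mapsto\tilde{\mathfrak{u}}(z,h_{k,i}(s),b_{k,i}(s);y)$ is $Bc(1+B_\zeta)$-Lipschitz, and hence $z\mapsto G_{k,s}(z,y)$ is $(1+c\tau)$-Lipschitz. Rather than applying Lemma~\ref{lem:subgaussian_via_lipschitz} to the whole map (which would lose a $\sqrt p$ factor at each step and destroy the exponential form), we split $G(X,y)=X-\tau\,U(X,y)$ and use the triangle inequality for the seminorm: $\|G(X,y)\|_{vp}\le\|X\|_{vp}+\tau\|U(X,y)\|_{vp}\le(1+c\sqrt p\,\tau)\|X\|_{vp}$, applying Lemma~\ref{lem:subgaussian_via_lipschitz} only to the $\tau$-small perturbation $U$. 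Since $D$ (and $p$) are allowed in $c$, the $\sqrt p$ is absorbed.

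Then Lemma~\ref{lem:subgaussian_and_bounded_independent} gives
\[
\|Z_{k+1}(s)\|_{vp}^2\le (1+c\tau)^2\|Z_k(s)\|_{vp}^2+\tfrac14\sup_{y,y'}\|\E[G(X,y)]-\E[G(X,y')]\|_2^2 .
\]
The last term equals $\tfrac{\tau^2}{4}\sup\|\E[U(X,y)-U(X,y')]\|^2$, and since $\xi\mapsto\tilde{\mathfrak{u}}(z,h,b;\xi)$ is $Bc$-Lipschitz from $\|\cdot\|_\infty$ to $\|\cdot\|_2$ (Remark~\ref{rem:rewritten_update_map}), it is at most $(c\tau B_\zeta)^2$. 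Taking square roots, $\|Z_{k+1}(s)\|_{vp}\le(1+c\tau)\|Z_k(s)\|_{vp}+c\tau$, and the discrete Grönwall lemma (as in the proof of Lemma~\ref{lem:propagation_of_regularity}) yields $\|Z_k(s)\|_{vp}\le e^{ck\tau}\|Z_0\|_{vp}+(e^{ck\tau}-1)\le e^{ck\tau}(1+\|Z_0\|_{vp})$.

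The main obstacle is the mask term: naively, the mask shifts the mean of the update in a $\zeta_k$-dependent way, which is not a centered fluctuation, and the variance-proxy is not monotone under mixing (footnote to Lemma~\ref{lem:subgaussian_and_bounded_independent}). The plan handles this with the additive correction in that lemma, the crucial point being that this correction is $O(\tau)$ rather than $O(1)$, because the mask enters only through the update, not through $X$ itself; this is what keeps the recursion of the Grönwall form $(1+c\tau)u_k+c\tau$.
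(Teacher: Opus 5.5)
Your proof is correct and follows essentially the same route as the paper. Both arguments use the independence of $\zeta_k$ from $Z_k(s)$ to invoke Lemma~\ref{lem:subgaussian_and_bounded_independent}, apply Lemma~\ref{lem:subgaussian_via_lipschitz} only to the $\tau$-scaled update term (with the radius from Lemma~\ref{lem:propagation_of_regularity}), and bound the mean-shift correction via the $\|\cdot\|_\infty\to\|\cdot\|_2$ Lipschitz property of $\xi\mapsto\tilde{\mathfrak{u}}$, before concluding by discrete Gr\"onwall from $\|Z_{k+1}(s)\|_{vp}\le(1+c\tau)\|Z_k(s)\|_{vp}+c\tau$; the only difference is cosmetic, since the paper first splits off $\|Z_k(s)\|_{vp}$ and applies the mixing lemma to the update term, while you apply it to the whole map $G$ and split afterwards, which gives the same $O(\tau)$ correction.
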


\begin{proof}
By definition, $Z_0$ is subgaussian. Recursively, we use the seminorm properties of the variance-proxy to see that
\begin{multline*}
    \|Z_{k+1}(s)\|_{vp} \le \|Z_k(s)\|_{vp} + \tau \max_{i \in [1:n]} \|\tilde{\mathfrak{u}}(Z_k(s), h_{k,i}(s), b_{k,i}(s){;\zeta_k})\|_{vp}\\
    \le \|Z_k(s)\|_{vp} + \tau \max_{i \in [1:n]}\sup_{\|\xi\|_{\infty}\leq B_\zeta} \|\tilde{\mathfrak{u}}(Z_k(s), h_{k,i}(s), b_{k,i}(s){;\xi})\|_{vp} \\{+ c\sup_{\|\xi\|_{\infty}, \|\xi'\|_{\infty}\leq B_\zeta} \|\E[\tilde{\mathfrak{u}}(Z_k(s), h_{k,i}(s), b_{k,i}(s){;\xi}) - \tilde{\mathfrak{u}}(Z_k(s), h_{k,i}(s), b_{k,i}(s){;\xi'})]\|_{2},}
\end{multline*}
where in the last inequality we used Lemma~\ref{lem:subgaussian_and_bounded_independent} since $Z_k$ and $\zeta_k$ are independent.
{Whenever $b$ is restricted to a ball of radius $R$ (which it is by Lemma~\ref{lem:propagation_of_regularity}) and $\|\xi\|_\infty \leq B_\zeta$, the function $z \mapsto \tilde{\mathfrak{u}}(z,h,b;\xi)$ is $BR(1+B_\zeta)$-Lipschitz, so we can apply Lemma~\ref{lem:subgaussian_via_lipschitz}} to get
\[{\max_{i \in [1:n]}\sup_{\|\xi\|_{\infty}\leq B_\zeta} \|\tilde{\mathfrak{u}}(Z_k(s), h_{k,i}(s), b_{k,i}(s){;\xi})\|_{vp} \leq cBR(1+B_\zeta)\|Z_k\|_{vp}}.\]
Similarly, since $\xi \mapsto \tilde{\mathfrak{u}}(z,h,b;\xi)$ is $BR$-Lipschitz from $\Vert\cdot\Vert_\infty$ to $\Vert\cdot\Vert_2$, we have
\begin{multline*}
    \sup_{\|\xi\|_{\infty}, \|\xi'\|_{\infty}\leq B_\zeta} \|\E[\tilde{\mathfrak{u}}(Z_k(s), h_{k,i}(s), b_{k,i}(s){;\xi}) - \tilde{\mathfrak{u}}(Z_k(s), h_{k,i}(s), b_{k,i}(s){;\xi'})]\|_{2} \\\leq BR\sup_{\|\xi\|_{\infty}, \|\xi'\|_{\infty}\leq B_\zeta}\|\xi - \xi'\|_\infty \leq 2BRB_\zeta.
\end{multline*}
All in all, absorbing $B$, $B_\zeta$ and $R$ into the constant $c >0$, this yields
\[{\|Z_{k+1}(s)\|_{vp} \le (1+\tau c)\|Z_k(s)\|_{vp} + \tau c.}\]
{By the discrete Gronwall lemma, we conclude that there exists $c$ that only depends on $B$, $B_\zeta$ and $D$ such that for $k \ge 0$ it holds $\|Z_k(s)\|_{vp} \leq e^{c k \tau} (1+\|Z_0\|_{vp})$}.
\end{proof}

\subsection{Stochastic approximation lemma}\label{app:stochastic_approximation}

We consider a variant of the Stochastic approximation of Mean ODEs lemma from~\cite{chizat2025hidden}, in which the mask random variables are included. This version of the result considers strong regularity assumptions, which do not cover the case of 2LP blocks. A more technical version of this lemma under
weaker assumptions could be proposed following \cite[Lemma 6.4]{chizat2025hidden}.
\begin{lemma}[Stochastic approximation of Mean ODEs]\label{lem:stoch_approx}
Let {$f : [0,1] \times \R^D \times \R^p \times \R^{D}\to \R^{D}$} be deterministic and let
$Z : [0,1] \to \R^p$ be a random function. Assume that $F$ is continuous in $s$ and
that there exist $B$, $L_x$ and $\Gamma$ such that $x \mapsto \mathbb{E}[f(s,x,Z(s){;\vec{0}})]$
is $L_x$-Lipschitz and bounded by $B$ at $x = 0$ (uniformly in $s \in [0,1]$) and
$s \mapsto Z(s)$ is a.s.\ $\Gamma$-Lipschitz. Let $R > 0$ be given by Lemma~\ref{lem:basic_regularity_mean_ode}
(depending only on $B$, $\|a(0)\|_2$ and $L_x$) such that the unique solution of the
Mean ODE
\[
  a(0) \in \R^D, \quad
  a'(s) = F(s, a(s)), \quad
  F(s,x) := \mathbb{E}[f(s,x,Z(s){;\vec{0}})]
\]
satisfies $\sup_{s \in [0,1]} \|a(s)\|_2 \leq R$ and $s \mapsto a(s)$ is $R$-Lipschitz.
Moreover, we assume the following local controls:
\begin{itemize}
  \item there exists $\sigma_{R{,\zeta}} > 0$ such that for all $\|x\|_2 \leq R$, $s \in [0,1]$ {and $\|\xi\|_\infty \leq B_\zeta$}
        the random variable $f(s,x,Z(s){;\xi})$ is subgaussian with variance proxy
        $\sigma_{R{,\zeta}}^2$.
  \item there exists $L_{R{,\zeta}} > 0$ such that, {for all $\|\xi\|_{\infty}\leq B_\zeta$}, the function $f{(\cdot;\xi)}$ restricted to $\|x\|_2 \leq R$
        is $L_{R{,\zeta}}$-Lipschitz (jointly in all its variables).
\end{itemize}
For integers $M, L \geq 1$, and a random mask $(\zeta^{j,\ell})_{j,\ell}\subseteq\R^{{D}}$ {bounded by $B_\zeta$ (all identically distributed but possibly not independent across either $M$ or $L$)}
and independent from $Z$, let $s_\ell := \ell/L$ and consider the ``inexact masked Euler
Monte-Carlo'' scheme
\[
  \hat{a}_0 \in \R^D, \quad
  \hat{a}_\ell = \hat{a}_{\ell-1}
  + \frac{1}{LM} \sum_{j=1}^{M} \hat{f}\!\big(s_{\ell-1},\, \hat{a}_{\ell-1},\,
  \hat{Z}^{j,\ell}{; \zeta^{j,\ell}}\big), \quad \ell \in [1:L]
\]
where $(\hat{Z}^{j,\ell})_{j,\ell}$ are random vectors and, for some nonnegative random variables 
$\varepsilon_0, \varepsilon_1, \varepsilon_2 \geq 0$, the discrete model satisfies, 
\begin{enumerate}
  \item \label{lem:stoch_approx_assi} Bounded initial mismatch: $\|\hat{a}_0 - a(0)\|_2 \leq \varepsilon_0$.
  \item \label{lem:stoch_approx_assii} Bounded model error:
        \[{\sup_{\|\xi\|_{\infty}\leq B_\zeta }}\sup_{z \in \R^p} \sup_{\|x\|_2 \leq 2R} \sup_{\ell \in [1:L]}
        \|\hat{f}(s_\ell, x, z{;\xi}) - f(s_\ell, x, z; {\xi})\|_2 \leq \varepsilon_1.\]
  \item \label{lem:stoch_approx_assiii}Approximately independent parameters: there exists a family of (potentially dependent)
        samples $Z^{j,\ell}$ of $Z$ for $j \in [1:M]$, $\ell \in [1:L]$ such that
        $\|\hat{Z}^{j,\ell} - Z^{j,\ell}(s_{\ell-1})\|_2 \leq \varepsilon_2$ a.s. These samples have the same dependency structure as the masks, i.e. if $(\zeta^{j,\ell})_{j,\ell}$ are independent across $M$, then so are $(Z^{j,\ell})_{j,\ell}$.
\end{enumerate}
We further suppose that the model is centered, that is for all $(s,x,z) \in [0,1]\times\R^D\times\R^p$ $\E[f(s,x,z;\zeta^{j,\ell})] = f(s,x,z;\vec{0})$; and that $\sup_{\|\xi\|,\|\xi'\|\leq B_\zeta}\|\E[f(s,x,Z(s);\xi)] -\E[f(s,x,Z(s);\xi')]\|_2 \leq R_{\zeta}$ for a constant $R_{\zeta}$.\footnote{Alternatively, it is sufficient to assume that $\xi \mapsto f(\cdot; \xi)$ is $R_{\zeta}$-Lipschitz from $\|\cdot\|_\infty$ to $\|\cdot\|_2$.}
Then there exist $c_1, c_2 > 0$ that only depend on $B$, $L_x$, $L_{{R,\zeta}}$, $\Gamma$,
$\|a(0)\|_2$ {$R_{\zeta}$ and $B_\zeta$} such that with probability at least $1 - \delta$, it holds
\[
  \sup_{\ell \in [1:L]} \|\hat{a}_\ell - a(s_\ell)\|_2
  \leq {c_1} \left(
    \varepsilon_0 + \varepsilon_1 + \varepsilon_2 
    + \frac{1}{L}
    +  {\beta}\frac{{(1+\sigma_{R,\zeta})}(\sqrt{D} + \sqrt{\log(1/\delta)} )}{\sqrt{LM}}
  \right)
\]
provided that the right-hand side is smaller than $c_2$. {The multiplier $\beta$ encodes the dependency structure of $(\zeta^{j,\ell}, Z^{j,\ell})_{j,\ell}$. Namely, we set $\beta =1$ when this random family is fully independent, $\beta = \sqrt{M}$ when they are only independent across $[1:L]$ and $\beta=\sqrt{L\log(L)}$ when they are only independent across $[1:M]$.}
\end{lemma}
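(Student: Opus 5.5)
Following \cite[Lemma 6.2]{chizat2025hidden}, I would compare $\hat a_\ell$ with the \emph{exact masked Euler--Monte-Carlo} scheme
\[
\bar a_\ell = \bar a_{\ell-1} + \frac{1}{LM}\sum_{j=1}^M f(s_{\ell-1},\bar a_{\ell-1},Z^{j,\ell}(s_{\ell-1});\zeta^{j,\ell}),
\]
and split the error through the telescoping decomposition $e_\ell := \hat a_\ell - a(s_\ell)$. Write
\[
e_\ell = e_{\ell-1} + \frac{1}{LM}\sum_j\big[\hat f(s_{\ell-1},\hat a_{\ell-1},\hat Z^{j,\ell};\zeta^{j,\ell}) - f(s_{\ell-1},a(s_{\ell-1}),Z^{j,\ell}(s_{\ell-1});\zeta^{j,\ell})\big] + \frac{1}{L}\,\xi_\ell - r_\ell,
\]
where $r_\ell := \int_{s_{\ell-1}}^{s_\ell}\big(F(s,a(s))-F(s_{\ell-1},a(s_{\ell-1}))\big)\,\d s$ and
\[
\xi_\ell := \frac{1}{M}\sum_j\Big(f(s_{\ell-1},a(s_{\ell-1}),Z^{j,\ell}(s_{\ell-1});\zeta^{j,\ell}) - F(s_{\ell-1},a(s_{\ell-1}))\Big).
\]
Three of these pieces are routine.

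\emph{Model and parameter errors.} As long as $\|\hat a_{\ell-1}\|\le 2R$, the bracket is bounded by $\varepsilon_1 + L_{R,\zeta}(\|e_{\ell-1}\|+\varepsilon_2)$, using assumptions~\ref{lem:stoch_approx_assii}--\ref{lem:stoch_approx_assiii} and the joint Lipschitz property of $f(\cdot;\xi)$.

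\emph{Discretization error.} Continuity in $s$, the $\Gamma$-Lipschitz property of $Z$, and the $R$-Lipschitz property of $a$ give $\|r_\ell\|\lesssim 1/L^2$, hence a total contribution of $O(1/L)$. This step uses that $F(s,x)$ is Lipschitz in $s$, which follows from the Lipschitz property of $f$ in $(s,z)$ and the a.s. $\Gamma$-Lipschitz property of $Z$.

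\emph{Closing the recursion.} The discrete Grönwall inequality gives
\[
\max_\ell\|e_\ell\|\le e^{L_{R,\zeta}}\Big(\varepsilon_0+\varepsilon_1+\varepsilon_2+\tfrac{c}{L}+\max_\ell\Big\|\tfrac1L\sum_{\ell'\le\ell}\xi_{\ell'}\Big\|\Big).
\]
A stopping/bootstrap argument then keeps $\hat a$ in the ball of radius $2R$ when the right-hand side is below $c_2 \le R$.

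The core step is the martingale-type bound on $S_\ell := \frac1L\sum_{\ell'\le \ell}\xi_{\ell'}$. The summands are centered because the model is centered, $\zeta$ is independent of $Z$, and $Z^{j,\ell}\sim Z$. By Lemma~\ref{lem:subgaussian_and_bounded_independent} with the constant $R_\zeta$, each summand has variance proxy $\lesssim\sigma_{R,\zeta}^2+R_\zeta^2$. The proof then splits by dependency structure.

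\emph{Fully independent.} $S_\ell$ is a sum of $LM$ independent centered subgaussian vectors scaled by $1/(LM)$. A maximal inequality (Doob/Azuma over $\ell$, plus a net on the sphere in $\R^D$ contributing the $\sqrt D$) gives $\max_\ell\|S_\ell\|\lesssim(1+\sigma)(\sqrt D+\sqrt{\log(1/\delta)})/\sqrt{LM}$.

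\emph{Width-shared (independent only across $\ell$).} Each $\xi_\ell$ is still centered, with proxy $O(\sigma^2)$ but without the $1/M$ gain. The same martingale bound over $\ell$ gives $1/\sqrt L=\sqrt M/\sqrt{LM}$, i.e.\ $\beta=\sqrt M$.

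\emph{Depth-shared (independent only across $j$).} The depth increments are now correlated, so martingale structure in $\ell$ is lost. Instead, for each fixed $\ell$, I would write $S_\ell=\frac1M\sum_j Y^j_\ell$ with $Y^j_\ell:=\frac1L\sum_{\ell'\le\ell}(\dots)$ independent across $j$ and bounded in proxy by $O(\sigma)$, by the triangle inequality over the $\ell$ terms. Hoeffding in $j$ followed by a union bound over the $L$ values of $\ell$ gives $(\sqrt D+\sqrt{\log(L/\delta)})/\sqrt M\lesssim\sqrt{L\log L}\,(\sqrt D+\sqrt{\log(1/\delta)})/\sqrt{LM}$, i.e.\ $\beta=\sqrt{L\log L}$.

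I expect the main obstacle to be the depth-shared case together with the bootstrap. The noise must be bounded uniformly in $\ell$ without independence in depth, and the crude triangle-inequality bound plus union bound is what produces the $\log L$. The conclusion also has to be made compatible with the requirement that $\hat a_{\ell-1}$ remain in the ball of radius $2R$, where the local subgaussian and Lipschitz controls are valid. To handle this, I would evaluate all noise terms along the deterministic trajectory $a(s_{\ell-1})$ rather than along $\hat a$, so that the concentration never involves the random iterates.
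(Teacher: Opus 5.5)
Your proposal is correct and follows essentially the same route as the paper. Both use the same telescoping into Euler, model/parameter-approximation and Monte-Carlo errors, with the noise evaluated along the deterministic trajectory $a(s_{\ell-1})$. Both treat the Monte-Carlo sum in the same three cases: independent increments over $\ell$ in the first two, and a triangle inequality plus a union bound over $\ell$ giving the $\sqrt{\log L}$ in the depth-shared case. Both close with the same Gr\"onwall-plus-bootstrap argument. The differences are cosmetic: the exact scheme $\bar a_\ell$ you introduce is never used, you use Doob with a net where the paper uses L\'evy--Ottaviani, and in the depth-shared case you apply the triangle inequality over $\ell$ before, rather than after, using independence over $j$, which yields the same $\ell/(L\sqrt{M})$ proxy.
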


\begin{proof}[Proof of Lemma~\ref{lem:stoch_approx}]
We closely follow the proof of \cite[Lemma 5.4]{chizat2025hidden}.

For $\ell \in [0:L-1]$, it holds
\begin{align*}
a(s_{\ell+1}) - \hat{a}_{\ell+1}
&= a(s_\ell) - \hat{a}_\ell
  + \int_{s_\ell}^{s_{\ell+1}} \dot{a}(s)\, ds
  - \frac{1}{ML} \sum_{j=1}^{M} \hat{f}(s_\ell, \hat{a}_\ell, \hat{Z}^{j,\ell+1}{;\zeta^{j,\ell+1}}) \\
&= a(s_\ell) - \hat{a}_\ell
  + \underbrace{
      \int_{s_\ell}^{s_{\ell+1}} \dot{a}(s)\, ds - \frac{1}{L} F(s_\ell, a(s_\ell))
    }_{e^{\ell+1}_{\mathrm{euler}}} \\
&\quad + \underbrace{
      \frac{1}{L} F(s_\ell, a(s_\ell))
      - \frac{1}{ML} \sum_{j=1}^{M} f(s_\ell, a(s_\ell), Z^{j,\ell+1}(s_\ell){;\zeta^{j,\ell+1}})
    }_{e^{\ell+1}_{\mathrm{mc}}} \\
&\quad + \underbrace{
      \frac{1}{ML} \sum_{j=1}^{M}
      \left[ f(s_\ell, a(s_\ell), Z^{j,\ell+1}(s_\ell){;\zeta^{j,\ell+1}})
      - \hat{f}(s_\ell, \hat{a}_\ell, \hat{Z}^{j,\ell+1}{;\zeta^{j,\ell+1}}) \right]
    }_{e^{\ell+1}_{\mathrm{approx}}}.
\end{align*}
By recursion, we have
\[
  a(s_\ell) - \hat{a}_\ell
  = a(0) - \hat{a}_0
  + \sum_{k=1}^{\ell} e^k_{\mathrm{euler}}
  + \sum_{k=1}^{\ell} e^k_{\mathrm{mc}}
  + \sum_{k=1}^{\ell} e^k_{\mathrm{approx}}
\]
and therefore, with $\Delta^a_\ell := \|a(s_\ell) - \hat{a}_\ell\|_2$, it holds
\begin{equation}\label{eq:error_control_stoch_approx}
    \Delta^a_\ell
  \leq \|a(0) - \hat{a}_0\|_2
  + \sum_{k=1}^{\ell} \|e^k_{\mathrm{euler}}\|_2
  + \sum_{k=1}^{\ell} \|e^k_{\mathrm{approx}}\|_2
  + \left\| \sum_{k=1}^{\ell} e^k_{\mathrm{mc}} \right\|_2
\end{equation}
Note that for the Monte-Carlo error term, we take the norm after summing across layers.
We now bound these error terms one by one. For the ``Euler error'' term, we use the Lipschitz continuity of $f$,
$a$ and $Z$, and so it holds for $\ell \in [0:L-1]$
\begin{align*}
\|e^{\ell+1}_{\mathrm{euler}}\|_2
&= \left\| \int_{s_\ell}^{s_{\ell+1}}
   \bigl( F(s, a(s)) - F(s_\ell, a(s_\ell)) \bigr)\, ds \right\|_2 \\
&\leq \int_{s_\ell}^{s_{\ell+1}}
  \mathbb{E}\bigl[\|f(s, a(s), Z(s){;\vec{0}}) - f(s_\ell, a(s_\ell), Z(s_\ell){;\vec{0}})\|_2\bigr]\, ds \\
&\leq L_{R{, \zeta}}(1 + R + \Gamma)
  \int_{s_\ell}^{s_{\ell+1}} |s - s_\ell|\, ds
  \leq \frac{c}{L^2}
\end{align*}
for some $c > 0$ that only depends on $R$, $L_{R{, \zeta}}$ and $\Gamma$. 

For the ``approximation error'', under the assumption that
$\Delta^a_{\ell-1} \leq R$ so that $\|a(s_{\ell-1})\|_2, \|\hat{a}_{\ell-1}\|_2 \leq 2R$, we use the
regularity of $f$ and Assumptions \ref{lem:stoch_approx_assii} and \ref{lem:stoch_approx_assiii} to obtain
\begin{align}\label{eq:stoch_approx_bound_err_approx}
\|e^{\ell+1}_{\mathrm{approx}}\|_2
&= \left\| \frac{1}{ML} \sum_{j=1}^{M}
      \left[ f(s_\ell, a(s_\ell), Z^{j,\ell+1}(s_\ell){;\zeta^{j,\ell+1}})
      - \hat{f}(s_\ell, \hat{a}_\ell, \hat{Z}^{j,\ell+1}{;\zeta^{j,\ell+1}}) \right]\right\|_2 \notag\\
&\leq \frac{1}{ML} \sum_{j=1}^{M}
      \left\|f(s_\ell, a(s_\ell), Z^{j,\ell+1}(s_\ell){;\zeta^{j,\ell+1}})
      - f(s_\ell, \hat{a}_\ell, \hat{Z}^{j,\ell+1}{;\zeta^{j,\ell+1}}) \right\|_{2} \notag\\
      &\qquad +\frac{1}{ML} \sum_{j=1}^{M}
      \left\|f(s_\ell, \hat{a}_\ell, \hat{Z}^{j,\ell+1}{;\zeta^{j,\ell+1}})
      - \hat{f}(s_\ell, \hat{a}_\ell, \hat{Z}^{j,\ell+1}{;\zeta^{j,\ell+1}}) \right\|_{2}\notag \\
&\leq \frac{L_{R{,\zeta}}(\Delta^a_{\ell-1} + \varepsilon_2) + \varepsilon_1}{L}
\end{align}

Finally, for the Monte-Carlo error, note that, since $f$ is deterministic and $\zeta$ is independent from $Z$, using our hypothesis on $f$, we have 
\begin{multline*}
    {\E[f(s_\ell, a(s_\ell), Z^{j,\ell+1}(s_\ell){; \zeta^{j,\ell+1}})] = \E[\E[f(s_\ell, a(s_\ell), Z^{j,\ell+1}(s_\ell){; \zeta^{j,\ell+1}})|Z^{j,\ell +1}]]} \\= \E[f(s_\ell, a(s_\ell), Z^{j,\ell+1}(s_\ell){;\vec{0}})] = F(s_\ell, a(s_\ell))
\end{multline*}
from where it follows that the random vectors $(e^\ell_{\mathrm{mc}})_{\ell=1}^{L}$ are centered. 
{This is the only point where the dependency structure of $(Z^{j,\ell}, \zeta^{j,\ell})_{j,\ell}$ plays a role.}
\begin{enumerate}
    \item {If $(Z^{j,\ell}, \zeta^{j,\ell})_{j,\ell}$ are independent across $M$ and $L$, then $(e^\ell_{\mathrm{mc}})_{\ell=1}^{L}$ are also independent. {Furthermore, we know that they are subgaussian, with a variance proxy given by Lemma~\ref{lem:subgaussian_and_bounded_independent} $\|e^\ell_{\mathrm{mc}}\|^2_{vp} \leq \frac{c(1+\sigma_{R,\zeta}^2)}{ML^2}$, where $c>0$ is a constant depending on $B,R, B_\zeta, R_\zeta$. It follows that 
$\sum_{k=1}^{\ell} e^k_{\mathrm{mc}}$ is centered and subgaussian with variance proxy
$c\frac{1+\sigma_{R{,\zeta}}^2}{L M}$.}}

By concentration of subgaussian vectors, there
exists a constant $c > 0$ (with the required dependencies) such that for all $\delta > 0$ it holds that, with probability at least $1 - \delta$
\begin{equation}\label{eq:subgaussian_vector_concentration_for_mc_error}
    \left\| \sum_{k=1}^{\ell} e^k_{\mathrm{mc}} \right\|_2
  \leq c\, \frac{(1+\sigma_{R{,\zeta}})(\sqrt{D} + \sqrt{\log(1/\delta)})}{\sqrt{ML}}.
\end{equation}
As in \cite{chizat2025hidden}, {since the $(e_{mc}^k)_{k=1}^L$ are independent,} by the Lévy--Ottaviani inequality \cite[Lemma A.3]{chizat2025hidden}, the same bound holds for
$\max_{\ell \leq L} \|\sum_{k=1}^{\ell} e^k_{\mathrm{mc}}\|_2$ up to an absolute factor.
This argument allows to avoid an extra $\sqrt{\log L}$ factor that a union bound over
$\ell \in [1:L]$ would yield; alternatively, one could directly apply Azuma--Hoeffding's
lemma, as in \cite[Lemma A.1]{MeiMontanari2018mftwolayernetworks}).

\item 
    {If $(Z^{j,\ell}, \zeta^{j,\ell})_{j,\ell}$ are only independent across $L$, then $(e^\ell_{\mathrm{mc}})_{\ell=1}^{L}$ are still centered and independent, but the terms in the inner sum are not.}
    {
    Still, the vectors are subgaussian and their variance-proxy is simply bounded by the triangle inequality $\|e^{\ell}_{mc}\|_{vp} \leq \frac{c(1+\sigma_{R,\zeta})}{L}$, where we also used Lemma~\ref{lem:subgaussian_and_bounded_independent}. Since the error terms are centered and independent across $\ell\in[1:L]$, it follows that
    $\sum_{k=1}^{\ell} e^k_{\mathrm{mc}}$ is centered and subgaussian with variance proxy $\|\sum_{k=1}^{\ell} e^k_{\mathrm{mc}} \|_{vp} \leq c\frac{1+\sigma_{R{,\zeta}}}{\sqrt{L}}$. Applying the same concentration argument as in~\eqref{eq:subgaussian_vector_concentration_for_mc_error}, we obtain }
    \begin{equation}\label{eq:subgaussian_vector_concentration_for_mc_error_shared_in_M}
    \left\| \sum_{k=1}^{\ell} e^k_{\mathrm{mc}} \right\|_2
  \leq c\, \frac{(1+\sigma_{R{,\zeta}})(\sqrt{D} + \sqrt{\log(1/\delta)})}{\sqrt{L}}.
\end{equation}
    {Since $(e^\ell_{\mathrm{mc}})_{\ell=1}^{L}$ are still independent, $\sum_{k=1}^{\ell} e^k_{\mathrm{mc}}$ is still a martingale in $\ell$, and so the same bound holds for
$\max_{\ell \leq L} \|\sum_{k=1}^{\ell} e^k_{\mathrm{mc}}\|_2$ up to an absolute factor by the Lévy--Ottaviani inequality .}

    \item {If $(Z^{j,\ell}, \zeta^{j,\ell})_{j,\ell}$ are only independent across $M$, then $(e^\ell_{\mathrm{mc}})_{\ell=1}^{L}$ are still centered, but no longer independent.} 
    {Still, $e^\ell_{\mathrm{mc}}$ is a sum of independent and centered subgaussian terms, so it is subgaussian and has variance-proxy $\|e^{\ell}_{mc}\|_{vp}^2 \leq \frac{c(1+\sigma_{R,\zeta}^2)}{ML^2}$, as before. 
    Since the error terms now are not necessarily independent across $\ell\in[1:L]$, we bound the variance proxy of
    $\sum_{k=1}^{\ell} e^k_{\mathrm{mc}}$ by $\|\sum_{k=1}^{\ell} e^k_{\mathrm{mc}} \|_{vp} \leq c\frac{1+\sigma_{R{,\zeta}}}{\sqrt{M}}\frac{\ell}{L}$, using the triangle inequality. We apply the same concentration argument as in~\eqref{eq:subgaussian_vector_concentration_for_mc_error} to bound \(\big\| \sum_{k=1}^{\ell} e^k_{\mathrm{mc}} \big\|_2\). Since $\sum_{k=1}^{\ell} e^k_{\mathrm{mc}}$ is no longer a martingale, we perform a union bound to obtain} 
    \begin{equation}\label{eq:subgaussian_vector_concentration_for_mc_error_shared_in_L}
    \max_{\ell \leq L}\left\| \sum_{k=1}^{\ell} e^k_{\mathrm{mc}} \right\|_2
  \leq c\, \frac{{\sqrt{\log(L)}}(1+\sigma_{R{,\zeta}})(\sqrt{D} + \sqrt{\log(1/\delta)})}{\sqrt{M}}.
\end{equation}
\end{enumerate}
{All in all, introducing $\beta$ as in the statement of the lemma, we have:
\begin{equation}\label{eq:subgaussian_vector_concentration_for_mc_error_all}
    \max_{\ell \leq L}\left\| \sum_{k=1}^{\ell} e^k_{\mathrm{mc}} \right\|_2
  \leq c\, {\beta}\,\frac{(1+\sigma_{R{,\zeta}})(\sqrt{D} + \sqrt{\log(1/\delta)})}{\sqrt{ML}}.
\end{equation}}

Plugging all these error estimates into \eqref{eq:error_control_stoch_approx}, we obtain that with probability at
least $1 - \delta$, for $\ell \in [1:L]$, provided $\Delta^a_k \leq R$ for
$k \in [1:\ell-1]$, it holds
\[
  \Delta^a_\ell
  \leq \varepsilon_0 + \frac{L_{R{,\zeta}}}{L} \sum_{k=0}^{\ell-1} \Delta^a_k
  + \varepsilon_1+ L_{R{,\zeta}} \varepsilon_2
  + \frac{c}{L}
  + c\beta\, \frac{{(1+\sigma_{R{,\zeta}})}(\sqrt{D} + \sqrt{\log(1/\delta)})}{\sqrt{LM}}
\]
where $c$ only depends on $L_{R{,\zeta}}$, $L_{{x}}$, $\Gamma$, $\|a(0)\|_2$, $B$, $R_\zeta$, $B_\zeta$. 
By
recursion (the discrete Gr\"onwall lemma),
we obtain,
\[\Delta^a_\ell
  \leq \exp\Big(cL_{R{,\zeta}} \Big) \Big(\varepsilon_0 + \varepsilon_1+ L_{R,\zeta} \varepsilon_2
  + \frac{c}{L}
  + c\beta\, \frac{{(1+\sigma_{R{,\zeta}})}(\sqrt{D} + \sqrt{\log(1/\delta)})}{\sqrt{LM}}\Big)\]
The condition on $\Delta^a_k$ is satisfied
by requiring the upper bound to be smaller than $R$, which is precisely what the control
by $c_2$ in the statement achieves. This is precisely what we wanted to prove.
\end{proof}

\subsection{Proof of Theorem~\ref{thm:cvg_to_limit}}\label{app:proof_main_thm}
Consider the ResNet's dynamics $(\hat{Z}_k^{j,\ell})$ defined in~\eqref{eq:parameter_update_resnet_dropout} and consider the $M \times L$ (possibly correlated) copies of the limit dynamics $(Z_k^{j,\ell})_{k \geq 0}$ defined in~\eqref{eq:limit_update_rule_ram};  coupled via $Z_0^{j,\ell}(s) = \hat{Z}_0^{j,\ell}, \forall s \in [0,1]$; and via the masks $\zeta^{j,\ell}_k$ used for the $(j,\ell)$-th copy of the limit system. Note that, {if the masks $(\zeta^{j,\ell}_k)_k$ are independent across $M$, then so are the copies $(Z^{j,\ell}_k)_k$; and similarly if they were independent across $L$.} 
Consider the distance between the two dynamics in parameter space, forward pass and backward pass respectively defined, with $s_\ell = \ell/L$ for $\ell \in [0:L]$, as
\[
    \Delta_k^Z := \max_{\substack{j \in [1:M] \\ \ell \in [1:L]}} \|\hat{Z}_k^{j,\ell} - Z_k^{j,\ell}(s_{\ell-1})\|, \quad
    \Delta_k^h := \max_{\substack{i \in [1:n] \\ \ell \in [0:L]}} \|\hat{h}_{k,i}^\ell - h_{k,i}(s_\ell)\|, \quad
    \Delta_k^b := \max_{\substack{i \in [1:n] \\ \ell \in [0:L]}} \|\hat{b}_{k,i}^\ell - b_{k,i}(s_\ell)\|.
\]

\paragraph{Step 1: Set-up.}
In this proof, we denote by $c$ a positive real number
that may depend on $B$, $k\tau$ and $D$, and that may change from line to line. First,
by Lemma~\ref{lem:propagation_of_regularity}, there exists $R > 0$ that only depends on $B$ such that
\[
  \max_{i \leq n,\, k' \leq k,\, s \in [0,1]}
  \bigl(\|h_{k',i}(s)\|_2 \vee \|b_{k',i}(s)\|_2\bigr) \leq R.
\]
Let $\tilde{k} \leq k$ be the largest index such that
$\max_{i \leq n,\, k' \leq k,\, \ell \leq L}
(\|\hat{h}^\ell_{k,i}\|_2 \vee \|\hat{b}^\ell_{k,i}(s)\|_2) \leq 2R$ holds. We now
work with time horizon $\tilde{k}$, and will show at the end of the proof that $\tilde{k}$
can be made equal to $k$ with high probability provided $c_2$ is small enough. Recall that
\begin{align*}
\hat{Z}^{j,\ell}_{k+1}
&= \hat{Z}^{j,\ell}_k
  - \frac{\tau}{n} \sum_{i=1}^{n}
    \tilde{\mathfrak{u}}(\hat{Z}^{j,\ell}_k, \hat{h}_{\theta_k}^{\zeta}(\ell-1, x_i), \hat{b}_{\theta_k}^{\zeta}(\ell, x_i, \hat{w}_{i,k})) (1+\zeta^{\ell,j}_k), \\
Z_{k+1}^{j,\ell}(s)
&= Z_k^{j,\ell}(s)
  - \frac{\tau}{n} \sum_{i=1}^{n}
    \tilde{\mathfrak{u}}(Z_k^{j,\ell}(s), h_{\mu_k}(s, x_i), b_{\mu_k}(s, x_i, w_{i,k}))(1+\zeta_k^{j,\ell})
\end{align*}
for the same map $\tilde{\mathfrak{u}}(z, h, b) = D_1\phi(z,h)^\top \diag(b)$. We also wrote $\hat{w}_{i,k} = w_{x_i,y_i}^{\zeta_k}$ and $w_{i,k} = \bar{w}_{x_i,y_i}$ for simplicity. Under Assumption \ref{ass:regularity_assumptions}, the map $\tilde{\mathfrak{u}}$
restricted to $b$ in a ball of radius $2R$ is $L_R$-Lipschitz {from $\Vert\cdot\Vert_\infty$ to $\Vert\cdot\Vert_2$} (as seen in Remark~\ref{rem:rewritten_update_map}). Therefore, for all
$k \leq \tilde{k}$,
\begin{equation}
  \Delta^Z_{k+1} \leq \Delta^Z_k + \tau L_R(\Delta^Z_k + \Delta^h_k + \Delta^b_k)\max_{j,\ell}\|1 + \zeta_k^{\ell,j}\|{_\infty}.  \notag
\end{equation}
Let us now fix $k \leq \tilde{k}$ and control these various terms with high probability. We note that a.s. \(\|1 + \zeta_k^{\ell,j}\|{_{\infty}}\leq (1+B_\zeta)\), so that:
\begin{equation}\label{eq:proof_thm_basic_bound}
  \Delta^Z_{k+1} \leq \Delta^Z_k + \tau(1+B_\zeta) L_R(\Delta^Z_k + \Delta^h_k + \Delta^b_k).  
\end{equation}

\paragraph{Step 2: Control on $\Delta^h_k$.} Let us verify that we can apply Lemma~\ref{lem:stoch_approx} with $f = f^h_k : (s, a, z{,\xi}) \mapsto \phi(z,a){\odot(1+\xi)}$ and $Z = Z_k$. Clearly
$f^h_k$ is Lipschitz and by Lemma~\ref{lem:propagation_of_regularity}, $s \mapsto Z_k(s)$ also. By Lemma~\ref{lem:propagation_of_subgaussianity} and
composing with a Lipschitz function, $f^h_k(s, a, Z_k(s){;\xi})$ is subgaussian with variance
proxy $c{(1+\sigma_0^2)}$ (where $c$ depends only on $B$, $D$, {$B_\zeta$} and $k\tau$). Therefore the
proposition applies (with $\varepsilon_0 = \varepsilon_1 = 0$ and
$\varepsilon_2 = \Delta^Z_k$). By a union bound over $i \in [1:n]$, there exists
$c_{1,k}$ such that with probability at least $1 - \delta$, it holds
\[
  \Delta^h_k
  \leq c_{1,k} \left(
    \Delta^Z_k + \frac{1}{L}
    + {\beta} \,\frac{{(1+\sigma_0)}(1 + \sqrt{\log(n/\delta)})}{\sqrt{ML}}
  \right).
\]

\paragraph{Step 3: Control on $\Delta^b_k$.} Let us verify that we can apply
Lemma~\ref{lem:stoch_approx} with $f^b_k : (s, a, z{,\xi}) \mapsto D_2\phi(z,h_k(s, x_i))^\top (a{\odot(1+\xi)})$,
$Z = Z_k$ and $\hat{f}^b_k : (s_\ell, a, z) \mapsto D_2\phi(z,\hat{h}^\ell_k(x_i))^\top (a{\odot(1+\xi)})$.
Although $f^b_k$ is not globally Lipschitz, it is Lipschitz when its argument $x$ is
restricted to a ball by Lemma~\ref{lem:propagation_of_regularity} (the assumptions of Lemma~\ref{lem:stoch_approx} are designed
precisely to cover this case). Also, by the subgaussian tail estimates of Lemma~\ref{lem:propagation_of_subgaussianity},
$f^b_k(s, a, Z_k(s){;\xi})$ is subgaussian with variance proxy $c{(1+\sigma_0^2)}$ (where $c$
depends only on $B$, $D$, ${B_\zeta}$ and $k\tau$). Therefore the proposition applies (with
$\varepsilon_0 = \|w_{i,k} - \hat{w}_{i,k}\|_2 \leq c\Delta^h_k$,
$\varepsilon_1 \leq c\Delta^h_k$ and $\varepsilon_2 \leq \Delta^Z_k$). By a union bound
over $i \in [1:n]$, there exists $c_{2,k}$ such that with probability at least $1 - \delta$,
it holds
\[
  \Delta^b_k
  \leq c_{2,k} \left(
    \Delta^Z_k + \Delta^h_k + \frac{1}{L}
    + {\beta}\,\frac{{(1+\sigma_0)}(1 + \sqrt{\log(n/\delta)})}{\sqrt{ML}}
  \right).
\]

\paragraph{Step 4: Conclusion.} Take a union bound over the at most $2 \times k$ events
where all the previous bounds hold for $k' \leq \tilde{k} \vee (k-1)$. Plugging into
\eqref{eq:proof_thm_basic_bound}, there exists $c$ (depending only on $B$, $D$, ${B_\zeta}$ and $k\tau$) such that with probability at least $1 - \delta$, for
$0 \leq k' \leq \tilde{k} \vee (k-1)$, it holds
\[
  \Delta^Z_{k'+1}
  \leq \Delta^Z_{k'}
  + c \left(
    \Delta^Z_{k'} + \frac{1}{L}
    + {\beta}\,\frac{{(1+\sigma_0)}(1 + \sqrt{\log(kn/\delta)})}{\sqrt{ML}}
  \right),
\]
where, since $\sigma_0 \leq B$, we can absorb $\sigma_0$ into the constant.
Since $\Delta^Z_0 = 0$, the conclusion for $(\Delta^Z_k)$ follows by the discrete Gr\"onwall
lemma, and for $(\Delta^h_k)$ and $(\Delta^b_k)$ by the bounds in Step 2 and Step 3
respectively. Finally, by taking $c_2$ small enough in the statement of the theorem, we
can ensure $\tilde{k} \geq k$ under the same event.
This concludes the proof.

\paragraph{Case of GD-RaM}
{The proof of Theorem~\ref{thm:cvg_to_limit} works identically if we compare the ResNet's dynamics $(\tilde{Z}_k^{j,\ell})$ with RaM defined in~\eqref{eq:parameter_update_resnet_ram} to the $M \times L$ (possibly correlated) copies of the limit dynamics $(Z_k^{j,\ell})_{k \geq 0}$ defined in~\eqref{eq:limit_update_rule_ram}} and coupled as in the proof of Theorem~\ref{thm:cvg_to_limit}. Indeed, Lemma~\ref{lem:stoch_approx} applies with $\zeta^{j,\ell} \equiv 0$, and the rest of the estimates remain unchanged.

\subsection{Proof of Theorem~\ref{thm:lazy_thm}}\label{app:proof_lazy_thm}
In the setting from Section~\ref{sec:equivalence_in_lazy_ode}, we are in the so-called \emph{lazy ODE regime} for ResNet training where the limit is described by the \emph{Neural Tangent ODE}: let $\xi:[0,1]\to\R^p$ be a random process and $Z_0\in \R^p$ a random variable representing the initialization. Informally, $\xi$ represents the rescaled deviation from initialization: $\xi = \alpha(Z-Z_0)$, where $Z$ is the stochastic process driving the Mean ODE. 
The forward pass $\underline{h}_\xi(s, x) \in \mathbb{R}^D$ (with an implicit dependency in the law of $Z_0$) is the solution to the (forward Neural) \emph{tangent ODE}, for $s\in[0,1]$,
\begin{equation}\label{eq:forward_tangent_ode}
    \underline{h}_\xi(0, x) = x, \quad \partial_s \underline{h}_\xi(s, x) = \mathbf{E}[D_2 \phi(Z_0,\underline{h}_\xi(s, x)) \xi(s)].
\end{equation}
which is linear in the parameter $\xi$, however the output $\underline{h}_\xi(1, x)$ is still a nonlinear function of $x$ and $\xi$. Note that, once again, the mask $(\zeta^{j,\ell, d})_{j,\ell, d}$ disappears in the forward pass in the limit.
The corresponding backward tangent ODE is the solution to
\begin{equation}\label{eq:backward_tangent_ode}
\underline{b}_\xi(1, x, w) = w,\quad \partial_s \underline{b}_\xi(s, x, w) = -\mathbf{E}\left[ D_{2,1} \phi(Z_0,\underline{h}_\xi(s, x))^{*2} [\xi(s), \underline{b}_\xi(s, x, w)] \right],   
\end{equation}
where $D_{2,1} \phi(x, z)^{*1}$ is the partial adjoint in the second variable of the mixed second derivative of $\phi$, which is interpreted as a linear operator $\mathbb{R}^p\times\mathbb{R}^D \to \mathbb{R}^D$. The update equations that drive the training dynamics correspond to GD in the $L^2$ geometry, initialized at 0, for the random processes $(\xi_k)_{k \geq 0}$:
\begin{equation}\label{eq:update_lazy_regime}
    Z_0(s) \sim \mu_0, \quad \xi_0(s) = 0, \quad \xi_{k+1}(s) = \xi_k(s) - \tau \E_{\pi}[\tilde{\mathfrak{u}}(Z_0(s), \underline{h}_{\xi_k}(s,x), \underline{b}_{\xi_k}(s,x,\underline{w}_{x,y}))](1+\zeta_k),
\end{equation}
where $(\zeta_k)_k \subseteq\R^D$ are independently sampled 
{bounded} masks, as in~\eqref{eq:limit_update_rule_ram}.

As in \cite[Theorem 2]{chizat2025hidden}, one can prove a quantitative bound for convergence to this limit dynamics.

Let $(\hat{Z}_k^{j,\ell})$ be the iterates of the scaled dropout ResNet dynamics~\eqref{eq:masked_resnet} and let $((Z_0^{j,\ell}, \xi_k^{j,\ell})_k)_{j,\ell}$ denote $L\times M$ copies of the limit dynamics~\eqref{eq:update_lazy_regime}, coupled to $(\hat{Z}_k^{j,\ell})$ via $Z_0^{j,\ell}(s) = \hat{Z}_0^{j,\ell}, \forall s \in [0,1]$; and via the mask $\zeta^{j,\ell}_k$ that is used for updating the $(j,\ell)$-th copy of the limit system.
Define the distances between the two dynamics as
\[
    {\Delta}_k^\xi := \max_{\substack{j \in [1:M] \\ \ell \in [1:L]}} \|\alpha(\hat{Z}_k^{j,\ell}-\hat{Z}_0^{j,\ell}) - \xi_k^{j,\ell}(s_{\ell-1})\|, \quad
    {\Delta}_k^{\underline{h}} := \max_{\substack{i \in [1:n] \\ \ell \in [0:L]}} \|\hat{h}_{k,i}^\ell - \underline{h}_{k,i}(s_\ell)\|.
\]
and similarly for ${\Delta}_k^{\underline{b}}$; we have the following result.

\begin{theorem}\label{thm:cvg_to_lazy_limit}
Let Assumption~\ref{ass:regularity_assumptions} hold with $B>0$, let $\alpha \geq 1$ and let $\mu_0\in\P(\R^p)$ be a subgaussian distribution with variance proxy $\sigma_0^2 \leq B$.
Further assume that $\phi$ is twice differentiable with a $B$-Lipschitz cross differential $D_{2,1}\phi$, and that for all $h\in \R^D$ $\E_{\mu_0}[\phi(h, Z_0)] = \E_{\mu_0}[D_1\phi(h, Z_0)] = 0$. 

If the dropout masks $(\zeta_k)_k$ are {bounded by $B_\zeta$}
then, for all $k\geq 1$, there exist $c_1, c_2 > 0$ that only depend on $B$, $D$, $k\tau$ and
$B_\zeta$, such that for all $\delta\in(0,1)$ with probability at least $1 - \delta$, it holds:
\[
  \max_{k'\leq k} (\Delta_{k'}^\xi, \Delta_{k'}^{\underline{h}}, \Delta_{k'}^{\underline{b}})
  \leq c_1 \left(\frac{1}{\alpha} + \frac{1}{L}
    +  {\beta}\frac{\alpha(1 + \sqrt{\log(kn/\delta)})}{\sqrt{LM}}
  \right)
\]
provided that the right-hand side is smaller than $c_2$. {The multiplier $\beta$ is defined as in Theorem~\ref{thm:main}}.
\end{theorem}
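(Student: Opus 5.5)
The proof of Theorem~\ref{thm:cvg_to_lazy_limit} follows the scheme of the proof of Theorem~\ref{thm:cvg_to_limit}, adapted to the rescaled variables $\xi = \alpha(Z-Z_0)$ as in \cite[Theorem 2]{chizat2025hidden}. We first establish the analogues of Lemmas~\ref{lem:propagation_of_regularity} and~\ref{lem:propagation_of_subgaussianity} for the tangent dynamics~\eqref{eq:update_lazy_regime}. By induction on $k$, $\underline{h}_{k,i}$ and $\underline{b}_{k,i}$ are well-defined, bounded by some $R$ and $c$-Lipschitz in $s$. Moreover, $s\mapsto \xi_k(s)$ is Lipschitz (a.s.) and $\|\xi_k(s)\|_{vp}\le e^{ck\tau}(1+\|Z_0\|_{vp})$. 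The inductive step uses that $\tilde{\mathfrak u}(\cdot;\xi)$ is Lipschitz on balls with constant proportional to $1+B_\zeta$. It also uses Lemma~\ref{lem:subgaussian_and_bounded_independent} to handle the independent bounded mask $\zeta_k$; the bias term is controlled by the $\|\cdot\|_\infty\to\|\cdot\|_2$ Lipschitz property from Remark~\ref{rem:rewritten_update_map}.

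Next we rewrite the ResNet forward pass. Write $\hat Z^{j,\ell}_k = \hat Z^{j,\ell}_0 + \alpha^{-1}\hat\xi^{j,\ell}_k$ and Taylor-expand:
\[
\frac{\alpha}{LM}\sum_j \phi(\hat Z^{j,\ell}_k,h)\odot(1+\zeta^{\ell,j}) = \frac{\alpha}{LM}\sum_j \phi(\hat Z_0^{j,\ell},h)\odot(1+\zeta^{\ell,j}) + \frac{1}{LM}\sum_j \bigl(D_1\phi(\hat Z_0^{j,\ell},h)\hat\xi^{j,\ell}_k\bigr)\odot(1+\zeta^{\ell,j}) + O\Bigl(\frac{|\hat\xi|^2}{\alpha L}\Bigr).
\]
The second term is an inexact masked Euler--Monte-Carlo scheme for~\eqref{eq:forward_tangent_ode}, with $f(s,x,(z_0,\xi);\zeta)=(D_1\phi(z_0,x)\xi)\odot(1+\zeta)$. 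The Taylor remainder contributes $\varepsilon_1 = O(1/\alpha)$, using the Lipschitz $D\phi$ and the a priori bound on $\hat\xi$.

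The first term is where the scaling enters. By the centering assumption $\E_{\mu_0}[\phi(h,Z_0)]=0$ together with the centered mask, the summands are centered and independent of the current state, and they are amplified by $\alpha$. We treat them through the Monte-Carlo error in Lemma~\ref{lem:stoch_approx} rather than the Lipschitz part: the partial sums over $\ell$ are bounded by $c\,\beta\,\alpha(\sqrt D+\sqrt{\log(1/\delta)})/\sqrt{LM}$, with the same three cases for $\beta$ (Lévy--Ottaviani for independence across $\ell$, a union bound over $\ell$ otherwise).

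Note that this term is evaluated along the trajectory $\hat h$, not at a fixed $h$. We therefore need either a uniform-in-$h$ (chaining over the ball of radius $2R$ in $\R^D$) concentration bound, or a decomposition into the term at the limit trajectory $\underline h(s_\ell)$ plus a difference. The difference term is $\frac{\alpha}{LM}\sum_j[\phi(Z_0,\hat h)-\phi(Z_0,\underline h)]$, which is again centered given the past. It is Lipschitz in $h$ with random constant $\alpha\cdot$(empirical average of centered $D_2\phi$), which is itself small of order $\beta\alpha/\sqrt{LM}$ by concentration. This yields a Grönwall-compatible contribution.

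I expect this step — controlling the $\alpha$-amplified zero-mean initial-feature term along the random trajectory — to be the main obstacle. I would try the chaining/net argument over $\|h\|\le 2R$ in fixed dimension $D$ first, since it costs only constants depending on $D$. The backward pass is handled identically, with $D_{2,1}\phi$ playing the role of $D_1\phi$ and the hypothesis $\E[D_1\phi(h,Z_0)]=0$ (hence centered $D_2\phi$ in expectation up to this structure) providing the centering.

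Finally, we compare parameter updates. The ResNet update reads $\hat\xi^{j,\ell}_{k+1} = \hat\xi^{j,\ell}_k - \tau\E_\pi[\tilde{\mathfrak u}(\hat Z_k^{j,\ell},\hat h,\hat b;\zeta_k^{\ell,j})]$. Replacing $\hat Z_k$ by $\hat Z_0$ costs $O(|\hat\xi|/\alpha)$, and the remaining difference is controlled by the Lipschitz bound as in~\eqref{eq:proof_thm_basic_bound}. This gives
\[
\Delta^\xi_{k+1}\le \Delta^\xi_k + c\Bigl(\Delta^\xi_k+\Delta^{\underline h}_k+\Delta^{\underline b}_k+\frac1\alpha\Bigr).
\]
Combining this with the forward and backward estimates (where $\varepsilon_2=\Delta^\xi_k$), taking a union bound over $k'\le k$ and $i\le n$, and applying the discrete Grönwall lemma gives the claim. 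As before, a stopping index $\tilde k$ keeps $\hat h,\hat b,\hat\xi$ in balls of radius $2R$, and this is removed by requiring the bound to be below $c_2$.

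Theorem~\ref{thm:lazy_thm} then follows from Theorem~\ref{thm:cvg_to_lazy_limit} applied to both GD-Dropout and GD-RaM, via a triangle inequality on the intersection of the two events (with $\alpha\Delta^Z = \Delta^\xi$ since the initializations coincide).
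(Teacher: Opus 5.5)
Your proof has the same skeleton as the paper's. Both propagate regularity and tails for $\xi_k$, use a stopping index keeping $\hat h,\hat b$ in a ball of radius $2R$, isolate an $O(1/\alpha)$ linearization error, control an $\alpha$-amplified Monte-Carlo term by an $\varepsilon$-net bound uniform over $h\in B(0,2R)$, and finish with a union bound and Gr\"onwall. Where you differ is in how you organize the linearization.

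You Taylor-expand each ResNet increment samplewise around $\hat Z_0^{j,\ell}$. The amplified fluctuation is then $\frac{\alpha}{LM}\sum_j \phi(\hat Z_0^{j,\ell},h)\odot(1+\zeta_k^{\ell,j})$. The $O(1)$ part is an inexact masked Euler--Monte-Carlo scheme for the tangent ODE, handled by the proof of Lemma~\ref{lem:stoch_approx} with $\varepsilon_1=O(1/\alpha)$, $\varepsilon_2=\Delta^\xi_k$, and the amplified term as an extra additive error.

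The paper instead linearizes only in expectation (its $c/(\alpha L)$ term). It keeps the full $\phi(Z_0^{j,\ell+1}+\alpha^{-1}\xi_k^{j,\ell+1},\cdot)$ at the limit copies inside a single amplified Monte-Carlo term, then pays $c\Delta^\xi_k/L$ to swap $\xi_k^{j,\ell+1}$ for $\hat\xi_k^{j,\ell+1}$. This mirrors the vanilla argument with one Monte-Carlo term per layer.

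Your split buys something. The amplified term involves only the initialization and the current mask, so conditionally on $\zeta_k$ its summands are independent across all $(j,\ell)$ and centered by $\E[\phi(Z_0,h)]=0$, whatever the sharing pattern. Then $\beta$ is only needed on the $O(1)$-scale tangent term, and your route would give $(\alpha+\beta)/\sqrt{LM}$ rather than $\beta\alpha/\sqrt{LM}$. Keeping $\beta$ on the amplified term, as you do, is valid but loose. You also correctly keep the $O(1/\alpha)$ cost of replacing $\hat Z_k$ by $\hat Z_0$ in the $\Delta^\xi$ recursion, which the paper absorbs silently.

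Your fallback decomposition, however, does not work as stated. For $k\geq 1$, $\hat h_k^{\ell-1}$ depends on the layer-$\ell$ parameters and past masks through the backward passes used to update layers $<\ell$. So the difference term is not centered given the past. Without that, the per-layer random Lipschitz constants, of order $\alpha/(L\sqrt{M})$, add up to $\alpha/\sqrt{M}$ rather than $\beta\alpha/\sqrt{LM}$. The resulting Gr\"onwall factor $e^{c\alpha/\sqrt{M}}$ is not controlled by the hypotheses, e.g.\ for independent masks with $M$ fixed and $1\ll\alpha\ll\sqrt{L}$. So rely on the uniform-in-$h$ route, as the paper does.

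Be aware, though, that a bound on $\sup_{\|h\|\leq 2R}\big\|\sum_{\ell\leq\ell'}G_\ell(h)\big\|$ at a common $h$, with $G_\ell$ the centered layer-$\ell$ fluctuation, does not by itself control $\sum_{\ell\leq\ell'}G_\ell(\hat h_k^{\ell-1})$, whose evaluation points depend on $\ell$. The paper's Step~2 is no more explicit than your sketch on this point.
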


We follow the lines of the proof of~\cite[Theorem 2]{chizat2025hidden} very closely. Since the proof is very similar to that of Theorem~\ref{thm:cvg_to_limit}, we provide less details and focus on the differences.

The proof is very similar to that of Theorem~\ref{thm:cvg_to_limit}, with an additional error term coming from the linearization of $\phi$ in its first argument. Throughout the proof, $c$ denotes a positive real number that depends only on $B$, $k\tau$, $D$ and $B_\zeta$, and that may change from line to line.

\paragraph{Step 1: Set-up} The functions appearing in the Mean ODE of the limit model are now:
$f_k^h : (s,x,z{;\xi}) = (D_1\phi(z_0,x)z_1){\odot (1+\xi)}$ where $z = (z_0,z_1) \in \R^{2p}$ (for the forward pass); and for the backward pass: $f_k^b : (s,x,z{;\xi}) = D_{2,1}\phi(z_0,\underline{h}_{k,i}(s))^{*2}[z_1,x{\odot(1+\xi)}] \in \R^{D}$ where $z = (z_0,z_1) \in \R^{2p}$. 
Note again that ${f_k^h(\cdot; \vec{0})}$ and ${f_k^b(\cdot;\vec{0})}$ coincide with the functions from the limit model and those of the vanilla case.

Under the assumptions of Theorem~\ref{thm:cvg_to_lazy_limit}, one can prove using similar arguments as in Sections~\ref{app:properties_mean_ode} that $\underline{h}_{k,i}$ and $\underline{b}_{k,i}$ have the same regularity properties as $h_{k,i}$ and $b_{k,i}$ respectively. It suffices to augment Lemma~\ref{lem:propagation_of_regularity} with the property that $\sup_{s \in [0,1]} \|\xi_k(s)\|$ is bounded by a constant depending on $B$, {$B_\zeta$} and $k\tau$ only; and similarly Lemma~\ref{lem:propagation_of_subgaussianity} bounding the subgaussian norm of $Z_k(s) = Z_0 + \alpha^{-1}\xi_k(s)$ by $c{(1+\sigma_0)}$.

Consider, as in Appendix~\ref{app:proof_main_thm}, a (random) index $\tilde{k}$ such that the norms of $\underline{h}_{k',i}, \underline{b}_{k',i}, \hat{h}_{k',i}$ and $\hat{b}_{k',i}$ are controlled by $2R$ for all $k' \le \tilde{k} \vee k$ and $i \in [1:n]$. Recalling
\[
    \Delta_k^\xi := \max_{\substack{j \in [1:M] \\ \ell \in [1:L]}} \|\alpha(\hat{Z}_k^{j,\ell} - \hat{Z}_0^{j,\ell}) - \xi_k^{j,\ell}(s_{\ell-1})\|
\]
and from the expression of the updates, it holds (a factor $\alpha$ gets canceled out):
\begin{equation}\label{eq:update_bound_lazy}
    \Delta_{k+1}^\xi \le \Delta_k^\xi + c\tau(\Delta_k^{\underline{h}} + \Delta_k^{\underline{b}}) \max_{j,\ell}\|\vec{1}+\zeta^{j,\ell}\|_{{\infty}}.
\end{equation}

\paragraph{Step 2: Control on $\Delta_k^{\underline{h}}$.} We would like to apply Lemma~\ref{lem:stoch_approx}, but it is not possible since the regularity estimates of the map $\alpha\phi$ diverge. We perform, as in~\cite{chizat2025hidden} the following alternative error decomposition:
    \[
        \begin{aligned}
            \underline{h}_{k,i}(s_{\ell+1}) - \hat{h}_{k,i}^{\ell+1} &= \underline{h}_{k,i}(s_\ell) - \hat{h}_{k,i}^\ell \\
            &+ \int_{s_\ell}^{s_{\ell+1}} \mathbb{E}[D_1\phi(Z_0,\underline{h}_{k,i}(s))\xi_k(s)] \mathrm{d}s - \frac{1}{L} \mathbb{E}[D_1\phi(Z_0,\underline{h}_{k,i}(s_\ell))\xi_k(s_\ell)] \\
            &+ \frac{1}{L} \mathbb{E}[D_1\phi(Z_0, \underline{h}_{k,i}(s_\ell))\xi_k(s_\ell)] - \frac{1}{L} \mathbb{E}[D_1\phi(Z_0,\hat{h}_{k,i}^\ell)\xi_k(s_\ell)] \\
            &+ \frac{1}{L} \mathbb{E}[D_1\phi(Z_0,\hat{h}_{k,i}^\ell)\xi_k(s_\ell)] - \frac{\alpha}{L} \mathbb{E}[\phi(Z_0 + \alpha^{-1}\xi_k(s_\ell), \hat{h}_{k,i}^\ell)] \\
            &+ \frac{\alpha}{L} \mathbb{E}[\phi(Z_0 + \alpha^{-1}\xi_k(s_\ell), \hat{h}_{k,i}^\ell)] - \frac{\alpha}{ML} \sum_{j=1}^M \phi(Z_0^{j,\ell+1} + \alpha^{-1}\xi_k^{j,\ell+1}, \hat{h}_{k,i}^\ell){\odot}(\vec{1}+\zeta_k^{j,\ell}) \\
            &+ \frac{\alpha}{ML} \sum_{j=1}^M  \Big(\phi(Z_0^{j,\ell+1} + \alpha^{-1}\xi_k^{j,\ell+1}, \hat{h}_{k,i}^\ell) - \phi(Z_0^{j,\ell+1} + \alpha^{-1}\hat{\xi}_k^{j,\ell+1}, \hat{h}_{k,i}^\ell)\Big){\odot}(\vec{1}+\zeta_k^{j,\ell})
        \end{aligned}
    \]
    The terms from line to line correspond respectively to:
    \begin{itemize}
        \item the Euler discretization error bounded by $c/L^2$,
        \item a term bounded by $(c/L)\|\underline{h}_{k,i}(s_\ell) - h_{k,i}^\ell\|_2$,
        \item the linearization error bounded by $c/(\alpha L)$ (this is the ``new'' term as compared to Lemma~\ref{lem:stoch_approx}),
        \item the Monte-Carlo sampling error multiplied by $\alpha$. Unlike~\cite{chizat2025hidden}, we now have the mask $\zeta_k$.
        \item  term bounded by $c\Delta_k^\xi / L$ (using that $\|\vec{1}+\zeta_k^{j,\ell}\|_{\infty} \leq B_\zeta$ and absorbing this into $c$).
    \end{itemize}
     As is also the case in \emph{vanilla} networks (see~\cite{chizat2025hidden}), the Monte-Carlo term must be handled differently since $\hat{h}_{k,i}^\ell$ is not independent from $(Z_0^{j,\ell+1}, \xi_k^{j,\ell+1})$. To handle this, a \emph{uniform} concentration bound over $\hat{h}_k^\ell$ in the ball of radius $2R$ in $\R^D$ must be used.
     Indeed, with probability at least $1-\delta$,
     \begin{multline*}
         \sup_{h\in B(0, 2R)}\Big\|\sum_{\ell=1}^{\ell'}\frac{\alpha}{L} \mathbb{E}[\phi(Z_0 + \alpha^{-1}\xi_k(s_\ell), h)] - \frac{\alpha}{ML} \sum_{j=1}^M (\phi(Z_0^{j,\ell+1} + \alpha^{-1}\xi_k^{j,\ell+1}, h)){\odot}(\vec{1}+\zeta_k^{j,\ell})\Big\|_2\\
         \leq c\, \frac{\alpha{(1+\sigma_R)}(1 + \sqrt{\log(1/\delta)})}{\sqrt{ML}}
     \end{multline*}
     where we used the fact that $(Z_0^{j,\ell}, \xi^{j,\ell}, \zeta^{j,\ell})$ are identically distributed and that $\E[\diag(\phi(Z_0^{j,\ell+1} + \alpha^{-1}\xi_k^{j,\ell+1}, h))(\vec{1}+\zeta_k^{j,\ell})] = \mathbb{E}[\phi(Z_0 + \alpha^{-1}\xi_k(s_\ell), h)]$ by the independence of $\zeta$ from everything else ($\xi_k$ depends only on $\zeta$ from past iterations). The bound now depends on other terms involving $D$ and $R$ (coming from an $\varepsilon$-net argument to bound the probability of the $\sup_{h\in B(0,2R)}$) that are absorbed into $c$. {The case of shared masks is treated as in the proof of Lemma~\ref{lem:stoch_approx}.}

    Proceeding as in the proof of Lemma~\ref{lem:stoch_approx}, we get that with probability at least $1-\delta$,
    \[
        \Delta_k^{\underline{h}} \le c_{1,k} \left( \Delta_k^\xi + \frac{1}{\alpha} + \frac{1}{L} + {\beta}\frac{\alpha{(1+\sigma_0)}(1+\sqrt{\log(n/\delta)})}{\sqrt{LM}} \right).
    \]
    To bound $\Delta_k^{\underline{b}}$, one proceeds analogously, by performing a similar error decomposition as above, and following \emph{Step 3} from the proof of Section~\ref{app:proof_main_thm}. We obtain that with probability at least $1-\delta$,
    \[
        \Delta_k^{\underline{b}} \le c_{2,k} \left( \Delta_k^\xi + \Delta_k^{\underline{h}} + \frac{1}{\alpha} + \frac{1}{L} + {\beta}\frac{\alpha{(1+\sigma_0)}(1+\sqrt{\log(n/\delta)})}{\sqrt{LM}} \right).
    \]

    \paragraph{Conclusion.} As in Section~\ref{app:proof_main_thm}, we take a union bound over the at most $2k$ events where the previous bounds hold up to $\tilde{k} \vee (k-1)$, we plug these estimates into~\eqref{eq:update_bound_lazy} and conclude as in the \emph{Conclusion} of Section~\ref{app:proof_main_thm} (ensuring that $\tilde{k} \ge k$ by taking $c_2$ small enough).

    \paragraph{Proof of Theorem~\ref{thm:lazy_thm}}
    As in the proof of Theorem~\ref{thm:main}, the proof of Theorem~\ref{thm:lazy_thm} follows from Theorem~\ref{thm:cvg_to_lazy_limit} by a triangle inequality.

\section{Additional Numerical Experiments}\label{app:additional_numerics}
We consider the same MNIST setting described in Section~\ref{sec:numerics}.

In Figure~\ref{fig:params_mnist} we show certain projections of the parameters at the last iteration of training. In order to visualize regularity, only the last particle is tracked, it is initialized with the same value for all depths and a depth-shared mask. 
We can see that parameters from the Dropout/RaM dynamics from the independent, width-shared and depth-shared variants remain close, while the GD counterpart behaves distinctly.
\begin{figure}
  \centering
  \includegraphics[width=0.85\textwidth]{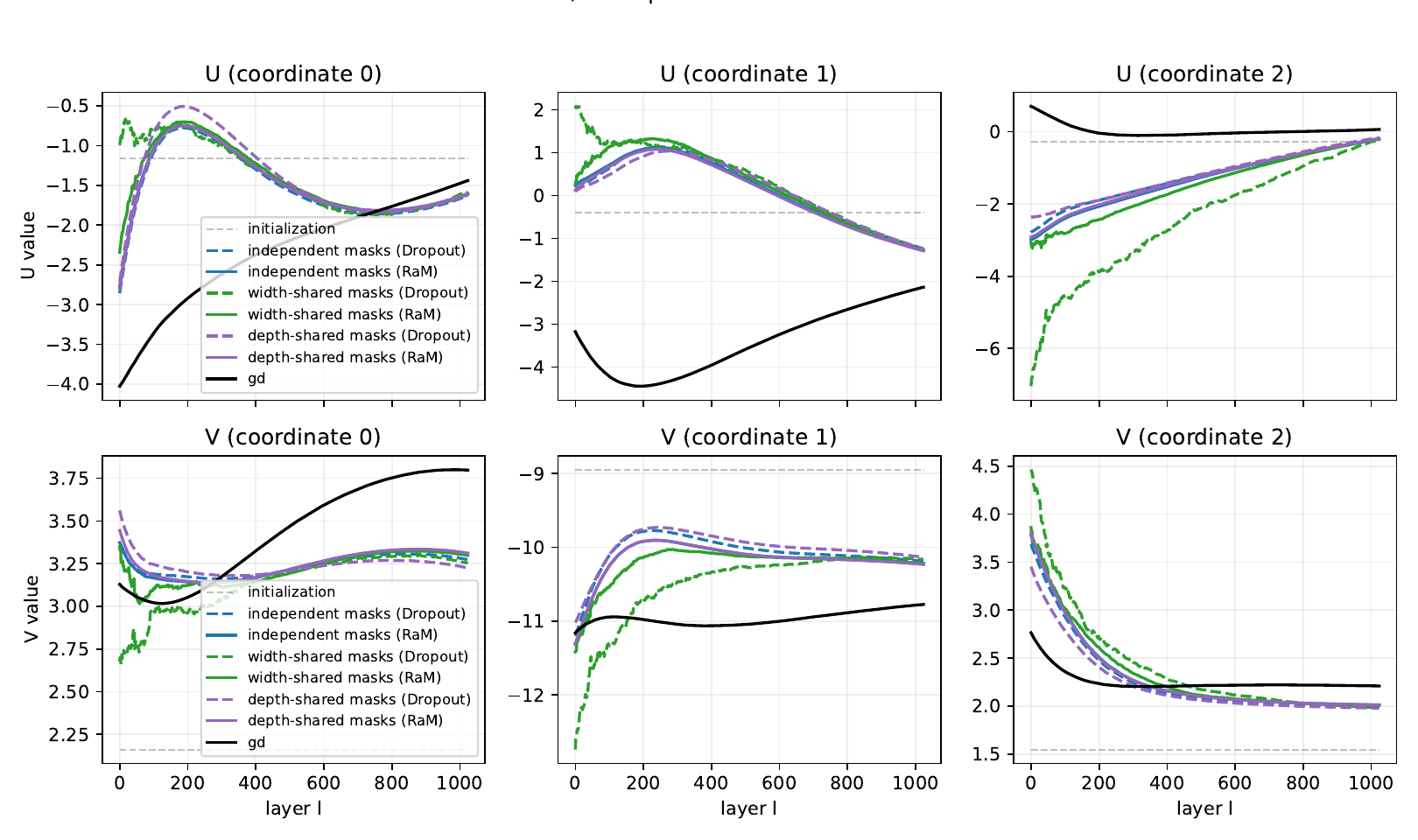}
  \caption{Comparison of certain projections of the parameters at the last training iteration ($k=200$), for SGD-Dropout, SGD-RaM and vanilla SGD, for different masking variants. The employed ResNet has $D=10$, $L=M=1024$. For all variants, parameters remain essentially close, and distinct from vanilla SGD.}
  \label{fig:params_mnist}
\end{figure}

 Figure~\ref{fig:param_error_mnist} shows the RMS distance in terms of $M$ and $L$, as well as the ``effective model width'' $ML$, for the network parameters (rather than the forward passes in Figure~\ref{fig:forward_error_mnist}). The observations are similar than the ones from Figure~\ref{fig:forward_error_mnist}: the $O(1/\sqrt{ML})$ rate is quite clear for the fully independent masks, while the mask sharing schemes converge at rates $O(1/\sqrt{L})$ for the \emph{width-shared} variant, and $O(1/\sqrt{M})$ for the \emph{depth-shared} one. Figure~\ref{fig:details_forward_error_mnist} provides a more detailed view of the results displayed in Figure~\ref{fig:forward_error_mnist}, further supporting our claims. Finally, Figure~\ref{fig:details_forward_error_mnist_k200} shows the same quantities but computed after a longer training horizon ($k=200$). The convergence bounds from Theorem~\ref{thm:main} degrade with $k$, so it is not a surprise that the slopes do not match as cleanly as with $k=50$. Yet, the qualitative behavior remain consistent with our analysis even at this larger training horizon.

\begin{figure}
        \centering
        \begin{subfigure}{0.635\linewidth}
        \centering
        \includegraphics[width=\linewidth]{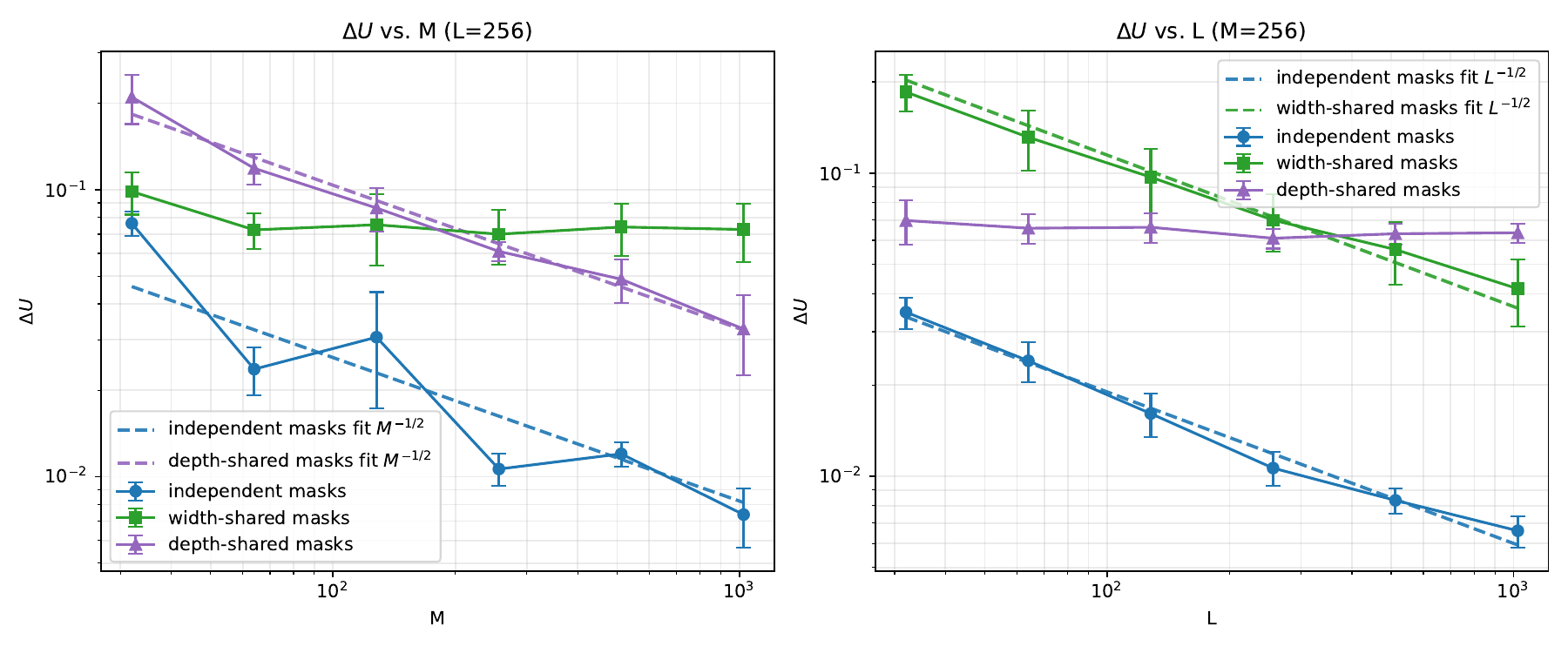}
        \caption{$\Delta^U_k$ against both $M$ and $L$ separately}
        \end{subfigure}%
        \begin{subfigure}{0.365\linewidth}
        \centering
        \includegraphics[width=\linewidth]{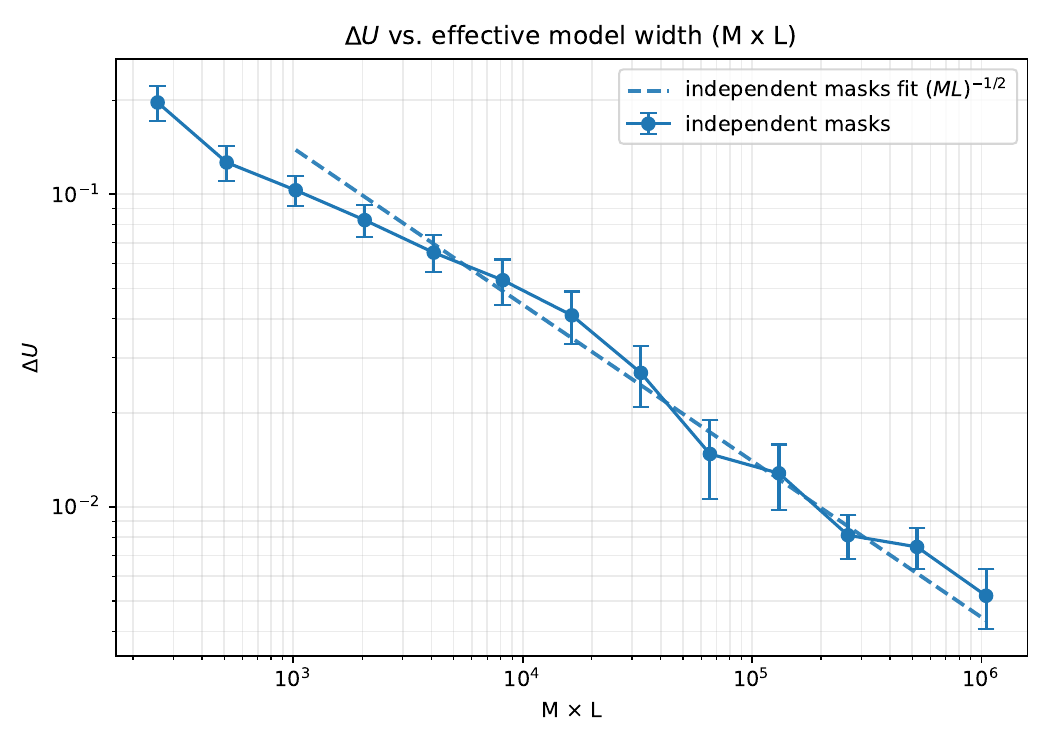}
        \caption{$\Delta^U_k$ vs $ML$}
        \end{subfigure}\\
        \begin{subfigure}{0.635\linewidth}
        \centering
        \includegraphics[width=\linewidth]{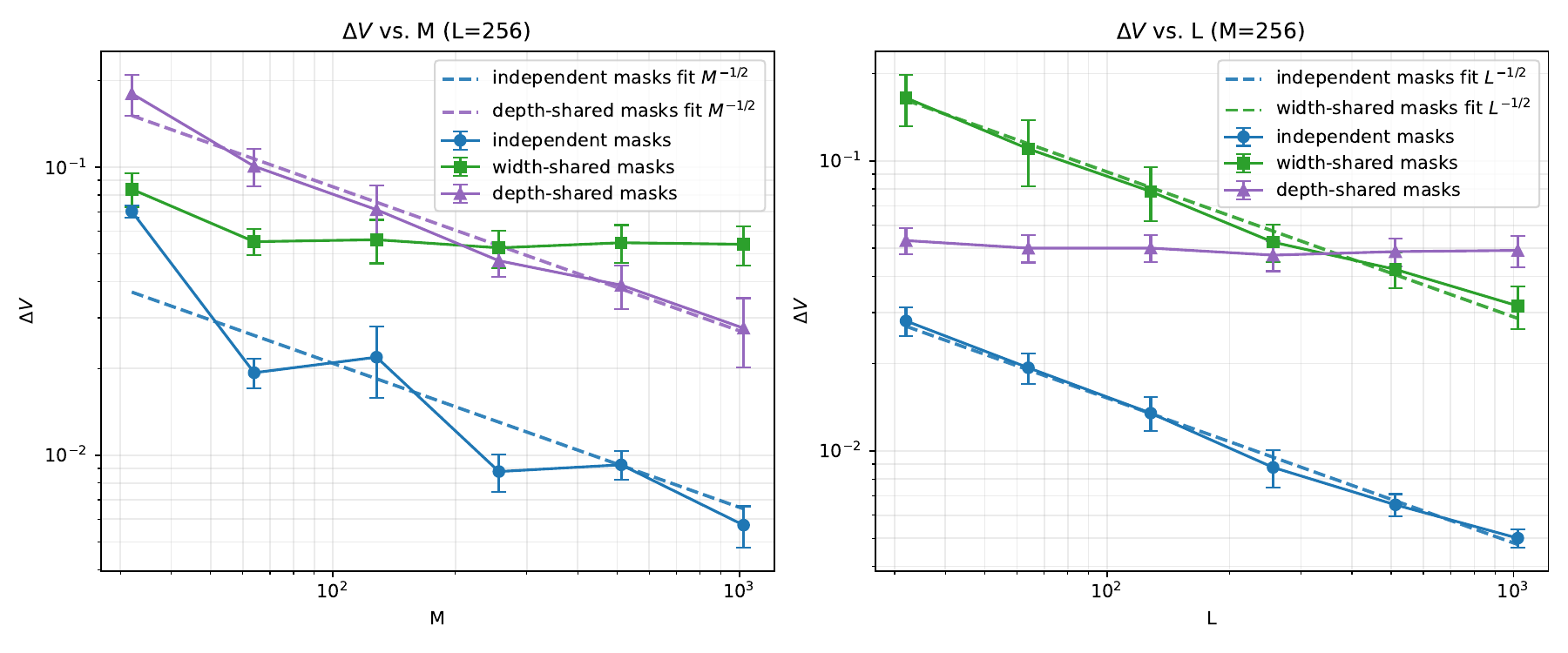}
        \caption{$\Delta^V_k$ against both $M$ and $L$ separately}
        \end{subfigure}%
        \begin{subfigure}{0.365\linewidth}
        \centering
        \includegraphics[width=\linewidth]{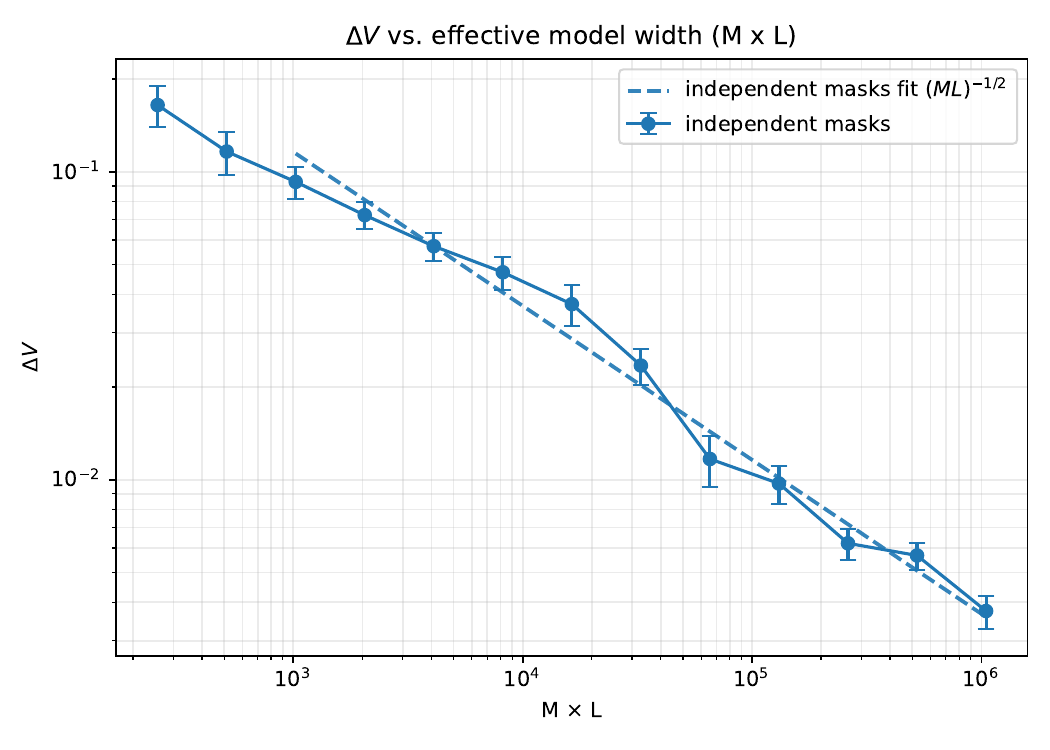}
        \caption{$\Delta^V_k$ vs $ML$}
        \end{subfigure}

        \caption{RMS distance between RaM and Dropout for the different masking strategies, as functions of $M$, $L$ and the ``effective model width'' $ML$. We consider the RMS error in parameter space $\Delta^U_k$, $\Delta^V_k$ at the training iteration $k=50$. }
  \label{fig:param_error_mnist}
\end{figure}

\begin{figure}
        \centering
        \begin{subfigure}{\linewidth}
        \centering
        \includegraphics[width=0.8\linewidth]{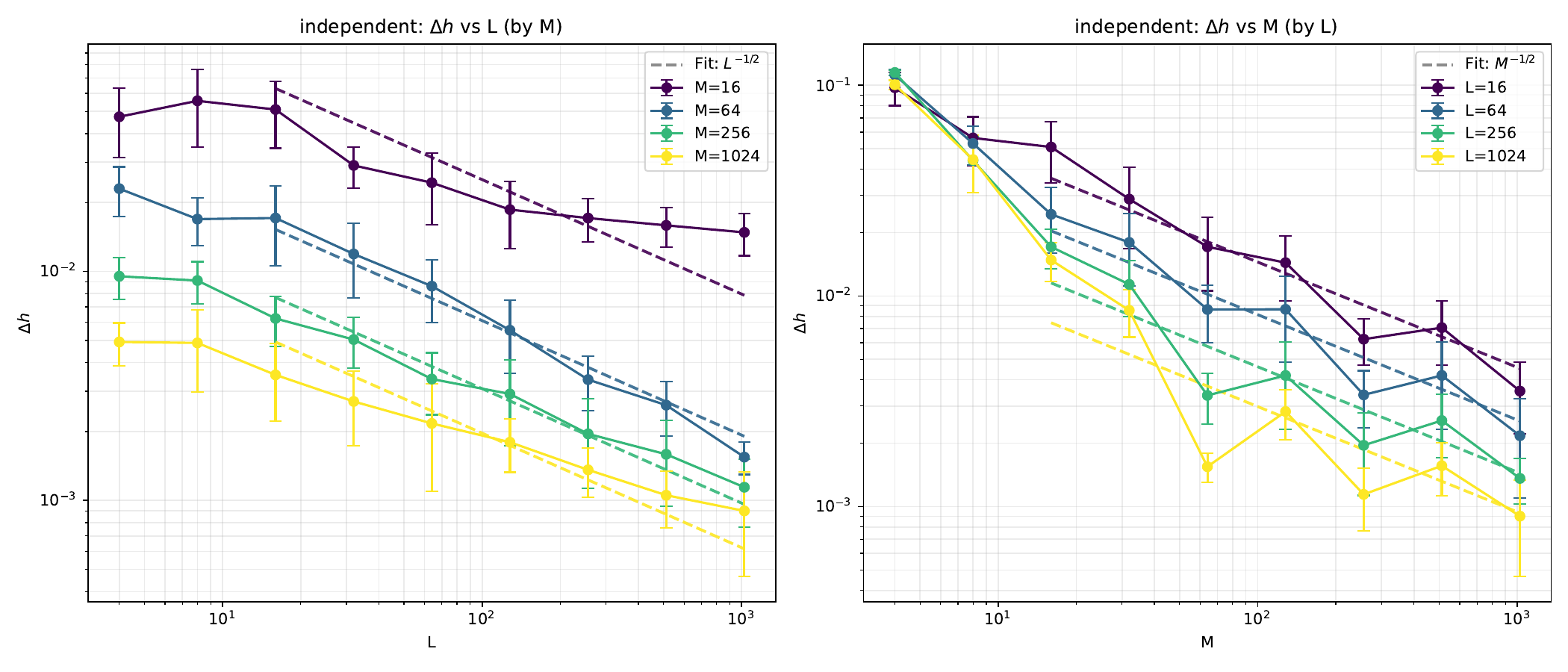}
        \caption{$\Delta^h_k$ for the \emph{independent} mask variant}
        \end{subfigure}\\
        \begin{subfigure}{\linewidth}
        \centering
        \includegraphics[width=0.8\linewidth]{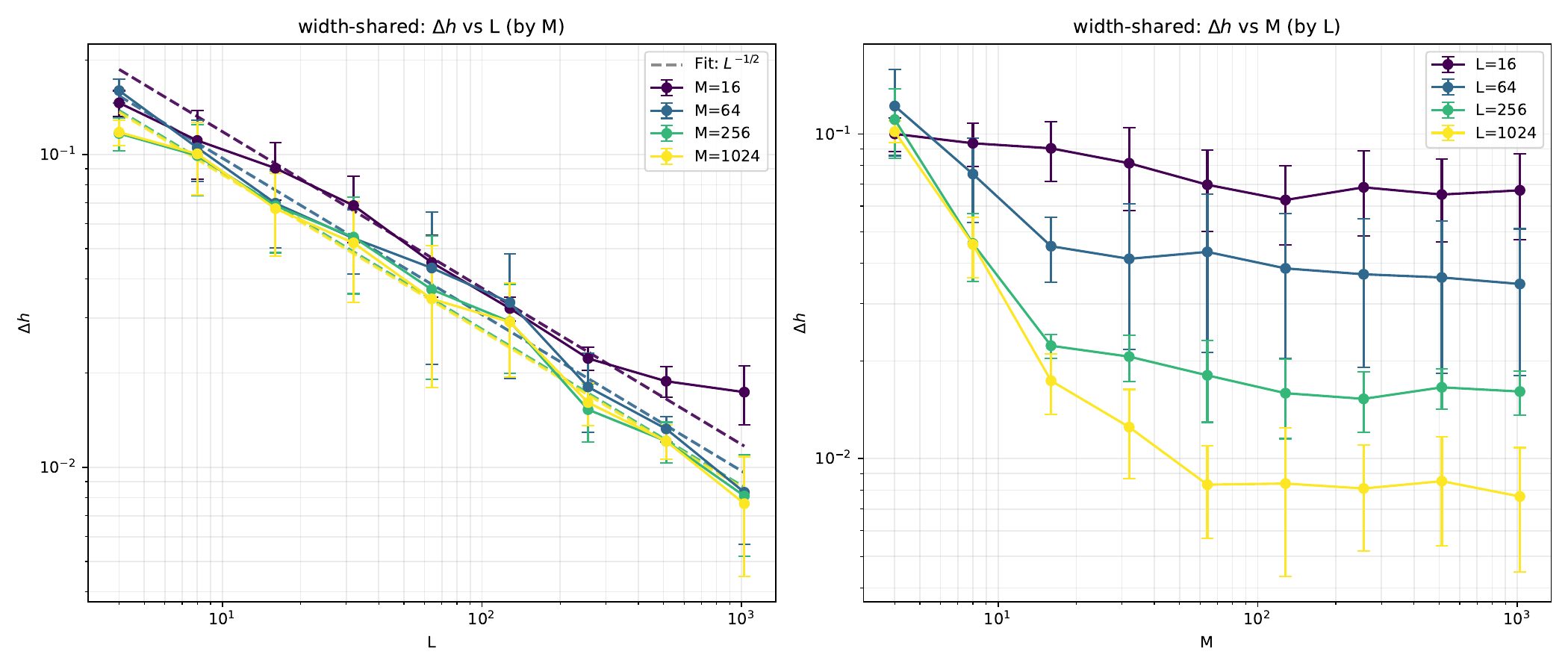}
        \caption{$\Delta^h_k$ for the \emph{width-shared} mask variant}
        \end{subfigure}\\
        \begin{subfigure}{\linewidth}
        \centering
        \includegraphics[width=0.8\linewidth]{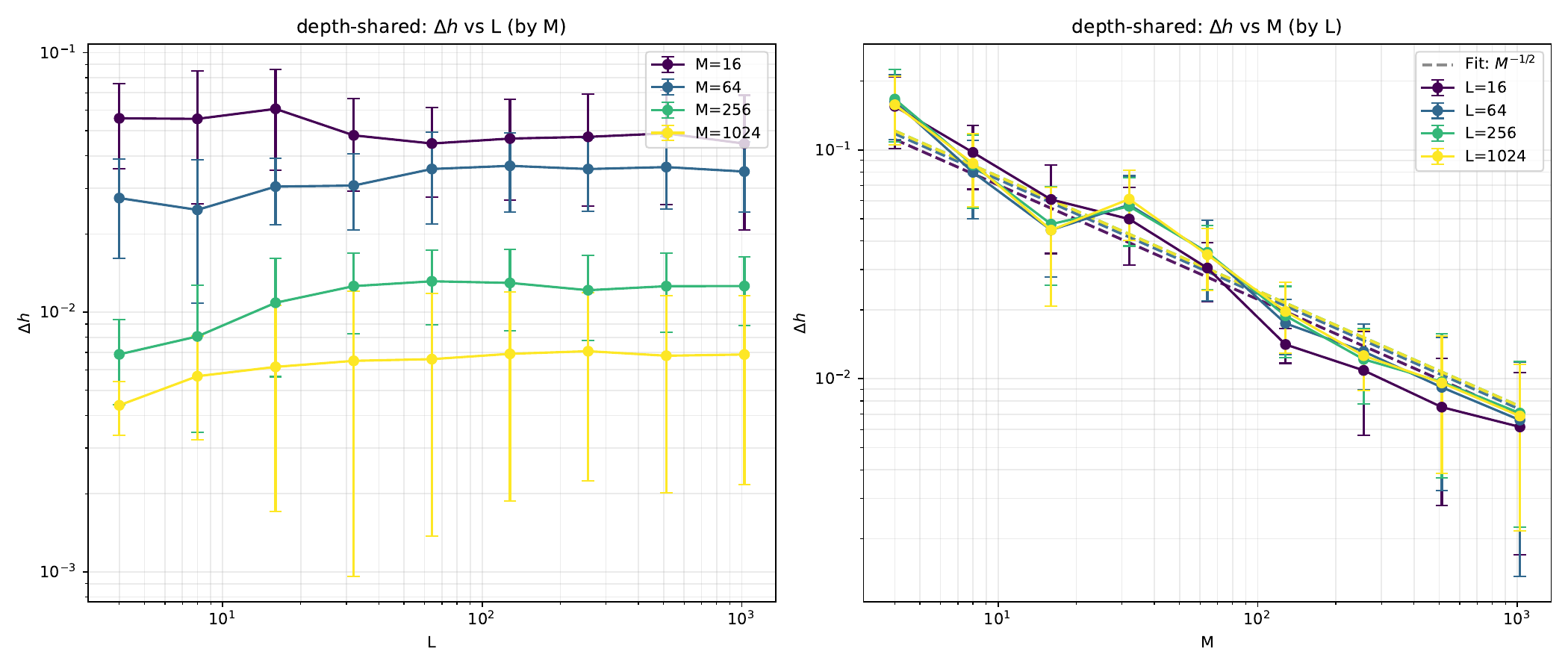}
        \caption{$\Delta^h_k$ for the \emph{depth-shared} mask variant}
        \end{subfigure}

        \caption{RMS error in the forward pass $\Delta^h_k$ between RaM and Dropout for the different masking strategies, in terms of $M$ (for different values of $L$) and $L$ (for different values of $M$), at training iteration $k=50$. We also fit the rates from Theorem~\ref{thm:main}.}
  \label{fig:details_forward_error_mnist}
\end{figure}

\begin{figure}
        \centering
        \begin{subfigure}{\linewidth}
        \centering
        \includegraphics[width=0.8\linewidth]{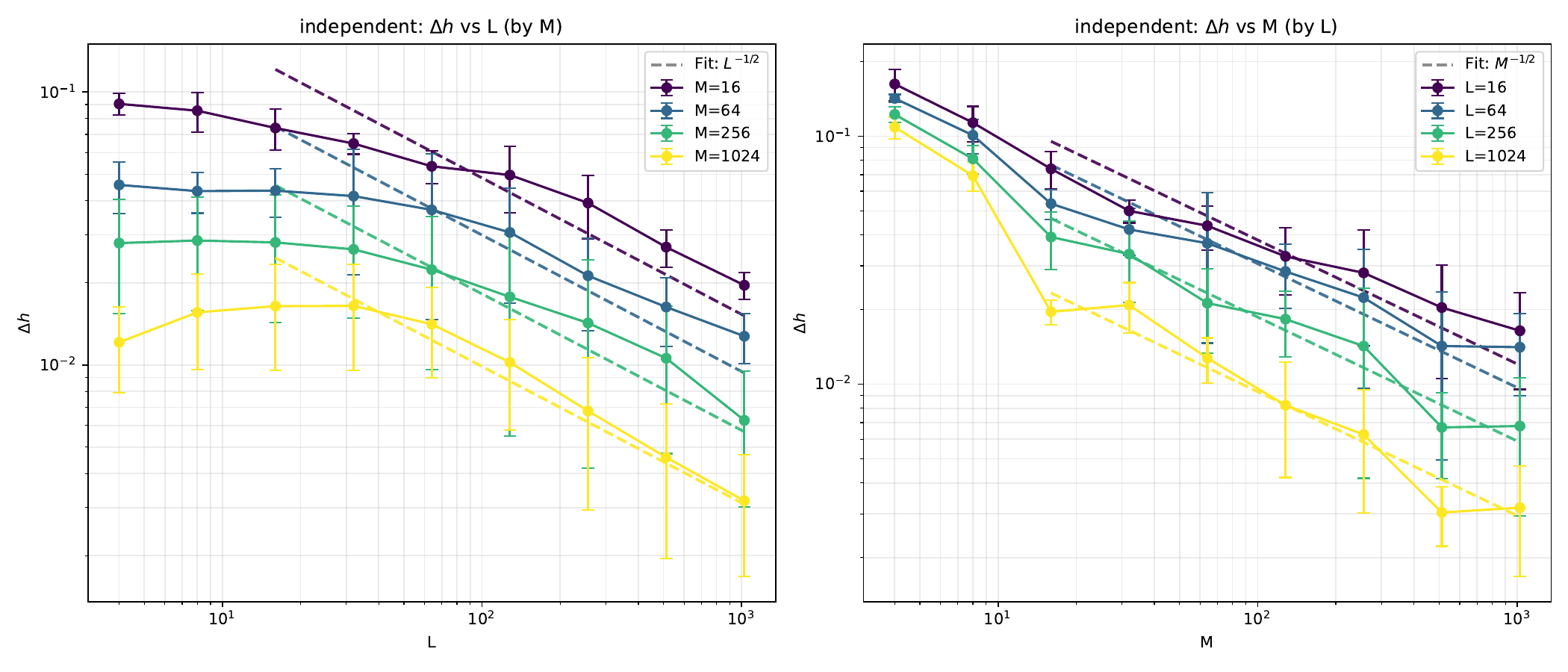}
        \caption{$\Delta^h_k$ for the \emph{independent} mask variant}
        \end{subfigure}\\
        \begin{subfigure}{\linewidth}
        \centering
        \includegraphics[width=0.8\linewidth]{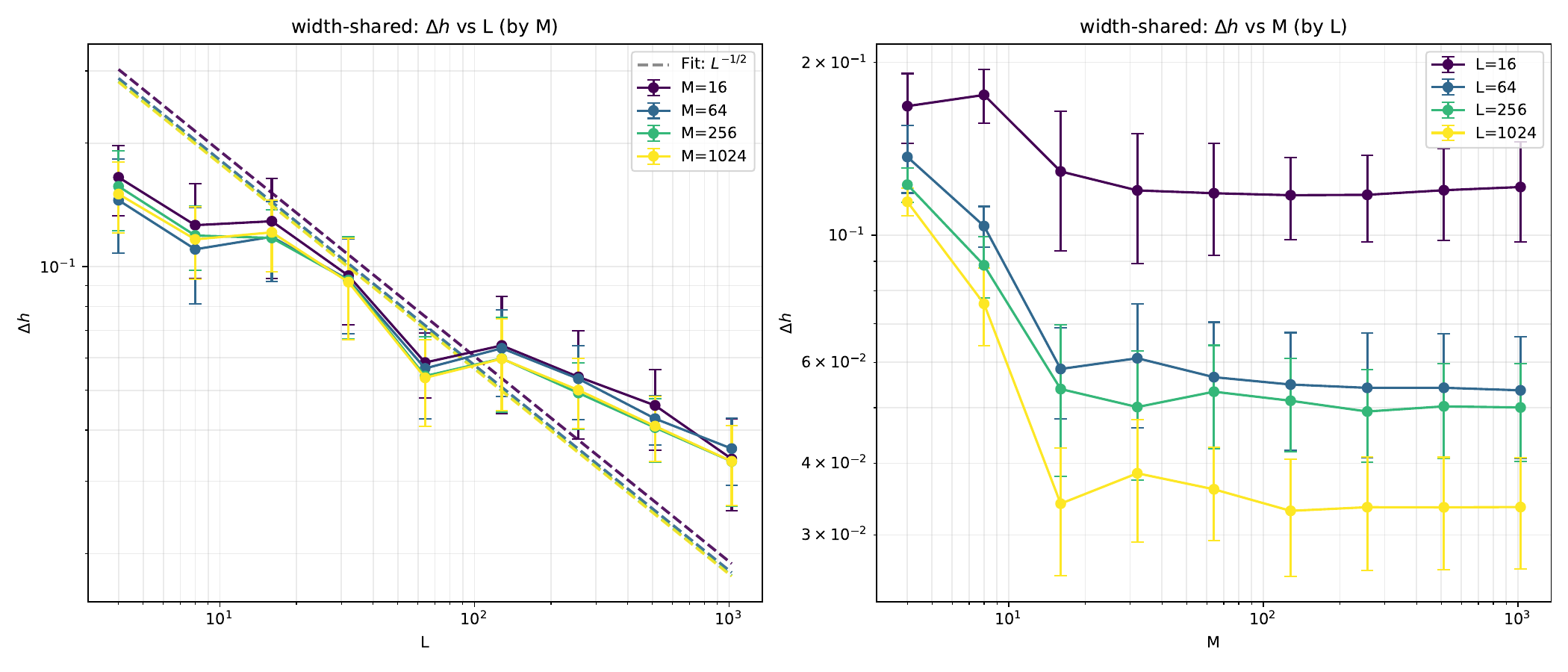}
        \caption{$\Delta^h_k$ for the \emph{width-shared} mask variant}
        \end{subfigure}\\
        \begin{subfigure}{\linewidth}
        \centering
        \includegraphics[width=0.8\linewidth]{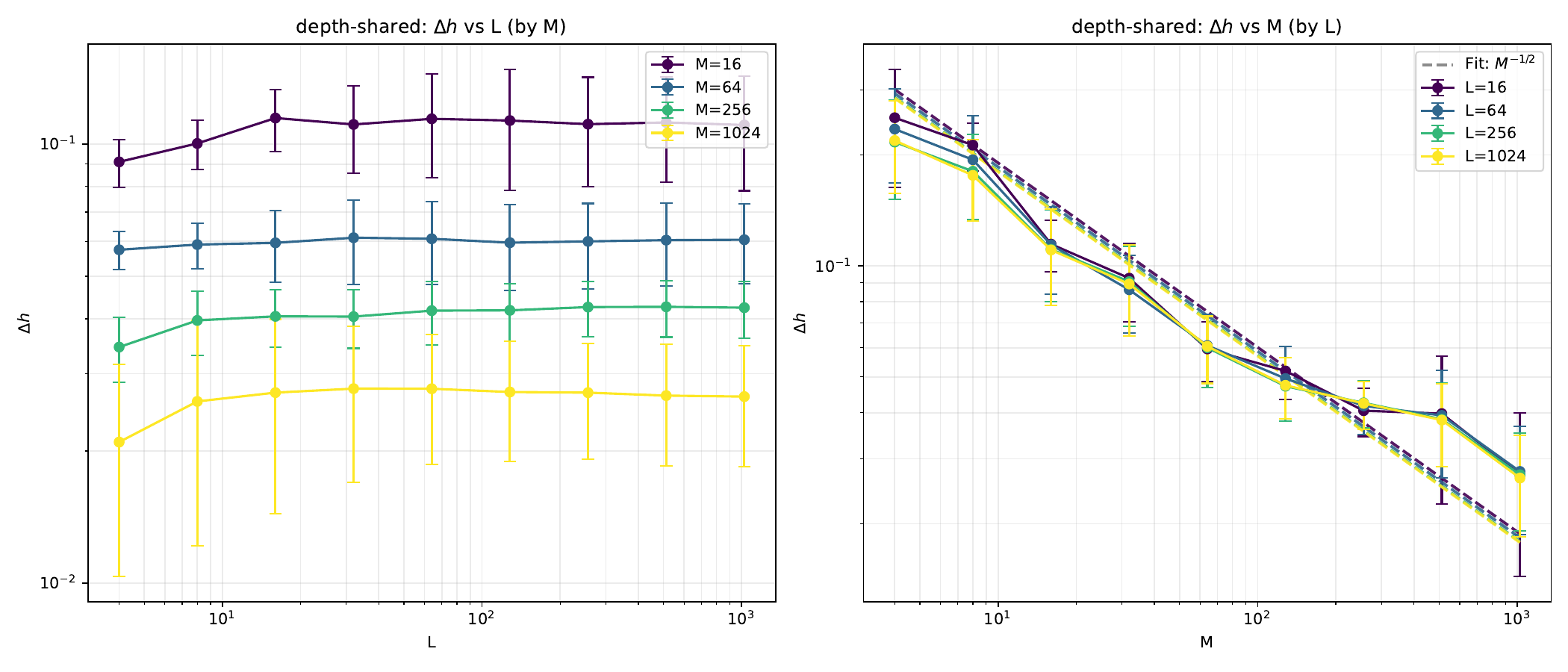}
        \caption{$\Delta^h_k$ for the \emph{depth-shared} mask variant}
        \end{subfigure}

        \caption{
        RMS forward-pass error \(\Delta_k^h\) between RaM and dropout for the
different masking strategies, plotted as a function of \(M\) for several
values of \(L\), and as a function of \(L\) for several values of \(M\),
at the final training iteration \(k=200\). The results show that our
analysis continues to capture the qualitative behavior after \(200\)
iterations, although the agreement with the rates predicted by
Theorem~\ref{thm:main} is less pronounced.}
  \label{fig:details_forward_error_mnist_k200}
\end{figure}

\end{document}